%% file: main.tex
\documentclass{article}
\usepackage{iclr2027_conference,times}
\usepackage{graphicx,booktabs,longtable}
\usepackage{amsmath,amssymb,amsthm,mathtools}
\usepackage[section]{placeins}
\usepackage{float}
\AddToHook{cmd/subsection/before}{\FloatBarrier}
\usepackage{flafter,xcolor,hyperref}
\hypersetup{pdftitle={Balancing Early Performance Sacrifices with Long-Term Gains: Scaling Learning-Rate Warmup Duration Across Training Horizons},pdfsubject={Balancing Early Performance Sacrifices with Long-Term Gains: Scaling Learning-Rate Warmup Duration Across Training Horizons}}
\usepackage[capitalize,noabbrev]{cleveref}
\usepackage{url}
\usepackage{etoc}
\graphicspath{{figures/}}
\newtheorem{theorem}{Theorem}
\newtheorem{proposition}{Proposition}

\newtheorem{assumption}{Assumption}

\usepackage[breakable]{tcolorbox}\definecolor{figbg}{HTML}{EDE6DC}\makeatletter\@for\env:=theorem,proposition,lemma,corollary,assumption,remark\do{\expandafter\tcolorboxenvironment\expandafter{\env}{breakable,colback=figbg,colframe=figbg,boxrule=0pt,arc=0pt,left=4pt,right=4pt,top=2pt,bottom=2pt}}\makeatother
\makeatletter
\newcommand{\equationlabelalias}[1]{\begingroup\protected@edef\@currentlabel{\p@equation\theequation}\edef\cref@currentlabel{[equation][\number\value{equation}][]\theequation}\label[equation]{#1}\endgroup}
\makeatother

\newcommand{\Wref}{W_{\mathrm{ref}}}

\usepackage{xcolor}
\definecolor{figcyan}{HTML}{17BECF}
\hypersetup{colorlinks=true,allcolors=figcyan}
\title{Balancing Early Performance Sacrifices with Long-Term Gains: Scaling Learning-Rate Warmup Duration Across Training Horizons
}
\usepackage{etoolbox}
\AtBeginEnvironment{table}{\setlength{\belowcaptionskip}{4pt}}
\author{Kristi Topollai \\
New York University\\
\texttt{kt2664@nyu.edu} \And
  Anna Choromanska \\
  New York University \\
  \texttt{ac5455@nyu.edu} \\
}

\iclrfinalcopy
\begin{document}
\maketitle
\lhead{}                            
\etocdepthtag{main}
\maketitle

\begin{abstract}
Learning-rate warmup is a standard technique in language-model training, yet its duration remains largely heuristic. Common approaches use either a fixed number of updates or a fixed fraction of the training horizon, two choices that imply very different scaling as training gets longer. When should warmup stay fixed, and when should it grow with the horizon? We address this question with a quadratic model whose modes respond differently to the peak learning rate. Warmup slows progress in directions that already contract well at the peak rate, but can remove persistent error in directions near the stability edge, with higher peak rates shifting the balance toward longer warmup durations. This yields a compact horizon scaling law that captures regimes ranging from essentially no warmup, through fixed-duration warmup, to durations that grow with the training horizon, and explains how the preferred regime changes with peak learning rate. Because the law captures the tradeoff between giving up early progress and improving the trajectory that follows, it can be fit using shorter runs and used to predict warmup at substantially longer horizons. Together, our results explain several familiar properties of warmup through a single tradeoff and suggest treating warmup duration as a horizon-dependent hyperparameter rather than a fixed training heuristic.
\end{abstract}

\section{Introduction}
\label{sec:introduction}

Learning-rate warmup is ubiquitous in language-model training, yet there is little agreement on how long it should last. Some training setups rely on a fixed number of warmup steps \citep{vaswani2017attention,groeneveld2024olmo,kaplan2020scaling}, while others use a fixed fraction of the training horizon \citep{black2022neox}. These choices suggest different views of warmup: a fixed duration treats it as an early stabilization phase, whereas a horizon-dependent duration suggests a longer-term tradeoff whose optimal balance changes with the training budget. Why should an intervention applied only at the start of training depend on when training stops?

\begin{figure}[!t]
\centering
\includegraphics[width=0.9\textwidth]{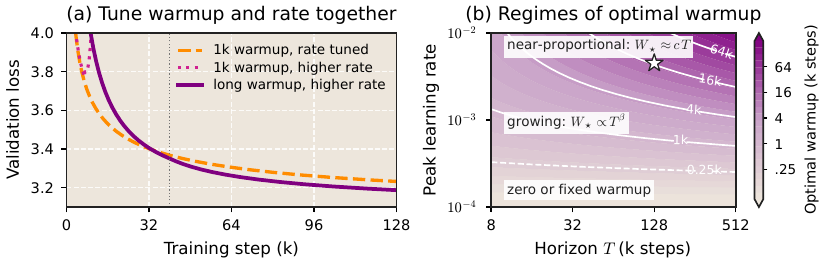}
\vspace{-1em}
\caption{\textbf{Overview.}
\textbf{(a)} Longer warmup can trade slower early progress for better performance at aggressive peak learning rates.
\textbf{(b)} Schematic of the loss-optimal warmup regimes, from little or bounded warmup to durations that grow with the training horizon, depending on peak learning rate. The star marks the approximate location of the best measured runs}
\vspace{-1.5em}
\label{fig:overview}
\end{figure}

Warmup is standard in large-batch and language-model training, including frontier-scale systems \citep{grattafiori2024llama3,deepseek2024v3,kimi2025k2}. Its duration must balance two competing effects: too little warmup can destabilize training at larger peak learning rates, while too much sacrifices early optimization progress. This tradeoff also couples warmup duration to the peak learning rate itself: a rate that appears suboptimal under short warmup can become competitive once longer warmup is allowed, so fixing warmup in advance can change which learning rate appears best.

Prior work explains warmup through adaptive-optimizer statistics and update magnitudes \citep{liu2020radam,ma2021adequacy}, loss curvature and access to larger subsequent learning rates \citep{gilmer2022loss,kalra2024warmup}, and early changes in parameters and representations \citep{gotmare2019closer,kosson2024warmup}. Recent theory also derives warmup-like schedules from suboptimality-dependent smoothness \citep{liu2025warmupconvergence,alimisis2025warmup,riabinin2026warmup}. However, explaining why warmup helps does not determine its optimal duration. That requires balancing the persistent benefit of a longer warmup against the progress lost to smaller initial steps.

We begin with a broad, systematic empirical study of warmup duration, varying peak learning rate and warmup across training horizons to understand how the loss-optimal duration behaves in practice. These experiments show that warmup can remain nearly fixed at lower peak rates, grow substantially with the training horizon at larger rates, and change which learning rates are competitive. To understand these patterns, we identify a simple quadratic model that reproduces them and explains why higher peak rates favor longer warmup. The resulting tradeoff between slower early progress and reduced persistent error motivates a compact loss law over warmup duration and training horizon, which we fit to short training trajectories and use to predict warmup choices at longer horizons, including from as few as three short runs.

Our results place warmup duration within the scaling-law view of training. Like other optimization hyperparameters \citep{steplaw2025,zhang2025critical,bergsma2025powerlines}, its preferred value changes systematically with the training budget rather than remaining fixed. Prior work has studied how loss \citep{kaplan2020scaling,hoffmann2022chinchilla} and learning-rate optima \citep{bjorck2025horizonlr,filatov2024time} vary with model size, compute, and training horizon. We show that warmup duration has its own horizon dependence, shaped by the peak learning rate. Modeling this dependence turns warmup from a startup heuristic into a hyperparameter with its own scaling law.

\section{Warmup regimes across learning rates and training horizons}
\label{sec:phenomenon}
\subsection{Experimental design}

Our starting point is an empirical study of warmup duration in Llama-style language models \citep{grattafiori2024llama3}. We vary peak learning rate and warmup duration under linear warmup followed by a constant rate, and evaluate validation loss across training horizons. Let $W$ denote warmup duration and $T$ the evaluation horizon; for fixed model size and peak rate, varying $W$ defines a \emph{family} of runs. The ranges and training protocol are given in \cref{app:observations}.

We use a warmup--stable schedule because one trajectory can be evaluated at many horizons, whereas schedules whose decay depends on the final horizon generally require separate runs. Warmup--stable is also the reusable prefix of warmup--stable--decay (WSD) \citep{hagele2024beyond,wen2025river}, a popular schedule in recent large-scale systems \citep{deepseek2024v3,kimi2025k2}. Existing WSD theoretical analyses focus mainly on the stable and decay phases \citep{wen2025river,schaipp2025surprising}. Schedule-aware scaling laws have primarily focused on the post-warmup decay phase rather than on how the initial warmup duration itself should scale with training horizon \citep{tissue2024annealing,luo2025mpl}. By holding the post-warmup learning rate constant, we isolate this question. In \Cref{app:decay-branching}, we show that appending decay to
warmup--stable trajectories largely preserves the ordering of warmup durations.

\subsection{From short warmup to horizon-dependent warmup}
At conservative peak rates (\cref{fig:story}a), little or no warmup can suffice and the preferred duration changes little with training horizon. At larger rates, the lowest-loss region shifts toward longer warmups as training continues, so the duration that is best early need not remain best later; depending on the family, this growth ranges from weak horizon dependence to nearly proportional.

\begin{figure}[!htbp]
\centering
\vspace{-0.5em}
\includegraphics[width=0.9\textwidth]{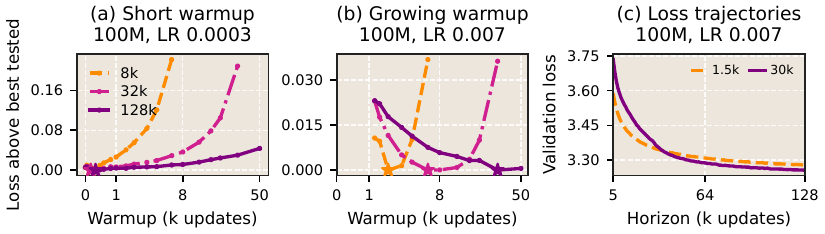}
\vspace{-1em}
\caption{\textbf{Peak learning rate changes how optimal warmup evolves with horizon.}
\textbf{(a,b)} Lower rates favor short warmup, while higher rates increasingly favor longer durations as training continues.
\textbf{(c)} At high peak rate, a longer warmup can start worse but become better later, so its benefit persists beyond the warmup duration.}
\label{fig:story}
\end{figure}

This horizon dependence is not solely due to avoiding unstable runs. In \cref{fig:story}c, two successful warmup schedules reverse their ordering after both have ended: the longer warmup is initially worse but becomes better later. Thus, even among stable runs, the loss-optimal warmup can depend on the evaluation horizon. Across the grid, we observe both nearly fixed and horizon-dependent warmup, with longer optimal warmups becoming more common at larger peak learning rates (\cref{fig:regimes}).

\begin{figure}[!htbp]
\centering
\includegraphics[width=\textwidth]{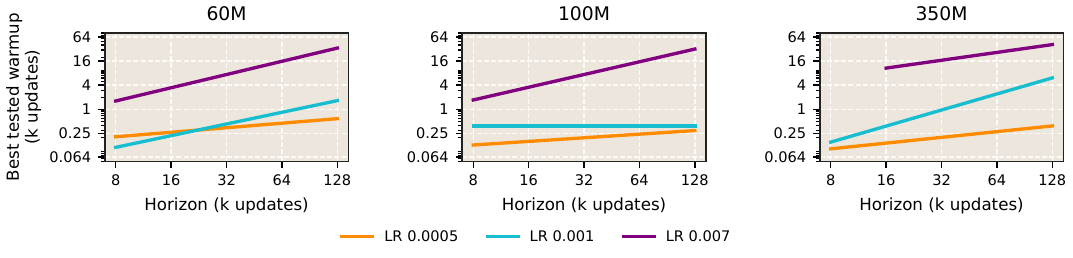}
\vspace{-2em}
\caption{\textbf{Fixed and horizon-dependent warmup regimes coexist.}
Lines summarize how the measured loss-optimal warmup changes with training horizon within each family.}
\vspace{-1em}
\label{fig:regimes}
\end{figure}
\paragraph{Warmup needed to avoid training failure.}
\label{sec:startup-stability}

Warmup can make otherwise failing configurations trainable. We measure the \emph{shortest successful warmup}: the smallest tested duration with at least one completed run that converges. This threshold increases with higher peak learning rates, smaller batches, and larger models (\cref{fig:stability-requirements}). The warmup required for a successful start therefore depends strongly on the training regime. However, avoiding failure is only one role of warmup. The shortest successful warmup is often far shorter than the loss-optimal one, so stability and final-loss optimization impose different requirements.

\begin{figure}[!htbp]
\centering
\includegraphics[width=\textwidth]{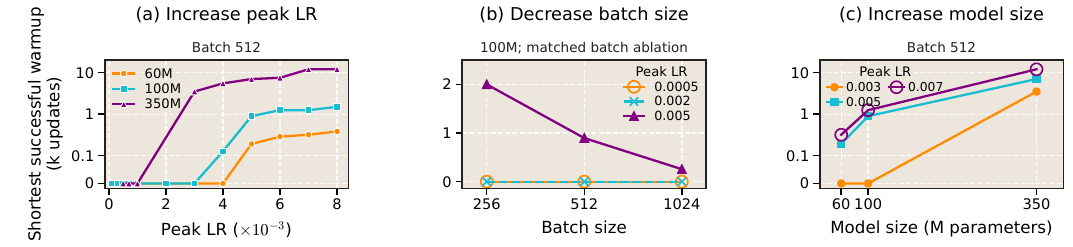}
\vspace{-2em}
\caption{\textbf{Shortest successful warmup durations.}
Shortest tested successful warmup versus \textbf{(a)} peak learning rate,
\textbf{(b)} batch size, and \textbf{(c)} model size.}
\label{fig:stability-requirements}
\end{figure}

\subsection{Long warmups unlock competitive high learning rates}

The strongest horizon dependence appears at larger peak learning rates, but these are not merely pathological settings. In the best-performing region, high peak rates are often paired with long warmups (\cref{tab:lr-winners}), so restricting warmup can make otherwise competitive rates appear suboptimal.

\begin{table}[!htbp]
\centering
\setlength{\tabcolsep}{15pt}
\caption{\textbf{The best tested runs use long warmups.}
Learning rate and warmup are jointly tuned.}
\label{tab:lr-winners}
\scriptsize
\begin{tabular}{lrrrr}
Model & Peak LR & Warmup (k) & Fraction of horizon (T=128k) & Loss\\\midrule
60M & 0.004 & 30 & 23.4\% & 3.404\\
100M & 0.005 & 50 & 39.1\% & 3.246\\
350M & 0.004 & 40 & 31.3\% & 2.983\\
\bottomrule
\end{tabular}
\vspace{-1em}
\end{table}

We also compare unrestricted tuning with a short-warmup constraint, $W\leq1$k. Allowing longer warmups improves the loss attainable after tuning warmup at each peak rate and can shift the preferred learning rate (\cref{fig:joint-tuning}).

\begin{figure}[!htbp]
\centering
\includegraphics[width=\textwidth]{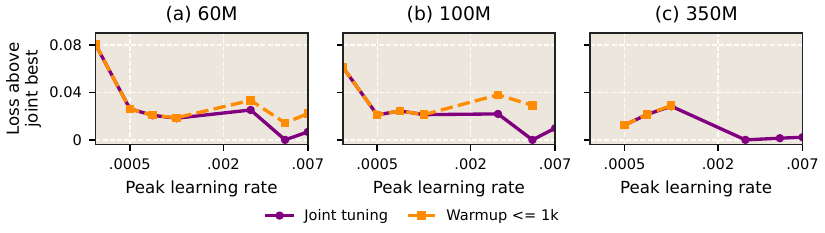}
\vspace{-2em}
\caption{\textbf{Long warmups improve the loss attainable at high peak rates.}
Validation loss at 128k with warmup tuned freely or capped at 1k. Loss is relative to the best joint configuration at each scale; capped curves end when no successful short-warmup run remains. Additional learning rates used for the stability comparison are omitted because they used a sparser warmup grid.}
\label{fig:joint-tuning}
\end{figure}

Prior work finds that the best peak learning rate often decreases as training gets longer, but those studies use different warmup rules \citep{bjorck2025horizonlr,filatov2024time}. In fact, changing warmup can itself change which learning rate works best \citep{filatov2024time}, consistent with \citet{porian2024discrepancies}, who identify warmup duration as one source of confounding in scaling results. Our results show the same effect: the warmup rule can change which learning rate looks optimal at a given training horizon.

\section{A Mechanism for Horizon-Dependent Warmup}
\label{sec:quadratic}
To understand the empirical patterns above and ultimately predict how the loss-optimal warmup should change with the training horizon, we seek a simple model that captures the key tradeoffs while remaining analytically tractable. We therefore turn to quadratic dynamics, a standard lens for studying learning-rate, batch-size, and stability effects in optimization
\citep{jain2018parallel,zhang2019nqm,bordelon2022sgd,meterez2026defense}. A quadratic is particularly useful here because different curvature directions can respond very differently to the same learning-rate schedule, allowing us to isolate how warmup can trade slower early progress for lower error later in training. Consider gradient descent on the quadratic
$$
f(\theta)-f_\star=\tfrac12(\theta-\theta_\star)^\top H(\theta-\theta_\star),
\qquad H\succeq0.
$$
In a direction of curvature $h>0$, one update with learning rate $\eta_t$ multiplies that direction's loss by $|1-\eta_t h|^2$. We call this multiplicative decrease \emph{contraction}: a smaller multiplier means that error in the direction is removed more quickly. Writing $\alpha=\eta h$, define the contraction rate at the peak learning rate and its average over a linear warmup as
\begin{equation}
\rho(\alpha)=-2\log|1-\alpha|,
\qquad
g(\alpha)=\int_0^1 \rho(\alpha u)\,\mathrm du.
\label{eq:det-phase-rates}
\end{equation}
After $n$ peak-rate updates, the loss in this direction is multiplied by $e^{-n\rho(\alpha)}$, while $g(\alpha)$ averages the contraction rates encountered as the learning rate increases from $0$ to $\eta$. Approximating the linear warmup as continuous, the loss remaining in this direction after warmup and the subsequent peak-rate phase is
\begin{equation}
M_{T,W}(\alpha)\approx
e^{-g(\alpha)W-\rho(\alpha)(T-W)}.
\label{eq:det-phase-envelope}
\end{equation}

\begin{figure}[!htbp]
\centering
\includegraphics[width=\textwidth]{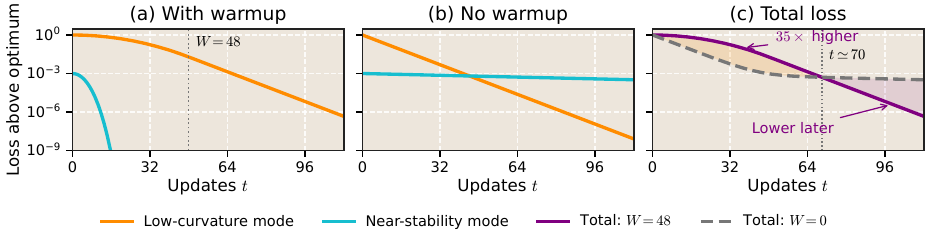}
\vspace{-2em}
\caption{\textbf{Warmup trades early progress for lower persistent error.}
\textbf{(a,b)} A longer duration slows progress in one direction but reduces error in another that improves only slowly at the peak learning rate.
\textbf{(c)} As a result, the longer warmup is worse early in training but better later, matching the ranking reversal observed empirically.}
\label{fig:det-competition}
\end{figure}

For small $\alpha$, $g(\alpha)\approx\alpha$ and
$\rho(\alpha)\approx2\alpha$: warmup slows progress. Near the upper stable
boundary, the comparison reverses. As $\alpha\uparrow2$,
$\rho(\alpha)\to0$ while $g(\alpha)\to2$. A longer warmup can remove error
that would persist at a stable peak rate.

\paragraph{Higher peak rates favor longer warmup.}
Consider two modes distinguished by how their contraction compares between warmup and the peak-rate phase. For a \emph{low-curvature mode}, $\rho_s>g_s$, so replacing peak-rate updates with warmup updates slows its decay; define $d_s:=\rho_s-g_s>0$. For a \emph{near-stability mode}, $g_e>\rho_e$, so warmup contracts it more strongly than the subsequent peak-rate phase; define $a_e:=g_e-\rho_e>0$.
Their loss is
\begin{equation}
L_{\mathrm{quad}}(W,T)
=A_s e^{-\rho_sT+d_sW}+A_e e^{-\rho_eT-a_eW},
\qquad A_s,A_e>0.
\label{eq:det-two-mode-loss}
\end{equation}
Balancing these terms gives
\begin{equation}
W_{\mathrm{quad}}^*(T,\eta)=\Pi_{[0,T]}
\left[\frac{(\rho_s-\rho_e)T+\log(A_ea_e/(A_sd_s))}{d_s+a_e}\right].
\label{eq:main-quadratic-optimum}
\end{equation}
If $0<h_s<h_e/2$ and $\eta h_e<2$, the long-horizon optimum changes at
$\eta_c=2/(h_s+h_e)$. It is zero for $\eta<\eta_c$, bounded at
$\eta=\eta_c$, and a fraction of $T$ for $\eta>\eta_c$, a fraction that
grows with $\eta$. This matches the empirical transition in
\Cref{fig:story,fig:regimes}. \Cref{app:det-quadratic} gives the proof.

\paragraph{Warmup above the stability limit.}
The same model also captures the stability trend seen empirically: as peak learning rate increases, completing training without divergence requires progressively longer warmup (\cref{fig:stability-requirements}). Once the near-stability mode crosses the stability boundary, $\alpha_e=\eta h_e>2$, the constant-rate phase amplifies its error. Writing $u_e=-\rho_e>0$, its contribution scales as
$A_e\exp\{u_eT-(g_e+u_e)W\}$. Preventing net growth requires
\begin{equation}
\frac WT\ge\kappa(\alpha_e):=
\frac{-\rho(\alpha_e)}{g(\alpha_e)-\rho(\alpha_e)}
=\frac{\alpha_e\log(\alpha_e-1)}
{\alpha_e+\log(\alpha_e-1)}.
\label{eq:main-rescue-fraction}
\end{equation}
In the continuous-warmup approximation, the minimum warmup fraction required to avoid net growth increases with the peak rate and eventually reaches one. The model therefore reproduces both empirical trends: higher peak rates favor longer loss-optimal warmup before instability, and require longer warmup to avoid net growth by a given horizon after crossing the stability edge.

\paragraph{What the quadratic isolates.}
The fixed-quadratic analysis deliberately separates the effect of warmup duration from the effect of schedule ordering. Because the update matrices commute, its horizon dependence arises purely from how different curvature directions respond to the rates encountered during warmup and at the peak rate and it does not require an ordering effect. In neural-network training, ordering can matter as well. \Cref{app:det-stochastic-complement} shows that finite-batch sampling alone makes the order of learning rates matter and
can produce the same early--late tradeoff, while evolving curvature provides another mechanism \citep{gilmer2022loss,kalra2024warmup,alimisis2025warmup}.

\section{A loss law for warmup duration}
\label{sec:model}\label{sec:surface}

Section~3 showed the basic tradeoff in two quadratic modes: longer warmup sacrifices early progress but can reduce error that persists at the peak learning rate. This simple picture is consistent with behavior observed in neural networks, where Hessians can contain isolated large-eigenvalue directions \citep{sagun2017empirical,ghorbani2019investigation} and training can operate near the gradient-descent stability boundary \citep{cohen2021eos,damian2023self}. We now use the same tradeoff to motivate a compact loss law over warmup duration and training horizon.

\paragraph{The cost of slower initial progress.}
With linear warmup of duration $W$, the accumulated learning rate by horizon $T$ is approximately
$$
\eta W/2+\eta(T-W)=\eta\tau,
\qquad
\tau:=T-\frac W2.
\label{eq:progress-clock}
$$
Thus $\tau$ measures peak-rate-equivalent updates. For small $\eta h$, modal losses decay approximately as $e^{-2\eta h\tau}$. We adopt the common spectral power-law assumption that initial loss below curvature $h$ scales as $h^p$ near zero ($p>0$) \citep{bordelon2024dynamical,velikanov2024tight}. Summing these responses yields $A_\eta\tau^{-p}$ (\cref{app:det-progress}). Longer warmup reduces $\tau$, so this term increases.

\paragraph{From persistent error to a horizon--warmup law.}
Near the stability edge, warmup can remove error that the peak-rate phase contracts only slowly. For a reference direction, $\rho_\eta=\rho(\eta h_e),\ 
g_\eta=g(\eta h_e),$ and its accumulated contraction is
\begin{equation}
C_\eta(W,T)
=\rho_\eta\tau+b_\eta W,
\qquad
b_\eta=g_\eta-\frac{\rho_\eta}{2}>0.
\label{eq:det-contraction-clock}
\end{equation}
We model the aggregate persistent error by an analogous mixture of slowly relaxing components governed by $C_\eta$, with a power-law density of effective decay rates near zero. This gives $E_\eta\propto C_\eta^{-\gamma}$ for large positive $C_\eta$. Expanding the logarithm of this power-law surrogate to first order in $\log\tau$ and $\log W$ around a representative training point yields the local product approximation
$$
E_\eta(W,T)\approx K_\eta\tau^{-q_\eta}W^{-s_\eta},
\qquad q_\eta+s_\eta=\gamma.
$$
In this local approximation, approaching the stability edge shifts more of the persistent term's dependence toward warmup; the derivation is in \cref{app:det-exponent}. Combining this term with the cost of delayed progress, and using $w_0>0$ to keep the predictor well-defined at zero warmup, gives
\begin{equation}
\boxed{
L(W,T)=L_\infty+A\tau^{-p}
+K\tau^{-q}(W+w_0)^{-s}
}
\label{eq:absolute-law}
\end{equation}
\equationlabelalias{eq:general-loss-law}
\vspace{-1em}
\subsection{Why optimal warmup depends on the horizon}
\label{sec:law}
With the coefficients fixed, longer training changes the balance between
warmup's initial cost and its lasting benefit. When $w_0\ll W_*\ll T$,
the cost of another warmup step scales as $T^{-(p+1)}$, while its benefit
scales as $T^{-q}W_*^{-(s+1)}$. Balancing these terms gives
\begin{equation}
 W_*\sim\left(\frac{2sK}{Ap}\right)^{1/(s+1)}T^\beta,
 \qquad \beta=\frac{p+1-q}{s+1}.
 \label{eq:product-regime-index}
\end{equation}
For $0<\beta<1$, the benefit fades more slowly than the cost at a fixed
warmup duration, so $W_*$ grows as a sublinear power of $T$. The full law also covers bounded or zero warmup ($\beta\le0$) and warmup proportional to $T$ ($\beta\ge1$); see \cref{app:det-law-proof} for the limits.

\subsection{Fitting the measured loss curves}
We fit $(L_\infty,A,K,p,q,s)$ separately for each family with a bounded multistart Huber fit similar to \citet{hoffmann2022chinchilla}. The same functional form captures short- and long-warmup families (\cref{fig:absolute-fits}), explains most loss variation, and selects low-regret warmups (\cref{tab:loss-metrics}). The fitted bounds keep $\beta>0$ for every family, so nearly fixed warmup at low peak rates appears through a near-zero warmup scale instead (\cref{fig:absolute-duration}b,c). All tables report RMSE and regret in $10^{-3}$ loss units, and $\pm$ is the sample standard deviation across horizons after any averaging over families.
\begin{table}[H]
\centering
\setlength{\tabcolsep}{15pt}
\caption{\textbf{Absolute-fit accuracy and selection quality.}
$R^2$ and RMSE are medians across families; regret is the mean loss above the best tested warmup. RMSE and regret are in $10^{-3}$ loss units.}
\label{tab:loss-metrics}
\scriptsize
\begin{tabular}{lrrr}
Evaluation & Median $R^2$ & RMSE & Mean regret\\\midrule
All horizons (descriptive) & 0.996 & 4.92 & $2.00 \pm 1.51$ \\
Alternating holdout & 0.996 & 4.98 & $2.05 \pm 1.61$ \\
\bottomrule\end{tabular}
\end{table}

\begin{figure}[!htbp]
\centering
\includegraphics[width=\textwidth]{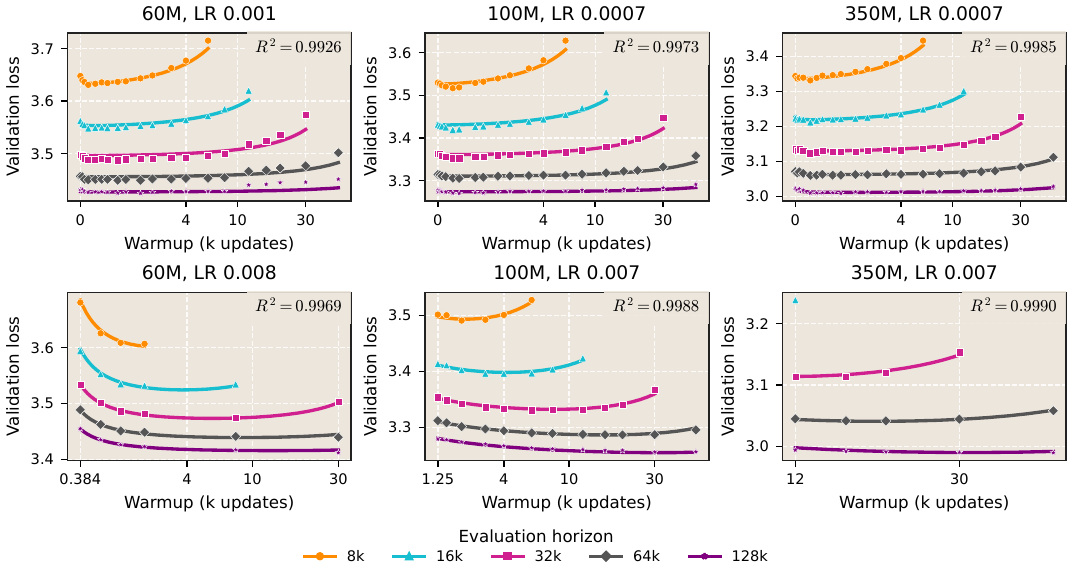}
\vspace{-2em}
\caption{\textbf{The loss law captures qualitatively different warmup regimes.}
Selected illustrative full-data fits span low and high peak learning rates across model scales.}
\label{fig:absolute-fits}
\end{figure}

\begin{figure}[!htbp]
\centering
\includegraphics[width=\textwidth]{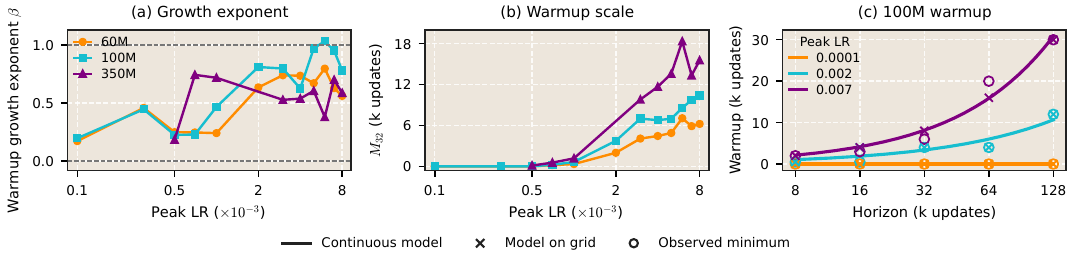}
\vspace{-2em}
 \caption{\textbf{Growth and scale summarize the fitted warmup regimes.}
\textbf{(a)} Warmup growth exponent $\beta=(p+1-q)/(s+1)$ from the fitted loss law.
\textbf{(b)} Corresponding warmup scale $M_{32}=(2sK/(Ap))^{1/(s+1)}32^\beta$ for the interior approximation $W_*\approx M_{32}(T/32)^\beta$.
\textbf{(c)} Representative 100M families comparing the continuous, model-selected and measured optimum.}

\label{fig:absolute-duration}
\end{figure}

\section{Predicting warmup at later horizons}
\label{sec:selection}\label{sec:method}

The horizon dependence in \cref{eq:absolute-law} suggests an extrapolation rule. We fit the loss law on shorter trajectories, freeze its coefficients, and use the predicted loss ordering to choose warmup at later horizons. With the full warmup grid, this tests the law. With only three short runs, it becomes a practical protocol.

\subsection{From the loss law to a selection rule}

Warmup selection depends only on differences between candidate losses. Subtracting a reference warmup therefore preserves the minimizing duration:
\begin{equation}
\Delta L(W,T;\Wref)=L(W,T)-L(\Wref,T).
\label{eq:reference-difference}
\end{equation}
With $\tau_W=T-W/2$ and $\tau_{\rm ref}=T-\Wref/2$, subtracting \cref{eq:absolute-law} gives
\begin{equation}
\boxed{
\Delta L(W,T;\Wref)=A[\tau_W^{-p}-\tau_{\rm ref}^{-p}]
+C\left[\tau_W^{-q}\left(\frac{W+w_0}{\Wref+w_0}\right)^{-s}
-\tau_{\rm ref}^{-q}\right].
}
\label{eq:decision-law}
\end{equation}
We fit $(A,C,p,q,s)$ to measured loss differences at the fit horizons and choose the available duration with the lowest predicted loss. The shortest successful warmup serves as the reference. This removes $L_\infty$ while preserving the competing effects of delayed progress and persistent error.

\subsection{Horizon extrapolation}
\label{sec:horizon-transfer}

We fit the absolute and difference forms independently using
checkpoints through 32k updates, freeze their parameters, and select warmup at
$\mathcal H=\{50\mathrm{k},75\mathrm{k},100\mathrm{k},128\mathrm{k}\}$.
We also evaluate interleaved held-out checkpoints within the fit window.
At every target horizon, all methods choose from the same retained successful
warmup candidates. For a selected duration $\widehat W_T$ and candidate set $\mathcal G_T$, we
measure
\begin{equation}
\mathcal R_T
=
L(\widehat W_T,T)
-
\min_{W\in\mathcal G_T}L(W,T)
\ge 0,
\label{eq:regret}
\end{equation}
and average regret equally across target horizons and then across families.

\begin{table}[!htbp]
\centering
\setlength{\tabcolsep}{15pt}
\caption{\textbf{Fits through 32k select low-regret warmups at later horizons.}
Mean regret in $10^{-3}$ loss units. Both fits use checkpoints through 32k
updates for the four-horizon and 128k evaluations; the alternating column
uses interleaved held-out checkpoints.}
\label{tab:policy}
\scriptsize
\begin{tabular}{lrrrr}
Selector & Parameters & Alternating & Four horizons & 128k\\\midrule
Absolute fit & 6 & $2.05 \pm 1.61$ & $2.26 \pm 0.66$ & $3.05$ \\
Difference fit & 5 & $\boldsymbol{1.45} \pm 2.22$ & $\boldsymbol{1.58} \pm 0.77$ & $\boldsymbol{2.61}$ \\
1k target & 0 & $15.47 \pm 10.80$ & $13.79 \pm 0.22$ & $13.56$ \\
10\% target & 0 & $13.39 \pm 4.10$ & $12.58 \pm 0.90$ & $11.72$ \\
\bottomrule\end{tabular}

\end{table}

\begin{figure}[!htbp]
\centering
\vspace{-1em}
\includegraphics[width=0.8\textwidth]{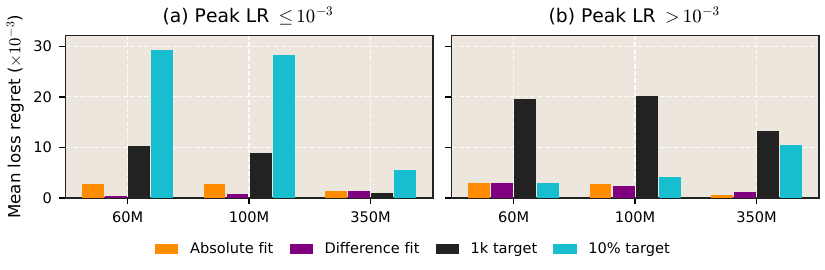}
\vspace{-1em}
\caption{\textbf{Warmup selection extrapolates beyond the fit window.}
Bars average regret over 50k, 75k, 100k, and 128k after fitting through
32k, then equally over all fitted peak learning rates within each scale
and indicated range.}
\label{fig:horizon-regret}
\end{figure}

The fixed 1k and 10\% targets give a scale for regret. On this scale both fits select low-regret warmups within the fit window and at later horizons, with the difference fit best in every column of \cref{tab:policy}. The full grid thus measures how faithfully the fitted law carries the horizon dependence of the loss forward, and \cref{sec:practical-extrapolation} turns this into a practical protocol.

\subsection{Practical extrapolation from three short runs}
\label{sec:practical-extrapolation}

We next replace the full warmup grid with only three separated, stable warmup trajectories per family. Using their shared post-warmup checkpoints through 20k or 32k, we fit the loss law, freeze its coefficients, and select
warmup at the same four target horizons. We retain both fitting methods and use the absolute fit as the primary practical predictor. Both loss-law fits outperform the baselines common in practice and naive extrapolation of the best fitting run (\Cref{tab:practical-extrapolation}), so horizon extrapolation does not need a dense grid once a few stable fitting runs cover the relevant warmup range. \Cref{fig:three-pilot-detail} breaks down the absolute-fit and baseline results at 32k by scale and peak rate. The absolute fit has lower mean regret at both fit horizons. With three fitting runs, the difference fit uses one as a reference, leaving only two curves of loss differences to constrain its five parameters. The absolute fit instead retains all three loss curves, and its stronger performance in the three-run setting motivates using this method for practical and long-horizon prediction.

\begin{table}[!htbp]
\centering
\caption{\textbf{Three short runs support longer-horizon warmup prediction.} Mean regret over 50k, 75k, 100k, and 128k, in $10^{-3}$ loss units. Fixed dur.\ and fixed frac.\ keep the best fitting-run duration or its fraction of the fit horizon.}
\label{tab:practical-extrapolation}
\scriptsize
\begin{tabular}{@{}lcccccc@{}}
Fit horizon & Absolute & Difference & Fixed dur. & Fixed frac. & 1k & 10\%\\
\midrule
20k (29 families) & $\mathbf{2.83}\pm0.16$ & $4.22\pm0.29$ & $9.31\pm1.21$ & $11.72\pm0.82$ & $14.28\pm0.30$ & $13.42\pm0.96$\\
32k (33 families) & $\mathbf{2.51}\pm0.27$ & $2.62\pm0.36$ & $7.61\pm1.12$ & $8.47\pm0.62$ & $13.79\pm0.22$ & $12.58\pm0.90$\\
\bottomrule
\end{tabular}

\end{table}

\begin{figure}[!htbp]
\centering
\vspace{-1em}
\hspace*{-1.3cm}
\includegraphics[width=\textwidth]{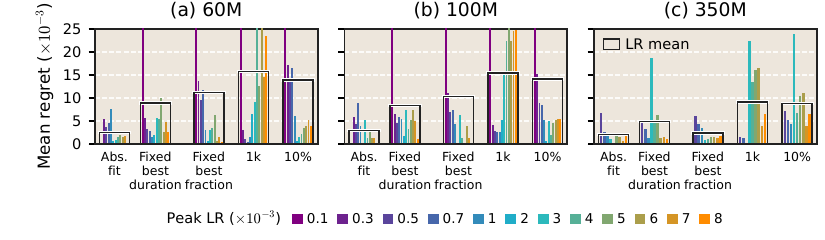}
\vspace{-1em}
\caption{\textbf{Three-run selection by scale and peak learning rate.}
Absolute-fit selection and the two baselines after fitting through 32k; regret averaged over 50k, 75k, 100k, and 128k.}
\label{fig:three-pilot-detail}
\end{figure}

\subsection{Long-horizon extrapolation}
\label{sec:long-horizon-extrapolation}

We extend the three-run procedure to horizons of up to one million updates using the 100M model at a smaller batch size. Following the three-run extrapolation, we use the absolute fit. For each peak learning rate, we fit the loss law to all three fitting trajectories using prefixes through 50k or 75k updates, freeze the fit, and predict warmup separately at 250k, 500k, and one million updates. We compare the forecasts with a fixed 1k-update warmup, which trains without divergence at every peak rate tested, and a 10\% warmup ratio. At the two higher peak rates the forecasts give the lowest loss at every horizon, and at the lowest rate they match the 1k baseline (\cref{tab:long-horizon-extrapolation}).

\begin{table}[!htbp]
\centering
\setlength{\tabcolsep}{10pt}
\caption{\textbf{Long-horizon validation-loss comparison.}
Three-run absolute-fit forecasts from 50k or 75k prefixes, compared with 1k-warmup and 10\% baselines. Horizons and predicted warmups are in thousands of updates.}
\label{tab:long-horizon-extrapolation}

\scriptsize
\begin{tabular}{lrrrrrrrrr}
                  & \multicolumn{3}{c}{LR $=0.5\times10^{-3}$}       & \multicolumn{3}{c}{LR $=2\times10^{-3}$}         & \multicolumn{3}{c}{LR $=5\times10^{-3}$}         \\ \cline{2-10} 
$T$ (k)           & 250            & 500            & 1000           & 250            & 500            & 1000           & 250            & 500            & 1000           \\ \hline
1k-warmup loss    & \textbf{3.370} & 3.337          & \textbf{3.311} & 3.433          & 3.421          & 3.402          & 3.371          & 3.347          & 3.322          \\
10\% loss         & 3.375          & 3.347          & 3.339          & 3.418          & 3.367          & 3.340          & 3.363          & 3.335          & 3.314          \\ \hline
$\widehat W_{50}$ & 0.6            & 0.7            & 0.8            & 47.7           & 77.5           & 125.9          & 129.5          & 241.7          & 451.1          \\
Loss              & 3.372          & \textbf{3.336} & \textbf{3.311} & 3.412          & 3.362          & 3.338          & 3.359          & 3.325          & \textbf{3.303} \\
$\widehat W_{75}$ & 0.8            & 1.1            & 1.2            & 56.6           & 95.2           & 160.0          & 116.0          & 213.5          & 392.9          \\
Loss              & 3.371          & 3.336          & 3.312          & \textbf{3.411} & \textbf{3.359} & \textbf{3.336} & \textbf{3.358} & \textbf{3.324} & \textbf{3.303} \\ \hline
\end{tabular}
\end{table}

\section{Conclusion}
Warmup duration is neither fixed across training budgets nor tied to a fixed fraction of the run. In our experiments, lower peak learning rates often favor short or bounded warmup, while higher rates favor durations that grow with the training horizon. A simple quadratic model explains this shift as a balance between giving up early progress and reducing error that would otherwise persist later in training. This motivates a compact loss law that captures the observed warmup regimes and can be fit using only three short trajectories to guide warmup choices at longer horizons. Our results therefore suggest treating warmup duration as a horizon-dependent hyperparameter, tuned jointly with peak learning rate and training budget.

\bibliography{references}
\bibliographystyle{iclr2027_conference}

\clearpage
\phantomsection\label{paper:appendix-start}
\input{appendix}

\end{document}

%% file: appendix.tex
\appendix
\etocdepthtag{appendix}
\begingroup
\hypersetup{linktoc=page}
\etocsettagdepth{main}{none}
\etocsettagdepth{appendix}{subsection}
\etocsetnexttocdepth{subsection}
\etocsettocstyle{\section*{Appendix contents}}{}
\tableofcontents
\endgroup

\section{Experimental and fitting protocols}
\label{app:protocol}
\subsection{Training setup, observations, and units}
\label{app:observations}
We analyze Llama-style language models trained with linear warmup followed by
a constant peak learning rate. Validation checkpoints after warmup are used
for fitting; repeated observations at the same model-size, peak-rate, warmup,
and horizon coordinate are averaged. The analysis contains the 33 model-size and peak-rate families listed in
\Cref{tab:corpus}. Time in the fitted laws is
measured in thousands of optimizer updates, with $w_0=0.032$. A run is \emph{eligible} at horizon $T$ if it completed without diverging
and $W<T$; fitting, warmup selection, and all figures use only eligible
runs.

\paragraph{Data and training.}
We follow the standard Llama-on-C4 pretraining setup used in GaLore
\citep{zhao2024galore}.\footnote{Reference implementation: \url{https://github.com/jiaweizzhao/GaLore}.}
Models are pretrained on C4 English with the \texttt{t5-base} SentencePiece
tokenizer \citep{raffel2020t5}. Training examples are randomly sampled packed
sequences, and validation is next-token cross-entropy on a fixed held-out
set. The base recipe and model configurations are summarized below; additional
experiments inherit these settings unless stated otherwise.

\begin{table}[!htbp]
\centering
\caption{\textbf{Base training configuration.}}
\label{tab:training-recipe}
\small
\begin{tabular}{lp{0.58\textwidth}}
Setting & Value\\\midrule
Training corpus & C4 English; 1,024 shards; 184,565,529,484 tokens\\
Tokenizer / storage & \texttt{t5-base} SentencePiece / \texttt{uint16}\\
Sequence length & 256\\
Batch size & 512, 131,072 tokens per update\\
Training budget & 128,000 updates; 16,777,216,000 tokens\\
AdamW $(\beta_1,\beta_2)$ / $\epsilon$ & $(0.9,0.99)$ / $10^{-6}$\\
Weight decay / global gradient clip & 0 / 1.0\\
Learning-rate schedule & Linear warmup, then constant peak rate\\
Precision  & bfloat16 \\
\bottomrule
\end{tabular}
\end{table}

\begin{table}[!htbp]
\centering
\setlength{\tabcolsep}{15pt}
\caption{\textbf{Model and execution settings.} All models use Llama decoder blocks with SwiGLU, rotary position embeddings, RMSNorm with $\epsilon=10^{-6}$, initialization standard deviation 0.02, and untied input embeddings and output heads. The configured maximum sequence length is 1,024; training uses 256 tokens.}
\label{tab:architectures}
\small
\begin{tabular}{@{}lrrr@{}}
\toprule
Setting & 60M & 100M & 350M\\
\midrule
Layers & 8 & 12 & 24\\
Hidden dimension & 512 & 640 & 1,024\\
MLP intermediate dimension & 1,376 & 1,708 & 2,736\\
Attention heads / head dimension & 8 / 64 & 10 / 64 & 16 / 64\\
Vocabulary size & 32,000 & 32,100 & 32,000\\
Total parameters & 58,073,600 & 100,117,120 & 367,969,280\\
Non-embedding parameters & 25,305,600 & 59,029,120 & 302,433,280\\
Embedding and output parameters & 32,768,000 & 41,088,000 & 65,536,000\\
Microbatch / accumulation & 512 / 1 & 128 / 4 & 64 / 8\\
Validation tokens & 10,092,544 & 10,027,008 & 10,010,624\\
\bottomrule
\end{tabular}
\end{table}

\paragraph{Batch-size ablation.} The batch-size comparison in
\Cref{fig:stability-requirements}b trains the 100M model at peak learning rates
$0.5$, $2$, and $5\times10^{-3}$ with batches of 256 and 1{,}024 sequences,
alongside the base batch of 512. All other settings match
\Cref{tab:training-recipe,tab:architectures}, including the 128{,}000-update
budget; the peak learning rate is not rescaled with batch size. Tokens per
update are therefore 65{,}536, 131{,}072, and 262{,}144, respectively.

\begin{table}[!htbp]
\centering
\setlength{\tabcolsep}{10pt}
\caption{\textbf{Experimental grid.} Peak learning rates and warmup ranges for the 33 model-size and peak-rate families; warmup grids vary with model size and peak learning rate.}
\label{tab:corpus}
\small
\begin{tabular}{@{}llcc@{}}
\toprule
Scale & Peak LRs ($10^{-3}$) & Families & Warmup range (updates)\\
\midrule
60M  & 0.1, 0.3, 0.5, 0.7, 1, 2, 3, 4, 5, 6, 7, 8 & 12 & 0--50,000\\
100M & 0.1, 0.3, 0.5, 0.7, 1, 2, 3, 4, 5, 6, 7, 8 & 12 & 0--50,000\\
350M & 0.5, 0.7, 1, 3, 4, 5, 6, 7, 8      & 9  & 0--64,000\\
\bottomrule
\end{tabular}
\end{table}

\subsection{Fitting and baselines}
\label{app:calibration}
The absolute and difference fits estimate
$(L_\infty,A,K,p,q,s)$ and $(A,C,p,q,s)$ independently. For the
difference fit, the shortest successful warmup is the reference and only
shared checkpoints are used. Positive parameters are optimized in log space
with a bounded Huber objective and deterministic multistart initialization.
Held-out fits use only their fitting data.

\begin{table}[!htbp]
\centering
\setlength{\tabcolsep}{15pt}
\caption{\textbf{Fitting settings.} Time is measured in thousands of updates. Widening the exponent bounds to $[0.05, 2]$, which allows $\beta\le 0$, gave slightly higher selection regret.}
\label{tab:fitting-settings}
\small
\begin{tabular}{ll}
\toprule
Setting & Choice\\
\midrule
Fitting optimizer & Trust-region reflective (SciPy \texttt{least\_squares})\\
Huber transition & $0.02$ loss units\\
Warmup offset & $w_0=0.032$\\
Exponent bounds & $p,q,s\in[0.05,1]$\\
Absolute amplitudes & $A,\ K/w_0^{s} \in [10^{-9}, 10^{5}]$\\
Difference amplitude & $C\in[10^{-9},10^4]$\\
Extrapolation fit window & checkpoints through 32k updates\\
Free-$s$ multistart & 35 deterministic starts\\
\bottomrule
\end{tabular}
\end{table}

\paragraph{Direct warmup-scaling baseline.}
The two-parameter baseline in \cref{app:direct-duration-scaling} fits a power
law directly to the best observed warmup at the fit horizons. It is
included to test whether extrapolating the observed duration trend is already
sufficient without modeling the loss surface.

\paragraph{Tuned rules.} The fixed-duration rule selects the warmup with the lowest loss at the last fit checkpoint, among all eligible warmups on the full grid or among the three fitting runs, and keeps it for every later horizon. The fixed-fraction rule keeps that warmup's fraction of the fit horizon and scales it with $T$. Both are mapped to the nearest eligible candidate, like the 1k and 10\% targets.

\subsection{Candidate selection and metrics}
\label{app:scoring}
All methods choose from the same successful warmup candidates at each target
horizon. Loss-law fits select the candidate with lowest predicted loss; the
direct scaling, 1k, and 10\% baselines are mapped to the nearest available
candidate. Regret is measured relative to the best observed candidate and is
averaged equally across target horizons and then across families. RMSE and
$R^2$ are computed from unmodified validation loss. In aggregate tables,
$\pm$ denotes the sample standard deviation across horizon-wise family means.

\makeatletter
\@ifundefined{assumption}{\newtheorem{assumption}{Assumption}}{}
\@ifundefined{corollary}{\newtheorem{corollary}{Corollary}}{}
\makeatother
\crefname{assumption}{Assumption}{Assumptions}
\Crefname{assumption}{Assumption}{Assumptions}
\crefformat{assumption}{Assumption~#2#1#3}
\Crefformat{assumption}{Assumption~#2#1#3}

\section{Theory supporting the main text}
\label{app:core}\label{app:det-effective-rates}
We follow \cref{sec:quadratic,sec:model}: first the quadratic warmup
tradeoff and warmup above the stability limit, then the passage
from many modes to the fitted loss law, and finally the fitted law's optimal
duration. 
\Cref{app:det-stochastic-complement} gives a complementary  discrete
result: sampling breaks order invariance and can change the loss-optimal
warmup with horizon, even with fixed population curvature. Unless otherwise
specified, the peak rate $\eta$ is fixed when $W$ and $T$ vary.

\subsection{Quadratic warmup tradeoff}
\label{app:det-quadratic}
On a quadratic loss the effect of warmup can be computed one eigendirection
(\emph{mode}) at a time. For a mode with curvature $h$ and relative peak rate
$\alpha=\eta h$, \cref{fig:det-regimes} compares the log-contraction of one
step at the peak rate, $\rho(\alpha)$, with that of one warmup step on
average, $g(\alpha)$. Warmup helps the mode when $g>\rho$, which splits the
peak rates into four ranges:
\begin{itemize}
 \item $\alpha<\alpha_{\rm cross}\simeq1.28$: warmup slows the mode
 ($g<\rho$);
 \item $\alpha_{\rm cross}<\alpha<2$: warmup speeds it up ($g>\rho>0$);
 \item $2<\alpha<\alpha_{\max}\simeq4.59$: the peak rate makes the mode grow
 ($\rho<0$), but warmup still shrinks it on average ($g>0$);
 \item $\alpha>\alpha_{\max}$: warmup makes the mode grow too ($g<0$).
\end{itemize}

\begin{figure}[!htbp]
 \centering
 \includegraphics[width=0.9\textwidth]{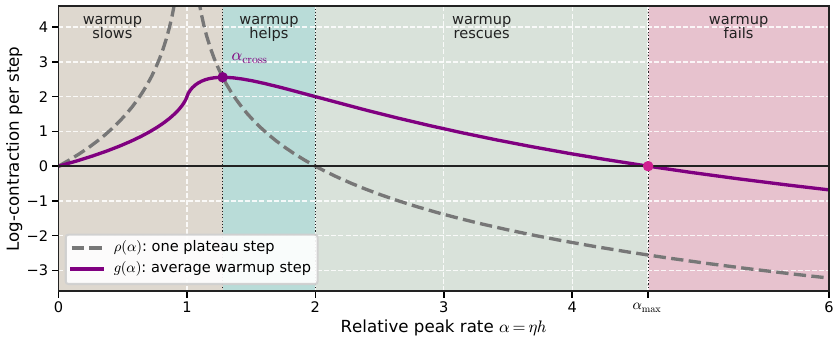}
 \caption{What warmup does to one mode. $\rho(\alpha)$: log-contraction of
 one step at the peak rate. $g(\alpha)$: its average over a linear warmup from
 $0$ to $\alpha$ (\cref{eq:det-warmup-integral}). Dots: $\alpha_{\rm cross}$,
 where $g=\rho$, and $\alpha_{\max}$, where $g=0$.}
 \label{fig:det-regimes}
\end{figure}

\paragraph{Exact modal loss.} Let $e=\theta-\theta_\star$ and
$f(\theta)-f_\star=\tfrac12 e^\top H e$ with $H\succeq 0$, let $h_i>0$ be the
curvatures of the modes that carry loss, and let $A_i$ be their initial
losses. A gradient step at rate $\eta_t$ multiplies the loss of mode $i$ by
$(1-\eta_th_i)^2$. With the linear warmup $\eta_t=\eta\min\{t/W,1\}$ ($W=0$
is the constant schedule) and $\alpha_i=\eta h_i$,
\begin{equation}
 f(\theta_T)-f_\star=\sum_i A_i\,M_{T,W}(\alpha_i),\qquad
 M_{T,W}(\alpha)=\prod_{t=1}^{W}\Bigl|1-\frac{\alpha t}{W}\Bigr|^2\,
 |1-\alpha|^{2(T-W)} .
 \label{eq:det-exact-product}
\end{equation}
At the peak rate a mode shrinks exactly when $0<\alpha<2$; we call
$\alpha=2$ the stability limit.

\paragraph{Contraction rates.}
One step at relative rate $\alpha$ multiplies a mode's loss by
$e^{-\rho(\alpha)}$. The warmup factor in \cref{eq:det-exact-product} runs
over $W$ equally spaced rates in $(0,\alpha]$; replacing the sum of their
logarithms by $W$ times its average gives
\begin{equation}
 \rho(\alpha)=-2\log|1-\alpha|,\qquad
 g(\alpha)=\int_0^1\rho(\alpha u)\,\mathrm du
 =2\Bigl[1+\frac{1-\alpha}{\alpha}\log|1-\alpha|\Bigr],
 \label{eq:det-warmup-integral}
\end{equation}
where the closed form uses the antiderivative $w\log|w|-w$ of $\log|w|$
($g$ is finite for every $\alpha>0$, and $g(1)=2$), and the continuous-warmup
approximation
\begin{equation}
 \overline M_{T,W}(\alpha)=\exp\{-g(\alpha)W-\rho(\alpha)(T-W)\}.
 \label{eq:det-envelope-app}
\end{equation}
The results below use the continuous-warmup approximation; exact integer
warmup schedules can exhibit isolated cancellations when a step lands at $\eta_th=1$.

\paragraph{When warmup helps a mode.}
Turning a plateau step into a warmup step changes a mode's log-loss by
$\rho-g$. Two identities locate the sign change:
\begin{equation}
 g(\alpha)-\rho(\alpha)=\frac2\alpha\{\alpha+\log(\alpha-1)\}\quad(\alpha>1),
 \qquad
 g'(\alpha)=\frac{\rho(\alpha)-g(\alpha)}{\alpha},
 \label{eq:det-phase-identities}
\end{equation}
the first from \cref{eq:det-warmup-integral} and the second by differentiating
$g(\alpha)=\alpha^{-1}\int_0^\alpha\rho(v)\,\mathrm dv$. For $\alpha<1$, $\rho$
increases, so its average $g$ is smaller. For $\alpha>1$, $g-\rho$ changes
sign once, at the solution $\alpha_{\rm cross}\simeq1.27846$ of
$\alpha+\log(\alpha-1)=0$. By the second identity, $g$ increases where
$g<\rho$ and decreases where $g>\rho$.

\paragraph{Two modes.}
Take a low-curvature mode $s$ with $\eta h_s<1$ and a mode $e$ near the
stability limit, $\alpha_{\rm cross}<\eta h_e<2$. With $\rho_j=\rho(\eta h_j)$
and $g_j=g(\eta h_j)$, the approximate loss is
\[
 L_{\rm quad}(W,T)=A_s\,e^{-\rho_sT+d_sW}+A_e\,e^{-\rho_eT-a_eW},
 \qquad d_s=\rho_s-g_s>0,\quad a_e=g_e-\rho_e>0,
\]
with $W$ treated as real. Solving $\partial_WL_{\rm quad}=0$ gives a line in
$T$,
\[
 W=x_\eta T+c_\eta,\qquad
 x_\eta=\frac{\rho_s-\rho_e}{d_s+a_e},\qquad
 c_\eta=\frac{1}{d_s+a_e}\log\frac{A_ea_e}{A_sd_s},
\]
whose slope changes sign where both modes contract equally fast at the peak
rate, $\eta_c=2/(h_s+h_e)$.

\begin{figure}[!htbp]
 \centering
 \includegraphics[width=\textwidth]{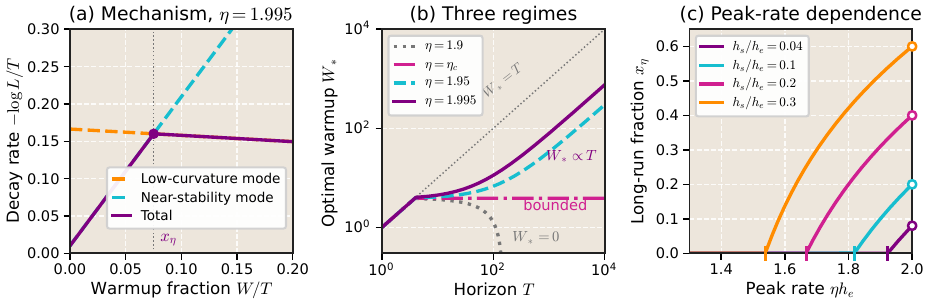}
 \caption{The two-mode optimum (\cref{thm:det-two-mode}) with the curvatures
 of \cref{fig:det-competition}, $h_e=1$ and $h_s/h_e=0.04$, so
 $\eta_c\simeq1.923$. \textbf{(a)}~At $\eta=1.995$ and $W=xT$, the loss decays
 at rate $\min(\rho_s-d_sx,\ \rho_e+a_ex)$ (solid), which is largest where the
 two modal rates (dashed) are equal, at $x=x_\eta$. \textbf{(b)}~$W^*_{\rm quad}$
 against $T$ ($A_e/A_s=100$); dotted: $W^*=T$. \textbf{(c)}~$x_\eta$ against
 the peak rate. Ticks: $\eta_c$. Circles: $2h_s/h_e$.}
 \label{fig:det-two-mode}
\end{figure}

\begin{theorem}[Two-mode optimum and peak-rate transition]
\label{prop:det-two-mode}\label{prop:det-quadratic-lr}
\label{thm:det-two-mode}
Let $A_s,A_e,d_s,a_e>0$. For every $T>0$, the unique minimizer of
$L_{\rm quad}(\cdot,T)$ on $[0,T]$ is
\begin{equation}
 W^*_{\rm quad}=\Pi_{[0,T]}\bigl[x_\eta T+c_\eta\bigr],
 \label{eq:det-two-mode-optimum}
\end{equation}
where $\Pi_{[0,T]}$ clips to $[0,T]$. If moreover $h_s<h_e/2$ and
$\alpha_{\rm cross}<\eta h_e<2$, then for all large $T$
\[
 W^*_{\rm quad}=
 \begin{cases}
  0, & \eta<\eta_c,\\
  \max(c_\eta,0), & \eta=\eta_c,\\
  x_\eta T+c_\eta, & \eta>\eta_c.
 \end{cases}
\]
For $\eta>\eta_c$,
\begin{equation}
 \frac{W^*_{\rm quad}}{T}\longrightarrow x_\eta\in(0,1),\qquad
 \frac{\mathrm d x_\eta}{\mathrm d\eta}>0,\qquad
 x_\eta\xrightarrow{\ \eta\downarrow\eta_c\ }0,\qquad
 x_\eta\xrightarrow{\ \eta\uparrow2/h_e\ }\frac{2h_s}{h_e}.
 \label{eq:main-quadratic-lr}
\end{equation}
\end{theorem}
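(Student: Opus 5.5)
The plan is to treat $W$ as a real variable and use strict convexity. Then work out the sign and size of the slope $x_\eta$ using only the shapes of $\rho$ and $g$ from \cref{eq:det-warmup-integral,eq:det-phase-identities}.

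\emph{Step 1 (unique clipped minimizer).} Since $A_s,A_e,d_s,a_e>0$, the second derivative is
\[
\partial_W^2 L_{\rm quad}=A_sd_s^2e^{-\rho_sT+d_sW}+A_ea_e^2e^{-\rho_eT-a_eW}>0 .
\]
So $L_{\rm quad}(\cdot,T)$ is strictly convex on $\mathbb R$ and has at most one stationary point. Setting $\partial_WL_{\rm quad}=0$ gives
\[
A_sd_se^{-\rho_sT+d_sW}=A_ea_ee^{-\rho_eT-a_eW}.
\]
Taking logarithms yields exactly $W=x_\eta T+c_\eta$. A strictly convex one-dimensional function restricted to an interval is uniquely minimized at the projection of its unconstrained minimizer, which gives \cref{eq:det-two-mode-optimum}.

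\emph{Step 2 (the standing hypotheses are consistent and fix the signs).} In the two-mode setting we have $\eta h_s<1<\alpha_{\rm cross}<\eta h_e<2$.
\begin{itemize}
\item \emph{Monotonicity of $\rho$ and $g$.} $\rho$ increases on $(0,1)$ and decreases on $(1,2)$. By the second identity in \cref{eq:det-phase-identities}, $g$ increases on $(0,\alpha_{\rm cross})$ and decreases on $(\alpha_{\rm cross},\infty)$, with $g(1)=g(2)=2$. Hence $g_s<2<g_e$, so $G:=g_e-g_s>0$.
\item \emph{Sign of the numerator.} Let $N:=\rho_s-\rho_e$. Equality $\rho_s=\rho_e$ means $1-\eta h_s=\eta h_e-1$, i.e.\ $\eta=\eta_c$.
\item \emph{$N$ increases in $\eta$.} $\rho_s$ increases and $\rho_e$ decreases in $\eta$. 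Therefore $N<0$, $=0$, $>0$ according as $\eta<\eta_c$, $=\eta_c$, $>\eta_c$.
\item \emph{The transition lies in the allowed range.} Since $h_s<h_e/2$, we get $\eta_ch_e=2h_e/(h_s+h_e)>4/3>\alpha_{\rm cross}$. Also $\eta_ch_s<1$ because $h_s<h_e$.
\end{itemize}
Note that $d_s+a_e=N+G$.

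\emph{Step 3 (large-$T$ cases).}
\begin{itemize}
\item If $\eta<\eta_c$, then $x_\eta<0$, so $x_\eta T+c_\eta<0$ for large $T$ and the projection is $0$.
\item If $\eta=\eta_c$, the stationary point is the constant $c_\eta$. It is clipped to $\max(c_\eta,0)$, which is at most $T$ once $T$ is large.
\item If $\eta>\eta_c$, then $0<x_\eta=N/(N+G)<1$ because $G>0$. Hence $0<x_\eta T+c_\eta<T$ for large $T$, and $W^*_{\rm quad}/T\to x_\eta$.
\end{itemize}

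\emph{Step 4 (monotonicity and limits), the main obstacle.} The sign of $\mathrm dx_\eta/\mathrm d\eta$ looks like the delicate step. I would write
\[
\frac{\mathrm dx_\eta}{\mathrm d\eta}=\frac{N'G-NG'}{(N+G)^2}
\]
and avoid explicit derivatives:
\begin{itemize}
\item $N'>0$ by Step 2.
\item $G'=h_eg'(\eta h_e)-h_sg'(\eta h_s)<0$, because $g$ is decreasing beyond $\alpha_{\rm cross}$ and increasing below it.
\end{itemize}
With $N>0$ and $G>0$, both $N'G$ and $-NG'$ are positive, so the derivative is positive.

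For the limits:
\begin{itemize}
\item As $\eta\downarrow\eta_c$, $N\to0$ while $G$ stays positive, so $x_\eta\to0$.
\item As $\eta\uparrow2/h_e$, $\rho_e\to0$ and $g_e\to2$. Put $a=2h_s/h_e<1$, the value of $\eta h_s$ in this limit. Then $x_\eta\to\rho(a)/(\rho(a)+2-g(a))$.
\item The closed form gives $2-g(a)=-2\tfrac{1-a}{a}\log(1-a)=\tfrac{1-a}{a}\rho(a)$. So the denominator equals $\rho(a)/a$ and the limit is exactly $a=2h_s/h_e$.
\end{itemize}
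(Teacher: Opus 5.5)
Your proposal is correct and matches the paper's proof essentially step for step. Strict convexity gives the clipped stationary point, and the equivalence $\rho_s>\rho_e\iff\eta>\eta_c$ fixes the sign of $x_\eta$. Writing $x_\eta=N/(N+G)$ with $N'>0$, $G'<0$ and $G>0$ then gives $x_\eta\in(0,1)$, the monotonicity, and the limits. Your explicit checks that $\eta_c h_e>4/3>\alpha_{\rm cross}$ and that $2-g(a)=\frac{1-a}{a}\rho(a)$ spell out details the paper leaves implicit.
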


\begin{proof}[Proof of \cref{thm:det-two-mode}]
$\partial_W^2L_{\rm quad}>0$, so the minimizer is the stationary point
clipped to $[0,T]$. On the stated range, $\eta h_s<2h_s/h_e<1$ and
$\eta h_e>\alpha_{\rm cross}$, so $d_s,a_e>0$. Since
$\rho(\alpha)=-2\log|1-\alpha|$,
\[
 \rho_s>\rho_e\iff 1-\eta h_s<\eta h_e-1\iff \eta>\eta_c ,
\]
so $x_\eta$ has the sign of $\eta-\eta_c$. Clipping $x_\eta T+c_\eta$ gives
the three cases, using $0<x_\eta<1$ for $\eta>\eta_c$.

For the properties of $x_\eta$, write $x_\eta=N/(N+G)$ with
$N=\rho_s-\rho_e>0$ and $G=g_e-g_s$. As $\eta$ grows, $\rho_s,g_s$ increase
and $\rho_e,g_e$ decrease (\cref{eq:det-phase-identities}), so $N'>0$,
$G'<0$, and $G>g(2)-g(2h_s/h_e)>0$. Hence $0<x_\eta<1$ and
$x_\eta'=(N'G-NG')/(N+G)^2>0$. As $\eta\downarrow\eta_c$, $N\to0$. As
$\eta\uparrow2/h_e$, $\rho_e\to0$ and $g_e\to2$, and
\cref{eq:det-warmup-integral} gives $x_\eta\to2h_s/h_e$.
\end{proof}

\subsection{Warmup above the stability limit}
\label{app:det-recovery}
Above the stability limit, $\alpha=\eta h_e>2$, a mode grows at every
peak-rate step, but warmup still passes through rates at which it shrinks.
Write $\rho=\rho(\alpha)$, $g=g(\alpha)$ and $a=g-\rho$. By
\cref{eq:det-warmup-integral,eq:det-phase-identities},
\[
 \rho=-2\log(\alpha-1)<0,\qquad
 a=\frac{2}{\alpha}\{\alpha+\log(\alpha-1)\}>0 .
\]
In the continuous-warmup approximation of \cref{eq:det-envelope-app}, the
mode's loss after $T$ steps is $A_ee^{-C}$, with accumulated contraction
\[
 C=gW+\rho(T-W)=\rho T+aW .
\]
Warmup prevents net amplification by horizon $T$ exactly when $C\ge0$.

\begin{figure}[!htbp]
 \centering
 \includegraphics[width=0.7\textwidth]{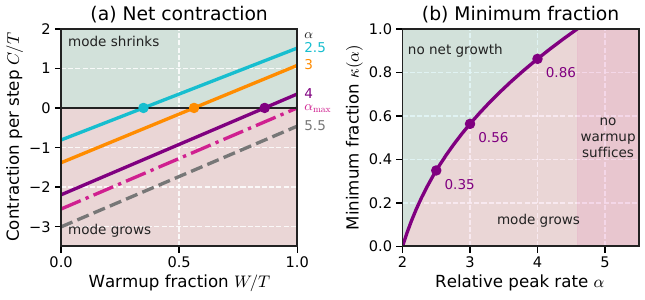}
 \caption{Warmup above the stability limit (\cref{thm:det-recovery}).
 \textbf{(a)}~$C/T=\rho+a\,W/T$ against the warmup fraction; dots mark
 $\kappa(\alpha)$. \textbf{(b)}~The threshold $\kappa(\alpha)$.}
 \label{fig:det-recovery}
\end{figure}

\begin{theorem}[Warmup required to avoid net growth]
\label{prop:det-unrecoverable}\label{thm:det-recovery}
Let $A_e>0$ and $\alpha>2$. On $(2,\infty)$, $g$ has a unique zero
$\alpha_{\max}$, given by
\begin{equation}
 (\alpha_{\max}-1)\log(\alpha_{\max}-1)=\alpha_{\max},
 \qquad \alpha_{\max}\simeq4.59112148.
 \label{eq:det-unrecoverable-cutoff}
\end{equation}
\begin{enumerate}
 \item[(a)] If $2<\alpha<\alpha_{\max}$, warmup prevents net amplification by horizon $T$ if and
 only if
 \begin{equation}
  \frac WT\ge\kappa(\alpha)
  =\frac{-\rho}{g-\rho}
  =\frac{\alpha\log(\alpha-1)}{\alpha+\log(\alpha-1)},
  \label{eq:det-rescue-fraction}
 \end{equation}
 and $\kappa$ increases from $\kappa(2)=0$ to $\kappa(\alpha_{\max})=1$.
 \item[(b)] If $\alpha\ge\alpha_{\max}$, then $C<0$ for every $W<T$. At
 $\alpha=\alpha_{\max}$, $W=T$ gives $C=0$. For $\alpha>\alpha_{\max}$,
 \begin{equation}
  \min_{0\le W\le T} A_e e^{-C}=A_e e^{-gT}\longrightarrow\infty
  \qquad(T\to\infty).
  \label{eq:det-unrecoverable-growth}
 \end{equation}
\end{enumerate}
\end{theorem}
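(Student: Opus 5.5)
The plan is to work from the closed form of $g$ in \cref{eq:det-warmup-integral} and the identities \cref{eq:det-phase-identities}, so every claim reduces to sign and monotonicity checks of explicit one-variable functions of $\alpha$.

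\emph{Existence and uniqueness of $\alpha_{\max}$.} For $\alpha>1$, write $g(\alpha)=\frac{2}{\alpha}\{\alpha-(\alpha-1)\log(\alpha-1)\}$. Then $g=0$ exactly when $(\alpha-1)\log(\alpha-1)=\alpha$, which is \cref{eq:det-unrecoverable-cutoff}. Set $\phi(\alpha)=(\alpha-1)\log(\alpha-1)-\alpha$. We have $\phi(2)=-2<0$, $\phi'(\alpha)=\log(\alpha-1)>0$ on $(2,\infty)$, and $\phi\to\infty$. So $\phi$ has a unique zero on $(2,\infty)$, with $g>0$ before it and $g<0$ after it. The numerical value $\alpha_{\max}\simeq4.591$ comes from evaluating $\phi$ near $4.59$.

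\emph{Part (a).} For $2<\alpha<\alpha_{\max}$ we have $\rho<0$, and $a=g-\rho>0$, since $a=\frac2\alpha\{\alpha+\log(\alpha-1)\}$ with both summands positive. The condition $C=\rho T+aW\ge0$ is therefore equivalent to $W/T\ge-\rho/a$. Substituting the closed forms gives $-\rho/a=\frac{\alpha\log(\alpha-1)}{\alpha+\log(\alpha-1)}$, which is $\kappa$. For monotonicity, put $\ell=\log(\alpha-1)>0$, so $\kappa=\alpha\ell/(\alpha+\ell)=1/(1/\ell+1/\alpha)$. Differentiating,
\[
\frac{\mathrm d}{\mathrm d\alpha}\Bigl(\frac1\ell+\frac1\alpha\Bigr)=-\frac{1}{\ell^2(\alpha-1)}-\frac1{\alpha^2}<0,
\]
so $\kappa$ is increasing. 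For the endpoints, $\kappa(2)=0$ because $\ell=0$ there. At $\alpha_{\max}$, $g=0$ gives $-\rho=a$, so $\kappa=1$. As a check, $\kappa=1$ iff $\alpha\ell=\alpha+\ell$ iff $(\alpha-1)\ell=\alpha$, which is the same equation.

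\emph{Part (b).} Write $C=gW+\rho(T-W)$ with $\rho<0$ and $g\le0$ when $\alpha\ge\alpha_{\max}$. For $W<T$, $\rho(T-W)<0$ and $gW\le0$, so $C<0$. At $\alpha=\alpha_{\max}$ and $W=T$, $C=gT=0$. For $\alpha>\alpha_{\max}$, $C$ is linear in $W$ with slope $a>0$, so $C$ is maximized at $W=T$, where $C=gT$. Thus $\min_W A_ee^{-C}=A_ee^{-gT}$, and since $g<0$ this tends to $\infty$.

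The only step needing care is confirming $a>0$ and the sign of $\rho$ on the whole range, since this justifies dividing by $a$ and the monotonic-in-$W$ argument. Both follow directly from $\log(\alpha-1)>0$ for $\alpha>2$, so I do not expect a real obstacle. The remaining subtlety is purely numerical: certifying the decimal value of $\alpha_{\max}$, which I would do by a sign check of $\phi$ at $4.5911214$ and $4.5911215$.
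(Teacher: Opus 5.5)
Your proposal is correct and follows essentially the paper's proof: write $C=\rho T+aW$ with $\rho<0<a$, solve $C\ge0$ as $W/T\ge-\rho/a=\kappa$, obtain $\kappa(\alpha_{\max})=1$ from $g=0\Rightarrow a=-\rho$, and for part (b) use that $C$ is maximized at $W=T$, where it equals $gT$. The differences are cosmetic. You locate the zero of $g$ through $\phi=-\alpha g/2$ with $\phi'=\log(\alpha-1)$ rather than through $g'=-a/\alpha$, and you get monotonicity of $\kappa$ from the reciprocal form $1/(1/\ell+1/\alpha)$ rather than the quotient rule; both give the same conclusions.
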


\begin{proof}[Proof of \cref{thm:det-recovery}]
Since $a>0$, $C=\rho T+aW$ increases with $W$, so
\[
 \max_{0\le W\le T}C=gT .
\]
By \cref{eq:det-phase-identities,eq:det-warmup-integral},
\[
 g'=\frac{\rho-g}{\alpha}=-\frac a\alpha<0,\qquad
 g=2-2\,\frac{\alpha-1}{\alpha}\log(\alpha-1),
\]
so $g$ decreases from $g(2)=2$ to $-\infty$ and has one zero, given by
\cref{eq:det-unrecoverable-cutoff}.

\emph{(a)} With $\ell=\log(\alpha-1)$, so that $-\rho=2\ell$ and
$a=2+2\ell/\alpha$,
\[
 C\ge0\iff\frac WT\ge\frac{-\rho}{a}=\frac{\alpha\ell}{\alpha+\ell}=\kappa .
\]
Moreover $\kappa<1\iff g>0$; $\kappa(2)=0$ since $\ell=0$; and
$\kappa(\alpha_{\max})=1$ since $g=0$ gives $a=-\rho$. Finally,
\begin{equation}
 \kappa'(\alpha)=\frac{\ell^2+\alpha^2/(\alpha-1)}{(\alpha+\ell)^2}>0 .
 \label{eq:det-kappa-derivative}
\end{equation}

\emph{(b)} If $g=0$, then $C=\rho(T-W)\le0$, with equality only at $W=T$.
If $g<0$, then $C\le gT<0$ for every $W$.
\end{proof}
This is a finite-horizon statement: for any fixed $W$, an unstable
constant-rate phase eventually dominates as $T\to\infty$.

\subsection{From many modes to the fitted loss law}
\label{app:det-progress}\label{app:det-spectrum}
\label{app:det-exponent}\label{app:det-compression}
A single mode decays exponentially; power laws arise when error is spread
over many modes. If components with speed $z$ decay as $e^{-zC}$ and their
initial error has density $cz^{\gamma-1}$ near zero, then for large $C$
\[
 \int_0^\infty c\,z^{\gamma-1}e^{-zC}\,\mathrm dz=c\,\Gamma(\gamma)\,C^{-\gamma}:
\]
components faster than $1/C$ are gone, and what remains is the error at
speeds below $1/C$. We apply this to low-curvature modes, which give the
progress term, and to an effective persistent contribution, and then
connect the result to the fitted law by a local product approximation.

\paragraph{Progress term.}
For a low-curvature mode, $(1-\eta_th)^2\approx e^{-2\eta_th}$, so its loss is
about $e^{-2hS_T}$ with $S_T=\sum_t\eta_t$. For the linear warmup,
\begin{equation}
 S_T=\eta\,(T-W/2+1/2)\quad(W\ge1),\qquad S_T=\eta T\quad(W=0),
 \label{eq:det-progress-clock-app}
\end{equation}
so $S_T\approx\eta\tau$ with the effective training amount $\tau=T-W/2$: a
warmup step counts as half a peak-rate step. We describe the initial loss of
these modes by a measure $\mu_s$ on curvatures with a power-law density near
zero, as is standard in analyses of power-law loss curves
\citep{bordelon2024dynamical,velikanov2024tight}.

\begin{proposition}[Power-law decay from low curvatures]
\label{prop:det-progress}
Fix $\eta>0$ and $0<h_{\max}<1/\eta$. Let $\mu_s$ be a finite measure on
$(0,h_{\max}]$ with density $\psi(h)\sim c_sh^{p-1}$ near zero, $p,c_s>0$.
Then, for integers $T\to\infty$ and $0\le W\le T$,
\begin{equation}
 P(W,T)=\int M_{T,W}(\eta h)\,\mathrm d\mu_s(h)
 \sim c_s\Gamma(p)\,(2\eta)^{-p}\,\tau^{-p}.
 \label{eq:det-progress-integral}
\end{equation}
\end{proposition}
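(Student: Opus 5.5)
The plan is to treat the exact discrete product $M_{T,W}(\eta h)$ from \cref{eq:det-exact-product} as a perturbation of $e^{-2h S_T}$. Here $S_T=\eta\tau+O(\eta)$ by \cref{eq:det-progress-clock-app}. Then the Gamma-integral computation at the start of this subsection gives the power law.

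Since $\eta h\le \eta h_{\max}<1$, every factor $1-\eta_t h$ lies in $(0,1]$, so no absolute values or sign cancellations arise. We may write
\[
\log M_{T,W}(\eta h)=2\sum_{t}\log(1-\eta_t h)=-2hS_T-R_T(h),
\]
with $0\le R_T(h)\le c\,h^2\sum_t\eta_t^2\le c\,\eta^2h^2 T$. The constant $c$ depends only on $\eta h_{\max}<1$; it comes from $-\log(1-x)-x\le c x^2$ on $[0,\eta h_{\max}]$. Moreover $S_T\ge \eta T/2$ and $\tau\in[T/2,T]$, uniformly in $0\le W\le T$, so all estimates below will be uniform in $W$.

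Next I will split the integral at $h_T=T^{-1/2-\epsilon}$ for a small $\epsilon>0$; any scale with $1/T\ll h_T\ll T^{-1/2}$ works.

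\textbf{High curvatures.} On $(h_T,h_{\max}]$ we have $M\le e^{-2hS_T}\le e^{-\eta h T}$. This region therefore contributes at most $\mu_s(0,h_{\max}]\,e^{-\eta T^{1/2-\epsilon}}=o(T^{-p})$, using finiteness of $\mu_s$.

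\textbf{Low curvatures.} On $(0,h_T]$ we have $R_T(h)\le c\eta^2T^{-2\epsilon}\to0$, so $M=e^{-2hS_T}(1+o(1))$ uniformly. The density is $\psi(h)=c_sh^{p-1}(1+o(1))$ uniformly on this shrinking interval. Substituting $z=2S_Th$ gives
\[
\int_0^{h_T}e^{-2hS_T}c_sh^{p-1}\,\mathrm dh=c_s(2S_T)^{-p}\int_0^{2S_Th_T}z^{p-1}e^{-z}\,\mathrm dz .
\]
Since $2S_Th_T\ge\eta T^{1/2-\epsilon}\to\infty$, the last integral tends to $\Gamma(p)$. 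Finally, $S_T/(\eta\tau)\to1$ uniformly, because $S_T-\eta\tau\in\{0,\eta/2\}$ and $\tau\ge T/2$. This yields $c_s\Gamma(p)(2\eta)^{-p}\tau^{-p}(1+o(1))$.

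The main obstacle is making the low-curvature step uniform in $W$ while controlling the quadratic remainder $R_T$. The cutoff $h_T$ must be small enough that $h^2T\to0$, yet large enough that $S_Th_T\to\infty$. The choice $h_T=T^{-1/2-\epsilon}$ satisfies both conditions. The hypothesis $h_{\max}<1/\eta$ is exactly what keeps the logarithm expansion valid and rules out the isolated cancellation at $\eta_th=1$.
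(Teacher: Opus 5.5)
Your proof is correct. It differs from the paper's mainly in how the asymptotic step is organized.

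The paper never splits the integral. It combines $x\le-\log(1-x)\le x+x^2/(2(1-\bar\alpha))$ with $\sum_t\eta_t^2\le\eta S_T$ to get the global sandwich $e^{-2S_Th(1+ch)}\le M_{T,W}(\eta h)\le e^{-2S_Th}$ on all of $(0,h_{\max}]$. Both sides are then exact Laplace integrals in the large parameter $2S_T\ge\eta T$. It applies Watson's lemma to each, handling the lower bound by the change of variable $u=h(1+ch)$, which preserves the density asymptotics at zero.

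You instead use the cruder bound $\sum_t\eta_t^2\le\eta^2T$. This forces you to localize to $h\le h_T$ with $1/T\ll h_T\ll T^{-1/2}$, so that the quadratic remainder $h^2T$ vanishes. You then discard the tail as exponentially small and evaluate the truncated Gamma integral directly, which amounts to proving Watson's lemma inline for this setting.

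The paper's route is shorter and needs no choice of scale, at the cost of the reparametrization and an appeal to an external theorem. Yours is self-contained. It also makes the uniformity in $0\le W\le T$ explicit through $S_T\ge\eta T/2$, $\tau\in[T/2,T]$ and $S_T-\eta\tau\in\{0,\eta/2\}$, which the paper leaves implicit.
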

\begin{proof}
Let $\bar\alpha=\eta h_{\max}<1$. For $0\le x\le\bar\alpha$,
\[
 x\;\le\;-\log(1-x)\;\le\;x+\frac{x^2}{2(1-\bar\alpha)} .
\]
Applying this to every factor of $M_{T,W}$, with $x=\eta_th$, and using
$\sum_t\eta_t^2\le\eta S_T$ places each mode between two exponentials:
\[
 e^{-2S_T\,h(1+ch)}\;\le\;M_{T,W}(\eta h)\;\le\;e^{-2S_T\,h},
 \qquad c=\frac{\eta}{2(1-\bar\alpha)} .
\]
Integrated against $\mu_s$, both bounds are Laplace integrals with large
parameter $2S_T\ge\eta T$. By Watson's lemma \citep{olver1974asymptotics},
a density $\psi(h)\sim c_sh^{p-1}$ near zero gives
\[
 \int e^{-2S_T\,h}\,\mathrm d\mu_s(h)\;\sim\;c_s\Gamma(p)\,(2S_T)^{-p},
\]
and the same holds for the lower bound after the change of variable
$u=h(1+ch)$, which leaves the density near zero unchanged. Finally,
$S_T=\eta(\tau+1/2)\sim\eta\tau$ by \cref{eq:det-progress-clock-app}, which
gives \cref{eq:det-progress-integral}.
\end{proof}
The constant is absorbed into the amplitude $A_\eta$. The asymptotic power
law requires suitable loss-weighted mass at arbitrarily small curvatures;
finitely many modes can approximate it over an intermediate range of $T$.

\paragraph{Persistent term.}
For a reference direction with $\rho_\eta=\rho(\eta h_e)$,
$g_\eta=g(\eta h_e)$, $a_\eta=g_\eta-\rho_\eta>0$ and $g_\eta>0$, that is,
$\alpha_{\rm cross}<\eta h_e<\alpha_{\max}$, the accumulated contraction of
\cref{app:det-recovery} can be written with $T=\tau+W/2$ as
\begin{equation}
 C_\eta=\rho_\eta T+a_\eta W=\rho_\eta\tau+b_\eta W,\qquad
 b_\eta=g_\eta-\rho_\eta/2>0 .
 \label{eq:det-effective-clock-app}
\end{equation}
($b_\eta\ge g_\eta$ if $\rho_\eta\le0$, and $b_\eta=a_\eta+\rho_\eta/2$ if
$\rho_\eta>0$.) We model the aggregate persistent error as a mixture of decay rates
sharing the accumulated-contraction variable $C_\eta$.

\begin{assumption}[Effective persistent error]
\label{ass:det-relaxation}
At each peak rate, the persistent contribution is
\[
 E_\eta(W,T)=\int_{(0,\infty)}e^{-zC_\eta(W,T)}\,\mathrm d\mu_\eta(z),
\]
where $\mu_\eta$ is a finite measure with density
$\psi_\eta(z)\sim c_\eta z^{\gamma-1}$ near zero, $c_\eta,\gamma>0$, and
$\gamma$ is the same for the peak rates being compared.
\end{assumption}

Here $z$ is an effective decay rate, not a Hessian eigenvalue.

\begin{proposition}[Power-law persistent error]
\label{prop:det-relaxation}
Under \cref{ass:det-relaxation}, as $C_\eta\to\infty$,
\begin{equation}
 E_\eta\sim B_\eta C_\eta^{-\gamma},\qquad B_\eta=c_\eta\Gamma(\gamma).
 \label{eq:det-parent-exact}
\end{equation}
\end{proposition}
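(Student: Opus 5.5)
The claim is a direct Watson-lemma statement for the Laplace transform of $\mu_\eta$ evaluated at the large parameter $C=C_\eta(W,T)$. I will treat $C_\eta$ as a scalar parameter tending to infinity. No structure of $W$ or $T$ is needed beyond $C_\eta\to\infty$, so the proof is the one-dimensional asymptotic $\int_{(0,\infty)} e^{-zC}\,\mathrm d\mu_\eta(z)\sim c_\eta\Gamma(\gamma)C^{-\gamma}$.

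The argument has three steps.

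First, fix $\varepsilon\in(0,1)$ and choose $\delta>0$ such that the density satisfies $(1-\varepsilon)c_\eta z^{\gamma-1}\le\psi_\eta(z)\le(1+\varepsilon)c_\eta z^{\gamma-1}$ on $(0,\delta]$. This is possible by the assumption $\psi_\eta(z)\sim c_\eta z^{\gamma-1}$. Split the integral into a piece over $(0,\delta]$ and a piece over $(\delta,\infty)$.

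Second, handle the tail. Since $\mu_\eta$ is finite, the integral over $(\delta,\infty)$ is at most $\mu_\eta((\delta,\infty))e^{-\delta C}$. This is exponentially small, hence $o(C^{-\gamma})$. The only hypothesis that matters here is finiteness of $\mu_\eta$. No density is needed away from zero, so any atoms or singular parts of $\mu_\eta$ on $(\delta,\infty)$ are absorbed into this bound.

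Third, handle the piece near zero. Bound it above and below by $(1\pm\varepsilon)c_\eta\int_0^\delta z^{\gamma-1}e^{-zC}\,\mathrm dz$. Substituting $u=zC$ turns this into $(1\pm\varepsilon)c_\eta C^{-\gamma}\int_0^{\delta C}u^{\gamma-1}e^{-u}\,\mathrm du$. The incomplete gamma integral tends to $\Gamma(\gamma)$ as $C\to\infty$ by monotone convergence, which uses $\gamma>0$.

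Combining the three steps gives
$(1-\varepsilon)B_\eta\le\liminf C^{\gamma}E_\eta\le\limsup C^{\gamma}E_\eta\le(1+\varepsilon)B_\eta$. Since $\varepsilon$ is arbitrary, this yields \cref{eq:det-parent-exact}. Alternatively, the whole argument is exactly the Watson's lemma citation already used in \cref{prop:det-progress}, and one can simply invoke it.

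There is no genuine obstacle. The one point requiring care is interpretive: the density hypothesis is only local near zero, and $\mu_\eta$ may have atoms or singular parts away from zero. The tail bound must therefore use only the total mass of $\mu_\eta$, as in the second step. I would also remark that $C_\eta\to\infty$ holds along large $T$ whenever $\rho_\eta>0$, or when $W/T$ exceeds $\kappa$ in the unstable case. That remark is not needed for the statement as posed, which is phrased directly in terms of $C_\eta$.
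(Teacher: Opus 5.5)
Your proof is correct and takes the paper's route. The paper's proof is a one-line appeal to Watson's lemma with large parameter $C_\eta$ (as in \cref{prop:det-progress}); your argument is the leading-order case of that lemma written out, with the density sandwich on $(0,\delta]$ giving $(1\pm\varepsilon)c_\eta\Gamma(\gamma)C_\eta^{-\gamma}$ and the tail bounded by $\mu_\eta((\delta,\infty))e^{-\delta C_\eta}$, which makes explicit that only the finiteness of $\mu_\eta$ in \cref{ass:det-relaxation} is needed away from zero.
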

\begin{proof}
This is Watson's lemma \citep{olver1974asymptotics} with large
parameter $C_\eta$, as in the proof of \cref{prop:det-progress}.
\end{proof}

\paragraph{Local product approximation.}
\Cref{prop:det-relaxation} motivates the power-law surrogate
\[
 \bar E_\eta(\tau,W)=B_\eta(\rho_\eta\tau+b_\eta W)^{-\gamma},
\]
which the fitted law replaces by a product of powers $\tau^{-q}W^{-s}$. The
surrogate is the large-$C_\eta$ form of $E_\eta$, not $E_\eta$ itself: at finite
$C_\eta$ the local exponents of $E_\eta$ sum to
$-C_\eta E_\eta'(C_\eta)/E_\eta(C_\eta)$, which tends to $\gamma$ only as
$C_\eta\to\infty$. We assume $\alpha_{\rm cross}<\eta h_e<2$, so that
$\rho_\eta,a_\eta,b_\eta>0$; above the stability limit, $\rho_\eta<0$ would give
a negative time exponent, so the construction does not cover that regime.

\begin{theorem}[Product approximation]
\label{thm:det-product}\label{lem:det-compression}
Fix a reference point $\tau_0,W_0>0$, let $D_{\eta,0}=\rho_\eta\tau_0+b_\eta W_0$,
and let
\begin{equation}
 q_\eta=\frac{\gamma\rho_\eta\tau_0}{D_{\eta,0}},\qquad
 s_\eta=\frac{\gamma b_\eta W_0}{D_{\eta,0}},\qquad
 q_\eta+s_\eta=\gamma.
 \label{eq:det-local-powers-app}
\end{equation}
The product
\begin{equation}
 E_{{\rm loc},\eta}
 =\bar E_\eta(\tau_0,W_0)\Bigl(\frac{\tau}{\tau_0}\Bigr)^{-q_\eta}
 \Bigl(\frac{W}{W_0}\Bigr)^{-s_\eta}
 =K_\eta\,\tau^{-q_\eta}W^{-s_\eta}
 \label{eq:det-local-product}
\end{equation}
is the only product of powers that matches $\bar E_\eta$ and its derivatives in
$\log\tau$ and $\log W$ at $(\tau_0,W_0)$. For all $\tau,W>0$,
\begin{equation}
 0\le\log\frac{E_{{\rm loc},\eta}}{\bar E_\eta}
 \le\frac{\gamma}{8}
 \left[\log\frac{\tau/W}{\tau_0/W_0}\right]^2 .
 \label{eq:det-product-error}
\end{equation}
In particular, it matches the effect of lengthening warmup at a fixed
budget $T=\tau+W/2$:
\begin{equation}
 \left.\frac{\partial\log\bar E_\eta}{\partial W}\right|_{T,\,(\tau_0,W_0)}
 =-\frac{\gamma a_\eta}{D_{\eta,0}}
 =\frac{q_\eta}{2\tau_0}-\frac{s_\eta}{W_0}.
 \label{eq:det-parent-warmup-match}
\end{equation}
\end{theorem}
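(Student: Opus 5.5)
Work in logarithmic coordinates $x=\log\tau$, $y=\log W$, where $\log\bar E_\eta=\log B_\eta-\gamma\log(\rho_\eta e^{x}+b_\eta e^{y})$ and any product of powers $K\tau^{-q}W^{-s}$ is affine, $\log K-qx-sy$. The statement then has three parts: uniqueness of the first-order match, the two-sided error bound, and the fixed-budget derivative.

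\emph{Uniqueness.} Matching the value and both first derivatives at $(x_0,y_0)=(\log\tau_0,\log W_0)$ is three linear conditions on the three unknowns $(\log K,q,s)$, so it fixes them. Differentiating gives
\[
-\partial_x\log\bar E_\eta=\frac{\gamma\rho_\eta\tau}{\rho_\eta\tau+b_\eta W}
\quad\text{and}\quad
-\partial_y\log\bar E_\eta=\frac{\gamma b_\eta W}{\rho_\eta\tau+b_\eta W}.
\]
At the reference point these are exactly $q_\eta$ and $s_\eta$ in \cref{eq:det-local-powers-app}, and they sum to $\gamma$. The value condition then gives $K_\eta=\bar E_\eta(\tau_0,W_0)\tau_0^{q_\eta}W_0^{s_\eta}$.

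\emph{Error bound.} Set $F(x,y)=\gamma\log(\rho_\eta e^x+b_\eta e^y)$, so $\log(E_{\rm loc}/\bar E_\eta)=F-F_{\rm lin}$, where $F_{\rm lin}$ is the tangent plane of $F$ at the reference point. Because $F$ depends on $(x,y)$ only through $u=x-y$ plus $\gamma y$, write $F=\gamma y+\gamma\phi(u)$ with $\phi(u)=\log(\rho_\eta e^{u}+b_\eta)$, using $\rho_\eta,b_\eta>0$ in the range $\alpha_{\rm cross}<\eta h_e<2$. The term $\gamma y$ is linear, so the gap reduces to the one-dimensional Taylor remainder $\gamma[\phi(u)-\phi(u_0)-\phi'(u_0)(u-u_0)]$. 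The function $\phi$ is a softplus-type function with $\phi''=\sigma(1-\sigma)\in(0,1/4]$, where $\sigma=\rho_\eta e^u/(\rho_\eta e^u+b_\eta)$. Convexity gives the lower bound $0$. The remainder bound $\tfrac12\sup\phi''\,(u-u_0)^2\le\tfrac18(u-u_0)^2$ gives the upper bound. Since $u-u_0=\log\frac{\tau/W}{\tau_0/W_0}$, this is exactly \cref{eq:det-product-error}.

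\emph{Fixed-budget derivative.} Along $T=\tau+W/2$ fixed we have $d\tau/dW=-1/2$. Then
\[
\frac{d}{dW}\log\bar E_\eta=-\frac{\gamma(b_\eta-\rho_\eta/2)}{D}=-\frac{\gamma a_\eta}{D},
\]
using $b_\eta-\rho_\eta/2=g_\eta-\rho_\eta=a_\eta$ from \cref{eq:det-effective-clock-app}. For the product, the same derivative is $q_\eta/(2\tau_0)-s_\eta/W_0$, and substituting \cref{eq:det-local-powers-app} gives $\gamma(\rho_\eta/2-b_\eta)/D_{\eta,0}$, the same value. This is also immediate from the first-order match.

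The only step needing care is the error bound. The key observation is that the curvature of $F$ lives entirely in the direction $u=\log(\tau/W)$, so the bound depends only on the ratio. The constant $1/8$ comes from $\sup\sigma(1-\sigma)=1/4$.
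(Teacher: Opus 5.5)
Your proposal is correct and follows the paper's proof essentially step for step. The paper also reduces the gap to the one-variable function $\log(\theta e^{v}+1-\theta)-\theta v$ with $\theta=q_\eta/\gamma$ and $v=u-u_0$, which is exactly your normalized softplus Taylor remainder. It bounds that function's second derivative by $1/4$, and it obtains the fixed-budget derivative from $\partial\tau/\partial W=-1/2$ and $b_\eta-\rho_\eta/2=a_\eta$.
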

The exponents split $\gamma$ according to the share of each term in
$\rho_\eta\tau+b_\eta W$ at the reference point, and the error vanishes at a
fixed warmup fraction.

\begin{proof}
Differentiating $\log\bar E_\eta$ in $\log\tau$ and $\log W$ gives $-q_\eta$ and
$-s_\eta$ at the reference point, and matching the value fixes $K_\eta$. With
$\theta=q_\eta/\gamma$ and $v=\log\frac{\tau/W}{\tau_0/W_0}$,
\[
 \frac1\gamma\log\frac{E_{{\rm loc},\eta}}{\bar E_\eta}
 =\log\bigl(\theta e^v+1-\theta\bigr)-\theta v ,
\]
which vanishes with its first derivative at $v=0$ and has second derivative
in $(0,\tfrac14]$; Taylor's theorem gives \cref{eq:det-product-error}. The
warmup derivative follows from $\partial\tau/\partial W=-1/2$ and
$b_\eta-\rho_\eta/2=a_\eta$.
\end{proof}

\begin{corollary}[Higher peak rates shift the exponent toward warmup]
\label{prop:det-exponent-allocation}
Hold $h_e$, $\gamma$ and $W_0/\tau_0$ fixed. On
$\alpha_{\rm cross}<\alpha=\eta h_e<2$, $s_\eta$ increases and $q_\eta$
decreases with $\eta$, with $s_\eta\to\gamma$ and $q_\eta\to0$ as
$\alpha\uparrow2$.
\end{corollary}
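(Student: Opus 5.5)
Write $r:=W_0/\tau_0>0$, which is held fixed. From \cref{eq:det-local-powers-app},
\[
 s_\eta=\frac{\gamma b_\eta r}{\rho_\eta+b_\eta r},\qquad q_\eta=\gamma-s_\eta .
\]
So $s_\eta$ depends on $\eta$ only through the ratio $R(\alpha):=b_\eta/\rho_\eta$, with $\alpha=\eta h_e$. Moreover $s_\eta$ is increasing in $R$, since $s_\eta=\gamma rR/(1+rR)$. On $\alpha_{\rm cross}<\alpha<2$ we have $\rho_\eta>0$ and $b_\eta>0$, so $R>0$ and this expression is well defined. The plan is therefore to show that $R$ increases strictly in $\alpha$ on this interval and that $R\to\infty$ as $\alpha\uparrow2$. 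The claims about $q_\eta$ then follow at once from $q_\eta=\gamma-s_\eta$.

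\textbf{Monotonicity.} By the definition of $b_\eta$,
\[
 R=\frac{g}{\rho}-\frac12 ,
\]
so it suffices to show that $g/\rho$ increases. The numerator $g$ is positive, and by \cref{eq:det-phase-identities} we have $g'=(\rho-g)/\alpha<0$ wherever $g>\rho$. That is exactly the situation for $\alpha>\alpha_{\rm cross}$, so $g$ decreases there, but only mildly. The denominator $\rho=-2\log(\alpha-1)$ is positive on $(1,2)$ and strictly decreasing, with $\rho'=-2/(\alpha-1)$. Differentiating,
\[
 (g/\rho)'=\frac{g'\rho-g\rho'}{\rho^2}=\frac{1}{\rho^2}\Bigl[\frac{(\rho-g)\rho}{\alpha}+\frac{2g}{\alpha-1}\Bigr].
\]
I need the bracket to be positive. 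Set $\ell=-\log(\alpha-1)>0$, so $\rho=2\ell$. By \cref{eq:det-phase-identities}, $g-\rho=\tfrac2\alpha(\alpha-\ell)$. Substituting and multiplying out turns the bracket into an explicit expression in $\alpha$ and $\ell$.

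\textbf{Main obstacle.} The key step is showing that this explicit expression is positive on the interval. It reduces to
\[
 \frac{2g}{\alpha-1}>\frac{4\ell(\alpha-\ell)}{\alpha^2}.
\]
First I would try the crude bounds $g>\rho=2\ell$ and $\alpha-\ell<\alpha$. These give
\[
 \text{LHS}>\frac{4\ell}{\alpha-1}\quad\text{and}\quad\text{RHS}<\frac{4\ell}{\alpha}.
\]
Since $\alpha-1<\alpha$, the left side dominates and the inequality holds. Note that $\alpha-\ell>0$ on this range precisely because $\alpha>\alpha_{\rm cross}$. If that estimate had failed, I would fall back on checking the sign directly via $\alpha+\log(\alpha-1)>0$.

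\textbf{Limits.} As $\alpha\uparrow2$, $\rho\to0^+$ while $g\to g(2)=2$, so $R\to\infty$. Hence $s_\eta\to\gamma$ and $q_\eta\to0$, which completes the proof.
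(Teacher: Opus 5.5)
Your proof is correct and follows the paper's route: both reduce the claim to showing that $b/\rho$ increases on $(\alpha_{\rm cross},2)$ and diverges as $\alpha\uparrow2$, and then conclude using $q_\eta+s_\eta=\gamma$. The only difference is that the paper substitutes the closed form of $g$ to get $b/\rho=-1/\log(\alpha-1)+\tfrac12-\tfrac1\alpha$, whose derivative $1/\{(\alpha-1)\log^2(\alpha-1)\}+1/\alpha^2$ is visibly positive, so your quotient-rule estimate, though valid, is not needed.
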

\begin{proof}
With $b=g-\rho/2$ and \cref{eq:det-warmup-integral},
\begin{equation}
 \frac{s_\eta}{q_\eta}
 =\frac{b(\alpha)}{\rho(\alpha)}\,\frac{W_0}{\tau_0},\qquad
 \frac{b(\alpha)}{\rho(\alpha)}
 =-\frac1{\log(\alpha-1)}+\frac12-\frac1\alpha .
 \label{eq:det-allocation-ratio}
\end{equation}
Its derivative $1/\{(\alpha-1)\log^2(\alpha-1)\}+1/\alpha^2$ is positive, and it
diverges as $\alpha\uparrow2$. Since $q_\eta+s_\eta=\gamma$, the claims
follow.
\end{proof}

Adding the progress term gives the fitted law in \cref{eq:det-fitted-law}.
We use $W+w_0$ so that zero warmup is finite. The identity $q+s=\gamma$
belongs only to the local approximation, so we neither impose it nor use it to
infer the global large-$T$ behavior of the fitted law. Instead, we fit
$p,q,s$ freely at each peak learning rate. Each peak rate then has its own
law, which can be fitted from a few short runs at that rate and extrapolated
to longer horizons without runs at other peak rates. Sharing $q+s$ across peak
rates fits the observed losses nearly as well (\cref{app:shared-exponent-sum}),
but it requires fitting runs at several peak rates.

\subsection{Optimal duration under the fitted law}
\label{app:det-law-proof}\label{app:proof}
With coefficients $A,K,p,q,s,w_0>0$ and $L_\infty$ fixed, the fitted law at
horizon $T$ is
\begin{equation}
 \mathcal L_T(W)=L_\infty+A\tau^{-p}+K\tau^{-q}U^{-s},\qquad
 U=W+w_0,\quad\tau=T-W/2,\quad0\le W\le T.
 \label{eq:det-fitted-law}
\end{equation}

\begin{proposition}[Unique finite-horizon optimum]
\label{prop:det-finite-optimum}
$\mathcal L_T$ is strictly convex on $[0,T]$. Its minimizer $W_*$ is $0$ if
$\mathcal L_T'(0)\ge0$, is $T$ if $\mathcal L_T'(T)\le0$, and otherwise is the
unique solution of
\begin{equation}
 ApU_*^{s+1}=K\tau_*^{m}\Bigl(2s-\frac{qU_*}{\tau_*}\Bigr),\qquad m=p+1-q.
 \label{eq:det-interior-balance}
\end{equation}
\end{proposition}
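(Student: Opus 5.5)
The plan is to establish strict convexity term by term, using that both $\tau=T-W/2$ and $U=W+w_0$ are positive affine functions of $W$ on $[0,T]$. Note that $\tau\ge T/2>0$ and $U\ge w_0>0$ there, so $\mathcal L_T$ is smooth on a neighbourhood of $[0,T]$, including one-sided derivatives at the endpoints.

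\emph{Progress term.} The progress term $A\tau^{-p}$ is a strictly convex power composed with an affine map. Directly,
\[
\frac{\mathrm d^2}{\mathrm dW^2}A\tau^{-p}=\frac{Ap(p+1)}{4}\tau^{-p-2}>0 .
\]
This term alone already forces strict convexity, provided the other term is merely convex.

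\emph{Persistent term.} For the persistent term I will write
\[
K\tau^{-q}U^{-s}=K\exp\{-q\log\tau-s\log U\}.
\]
Since $\log\tau$ and $\log U$ are concave in $W$ (log of positive affine functions) and $q,s>0$, the exponent is convex. The exponential of a convex function is convex, so this term is convex. Hence $\mathcal L_T$ is strictly convex on $[0,T]$. If a direct check is preferred, one can compute the second derivative of $\log(\tau^{-q}U^{-s})$,
\[
\frac{q}{4\tau^{2}}+\frac{s}{U^{2}}>0,
\]
and use that a log-convex function is convex.

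\emph{Characterizing the minimizer.} $\mathcal L_T$ is continuous and strictly convex on the compact interval, so it has a unique minimizer, and $\mathcal L_T'$ is strictly increasing. If $\mathcal L_T'(0)\ge0$, then $\mathcal L_T'>0$ on $(0,T]$ and the minimizer is $0$. Symmetrically, if $\mathcal L_T'(T)\le0$, the minimizer is $T$. Otherwise $\mathcal L_T'(0)<0<\mathcal L_T'(T)$, and the intermediate value theorem together with strict monotonicity gives a unique interior zero, which is the minimizer.

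\emph{The balance equation.} It remains to rewrite $\mathcal L_T'=0$ in the stated form. Using $\tau'=-1/2$ and $U'=1$,
\[
\mathcal L_T'(W)=\frac{Ap}{2}\tau^{-p-1}+K\Bigl(\frac q2\tau^{-q-1}U^{-s}-s\tau^{-q}U^{-s-1}\Bigr).
\]
Multiplying by the positive factor $2\tau^{p+1}U^{s+1}$ gives
\[
ApU^{s+1}=K\tau^{p+1-q}\Bigl(2s-\frac{qU}{\tau}\Bigr),
\]
which is \cref{eq:det-interior-balance} with $m=p+1-q$. Because the multiplier is positive, the zero set and the sign pattern are unchanged, so uniqueness carries over to this equation.

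I expect no genuine obstacle. The only points needing care are positivity of $\tau$ at $W=T$, which justifies differentiability up to the endpoint, and noting that the persistent term need only be convex because the progress term supplies the strictness.
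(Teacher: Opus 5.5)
Your proof is correct and follows the paper's route. The paper computes $\mathcal L_T''$ directly, and its persistent-term contribution $E\bigl[(\tfrac{q}{2\tau}-\tfrac sU)^2+\tfrac{q}{4\tau^2}+\tfrac{s}{U^2}\bigr]$, with $E=K\tau^{-q}U^{-s}$, is exactly the identity $E''=E\bigl((\log E)'^2+(\log E)''\bigr)$ behind your log-convexity argument; it then multiplies $\mathcal L_T'=0$ by $2\tau^{p+1}U^{s+1}$ as you do, and your endpoint and intermediate-value reasoning fills in the case analysis the paper leaves implicit.
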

\begin{proof}
With $E=K\tau^{-q}U^{-s}$,
\[
 \mathcal L_T'=\frac{Ap}{2}\tau^{-p-1}+E\Bigl(\frac{q}{2\tau}-\frac sU\Bigr),\qquad
 \mathcal L_T''=\frac{Ap(p+1)}{4}\tau^{-p-2}
 +E\Bigl[\Bigl(\frac{q}{2\tau}-\frac sU\Bigr)^2+\frac{q}{4\tau^2}+\frac{s}{U^2}\Bigr]>0 .
\]
Multiplying $\mathcal L_T'=0$ by $2\tau^{p+1}U^{s+1}$ gives
\cref{eq:det-interior-balance}.
\end{proof}

\paragraph{Long-horizon regimes.}
The balance in \cref{eq:det-interior-balance} weighs the cost of one more warmup
step against its lasting benefit. When $w_0\ll W_*\ll T$, we have
$U_*\approx W_*$ and $\tau_*\approx T$, and it reduces to
\[
 \underbrace{Ap\,W_*^{s+1}}_{\text{cost}}\;\approx\;\underbrace{2sK\,T^{m}}_{\text{benefit}},
 \qquad\text{so}\qquad
 W_*\approx d\,T^{\beta},
 \qquad
 d=\Bigl(\frac{2sK}{Ap}\Bigr)^{1/(s+1)},
 \qquad
 \beta=\frac{p+1-q}{s+1}.
\]
The exponent $\beta$ decides what happens as $T\to\infty$:
\begin{center}
\small
\begin{tabular}{lll}
\toprule
Exponent & Optimal warmup & Regime\\
\midrule
$\beta<0$ & $W_*=0$ & no warmup\\
$\beta=0$ & $W_*\to\max\{0,\,d-w_0\}$ & bounded warmup\\
$0<\beta<1$ & $W_*\approx d\,T^{\beta}$ & sublinear growth\\
$\beta=1$ & $W_*/T\to x_1$ & proportional growth\\
$\beta>1$ & $W_*/T\to x_\infty=\min\{1,\,2s/(q+s)\}$ & proportional growth\\
\bottomrule
\end{tabular}
\end{center}
For $\beta\le0$ the balance point does not grow with $T$, and for $0<\beta<1$
it grows but stays small relative to $T$, so the approximation above is
self-consistent. For $\beta\ge1$ it would give $W_*\gtrsim T$, so instead the
optimum settles at a fixed fraction $x=W_*/T$. Writing $W=xT$ and dropping
$w_0$, the loss above $L_\infty$ is, up to the common factor $T^{-(q+s)}$,
\[
 A\,T^{q+s-p}\,(1-x/2)^{-p}\;+\;K\,x^{-s}(1-x/2)^{-q}.
\]
For $\beta>1$, $q+s<p$, so the first term vanishes, and minimizing the second
gives $x_\infty$. For $\beta=1$, both terms remain, and
$x_1=\min\{1,\,y_*/(1+y_*/2)\}$, where $y_*>0$ solves
\begin{equation}
 Apy_*^{s+1}=K(2s-qy_*).
 \label{eq:det-critical-fraction}
\end{equation}

\paragraph{Finite horizons.} The regimes above describe $T\to\infty$; the
following result holds at every finite horizon.

\label{app:det-finite-horizon}\label{app:finite-horizon}
\begin{corollary}[Growth at finite horizons]
\label{cor:det-finite-growth}
If $q\le p+1$, the optimal warmup strictly increases with $T$ wherever it is
interior.
\end{corollary}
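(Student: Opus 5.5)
The plan is to use the implicit function theorem on the first-order condition $\mathcal L_T'(W)=0$ from \cref{prop:det-finite-optimum}. Strict convexity gives $\mathcal L_T''>0$, so wherever $W_*$ is interior it is a smooth function of $T$ with
\[
 \frac{\mathrm dW_*}{\mathrm dT}=-\frac{\partial_T\mathcal L_T'(W_*)}{\mathcal L_T''(W_*)} .
\]
It is therefore enough to show that $\partial_T\mathcal L_T'(W)<0$ at the stationary point. Changing $T$ at fixed $W$ only moves $\tau$, and $\partial\tau/\partial T=1$, so this derivative is $\partial_\tau$ of the expression for $\mathcal L_T'$ computed in the proof of \cref{prop:det-finite-optimum}, holding $U$ fixed.

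Differentiate $\mathcal L_T'=\tfrac{Ap}{2}\tau^{-p-1}+K\tau^{-q}U^{-s}\bigl(\tfrac{q}{2\tau}-\tfrac sU\bigr)$ in $\tau$. The result is
\[
 \partial_\tau\mathcal L_T'=-\frac{Ap(p+1)}{2}\tau^{-p-2}-\frac{Kq(q+1)}{2}\tau^{-q-2}U^{-s}+\frac{Kqs}{\tau^{q+1}U^{s+1}} .
\]
Next, use the interior balance \cref{eq:det-interior-balance} to eliminate $A$. Write $E=K\tau^{-q}U^{-s}$ and $r=U/\tau$, which is positive. Stationarity says $\tfrac{Ap}{2}\tau^{-p-1}=E\bigl(\tfrac sU-\tfrac q{2\tau}\bigr)$, and the right side must be positive, so interiority forces $2s>qr$. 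Substituting gives
\[
 \tau^{2}U\,\partial_\tau\mathcal L_T'/E=-(p+1)\bigl(s-\tfrac{qr}{2}\bigr)-\tfrac{q(q+1)}{2}r+qs .
\]
Collecting terms, this equals $-s(p+1-q)+\tfrac{qr}{2}(p+1)-\tfrac{q(q+1)r}{2}$, which simplifies to
\[
 -(p+1-q)\Bigl(s-\tfrac{qr}{2}\Bigr).
\]

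This last factorisation is the step where the arithmetic must be checked carefully, because the whole sign argument rests on it. Once it holds, the conclusion follows. With $q\le p+1$ and $s-qr/2>0$ from interiority, the expression is $\le0$, and it is strictly negative when $q<p+1$. Hence $\mathrm dW_*/\mathrm dT\ge0$ in general and $>0$ when $q<p+1$.

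The main obstacle is the boundary case $q=p+1$. There the derivative vanishes, so the first-order argument gives only weak monotonicity. I would first recheck whether the $w_0$ offset breaks the tie, but my calculation says it does not: $\partial_T\mathcal L_T'\equiv0$ at the optimum when $q=p+1$. In that case one should directly verify that the balance equation becomes $ApU^{s+1}=K(2s-qU/\tau)$. For fixed $U$, the right side increases in $\tau$, and the left side increases in $U$ while the right side decreases in $U$. So as $T$ grows, the solution $U_*$ still increases strictly, because $\tau$ increases along the path: along a solution curve $\tau=T-W/2$ cannot stay constant while $T$ grows. Making that monotone-root argument precise would settle the equality case. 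Alternatively I would use it uniformly for all $q\le p+1$ by rewriting the balance as $F(U,\tau)=0$ with $F$ monotone in each argument.
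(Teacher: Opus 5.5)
Your route, implicit differentiation of $\mathcal L_T'(W_*)=0$ plus a sign check on $\partial_T\mathcal L_T'$, is the paper's. Your formula for $\partial_\tau\mathcal L_T'$ is right, and so is the intermediate expression after eliminating $A$; the prefactor should be $\tau U/E$ rather than $\tau^2U/E$, which is harmless. The error is in the factorisation you flagged: $\tfrac{qr}{2}(p+1)-\tfrac{q(q+1)r}{2}=\tfrac{qr}{2}(p-q)$, not $\tfrac{qr}{2}(p+1-q)$. The correct simplification is
\[
\frac{\tau U}{E}\,\partial_T\mathcal L_T'(W_*)=-(p+1-q)\Bigl(s-\frac{qr}{2}\Bigr)-\frac{qr}{2}.
\]
The extra $-qr/2$ is strictly negative. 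So for every $q\le p+1$, including $q=p+1$, and using $2s>qr$, the derivative is strictly negative and $\mathrm dW_*/\mathrm dT>0$. Your claim that $\partial_T\mathcal L_T'\equiv0$ at the optimum when $q=p+1$ is false: there it equals $-qE/(2\tau^2)$. The paper gets this in one line by using stationarity to eliminate the $sE/U$ term instead of $A$. That gives $\partial_T\mathcal L_T'=-\tfrac{Ap}{2}(p+1-q)\tau^{-p-2}-\tfrac{qE}{2\tau^2}<0$.

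So the ``main obstacle'' is an artifact of the slip. As written, your proof establishes strictness only for $q<p+1$. It does so via an incorrect identity that overestimates the true value by $qr/2$, so its negativity still implies the right conclusion there. The case $q=p+1$ is left to an unfinished sketch. That sketch does go through, and in fact for all $q\le p+1$:
\begin{itemize}
\item With $m=p+1-q\ge0$ and $\tau$ fixed, the interior balance has a unique root $U=\phi(\tau)\in(0,2s\tau/q)$.
\item $\phi$ is strictly increasing, because $K(2s\tau^m-qU\tau^{m-1})$ increases in $\tau$ whenever $2s\tau>qU$.
\item Suppose $T_1<T_2$ both have interior optima and $\tau_2\le\tau_1$. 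Then $U_2\le U_1$, so $\tau_2=T_2-(U_2-w_0)/2>T_1-(U_1-w_0)/2=\tau_1$, a contradiction.
\item Hence $\tau_2>\tau_1$, and therefore $U_2>U_1$ and $W_2>W_1$.
\end{itemize}
This gives a derivative-free comparison between two horizons. Once the algebra is corrected, though, the implicit-function argument already suffices and is shorter.
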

\begin{proof}
At an interior optimum, $\mathcal L_T'(W_*)=0$ and $\mathcal L_T''(W_*)>0$, so
$W_*'(T)=-\partial_T\mathcal L_T'/\mathcal L_T''$. Differentiating
$\mathcal L_T'$ in $T$ at fixed $W$ and using $\mathcal L_T'=0$,
\[
 \partial_T\mathcal L_T'=-\frac{Ap}{2}(p+1-q)\,\tau^{-p-2}-\frac{qE}{2\tau^2}<0 ,
\]
so $W_*'(T)>0$.
\end{proof}

\subsection{A stochastic mechanism for horizon dependence}
\label{app:det-stochastic-complement}\label{app:sampling-persistence}
The quadratic model above is deterministic. Minibatch sampling gives a second
route to horizon-dependent warmup, even with fixed curvature and a stable peak
rate. We use the exact second-moment dynamics of least squares with Gaussian
inputs \citep{bordelon2022sgd}.

\paragraph{Setup.}
Inputs are Gaussian, labels are noiseless, and each step draws a fresh
minibatch of size $B$:
\[
x\sim\mathcal N(0,H),
\qquad
y=x^\top\theta_\star .
\]
In the eigenbasis of $H$, let $\lambda_i$ be the eigenvalues and
$\Lambda=\operatorname{diag}(\lambda)$. We track the expected squared error of
each mode and the expected loss,
\[
m_{i,t}=\mathbb E\bigl[e_{i,t}^2\bigr],
\qquad
R_t=\tfrac12\sum_i\lambda_i\,m_{i,t},
\qquad
e_t=\theta_t-\theta_\star .
\]
A step with learning rate $u$ updates these moments linearly:
\begin{equation}
m_{t+1}=\mathcal A(\eta_t)\,m_t,
\qquad
\mathcal A(u)=
\underbrace{I-2u\Lambda+u^2\bigl(1+B^{-1}\bigr)\Lambda^2}_{\text{each mode on its own}}
+\underbrace{\frac{u^2}{B}\,\lambda\lambda^\top}_{\text{sampling noise}} .
\label{eq:main-moments}
\end{equation}
This follows from the Gaussian identity
\[
\mathbb E\bigl[\widehat H\Sigma\widehat H\bigr]
=\bigl(1+B^{-1}\bigr)H\Sigma H+B^{-1}H\operatorname{tr}(H\Sigma),
\]
where $\widehat H$ is the minibatch covariance. The sampling-noise term moves
error between modes. As $B\to\infty$ it vanishes, and \cref{eq:main-moments}
reduces to gradient descent. Warmup uses $\eta_t=\eta\min\{(t+1)/W,\,1\}$.

\label{app:step-ordering}Because the sampling-noise term couples modes with
different curvatures, steps at different learning rates no longer commute, so
the order of the learning rates now matters.
\newpage
\begin{figure}[!htbp]
\centering
\includegraphics[width=\textwidth]{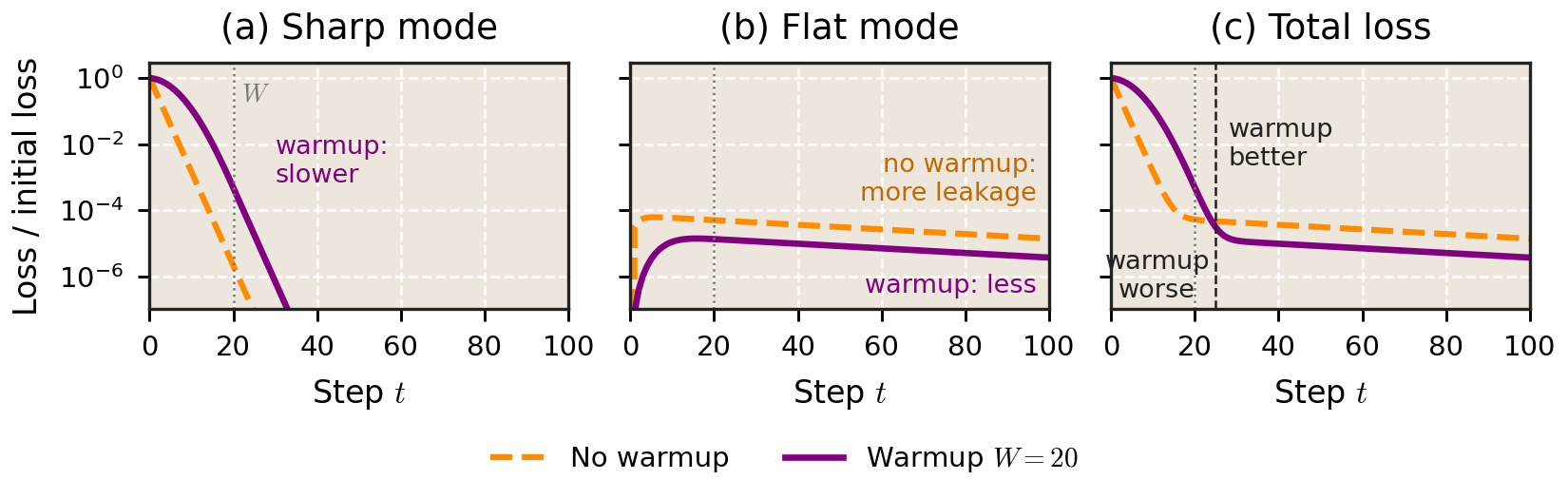}
\caption{\textbf{Sampling flips the warmup ranking.} The two-mode example of
\cref{eq:two-mode-updates} with $\epsilon=0.02$, $\eta h=0.4$, $B=2$, and
$W=20$; losses are relative to the initial loss.
\textbf{(a)} Warmup empties the sharp mode more slowly.
\textbf{(b)} Warmup leaks less error into the flat mode.
\textbf{(c)} Warmup is worse until step 25, then settles at about a quarter
of the no-warmup loss. The exact recursion in \cref{eq:main-moments} gives
the same picture, with the crossing at step 24.}
\label{fig:stochastic-reversal}
\end{figure}

\begin{proposition}[Batch size and the stable rate]
\label{prop:stochastic-ceiling}
At a constant rate $u>0$, $m_t\to0$ for every initial state if and only if
\[
\bigl(1+B^{-1}\bigr)u\lambda_i<2\ \text{ for all } i
\qquad\text{and}\qquad
\frac uB\sum_i\frac{\lambda_i}{2-(1+B^{-1})u\lambda_i}<1 .
\]
Both conditions tighten as $B$ decreases, so the supremum $u_B$ of stable rates
increases with $B$. For a single mode of curvature $h$,
\[
u_B=\frac{2B}{(B+2)\,h},
\]
and in general $u_B\to2/\max_i\lambda_i$ as $B\to\infty$.
\end{proposition}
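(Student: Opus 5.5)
The plan is to reduce convergence of $m_{t+1}=\mathcal A(u)m_t$ to a spectral-radius condition on the fixed matrix $\mathcal A(u)$. I then handle the rank-one sampling term with a Perron--Frobenius / determinant argument. Throughout I assume $\lambda_i>0$ for the tracked modes, as in the exact modal loss of \cref{app:det-quadratic}. A mode with $\lambda_i=0$ is never updated, so $m_{i,t}$ cannot tend to zero and it must be excluded.

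\textbf{Step 1: spectral radius.} At constant rate, $m_t=\mathcal A(u)^t m_0$, so $m_t\to0$ for every $m_0$ if and only if $\rho(\mathcal A(u))<1$. Write $\mathcal A(u)=D+c\,\lambda\lambda^\top$ with $c=u^2/B$ and $D=\operatorname{diag}(d_i)$. The diagonal entries can be written in two ways:
\[
d_i=(1-u\lambda_i)^2+u^2\lambda_i^2/B=1-u\lambda_i\bigl(2-(1+B^{-1})u\lambda_i\bigr).
\]
The first form shows $d_i\ge0$, so $\mathcal A(u)$ is entrywise nonnegative. The second form shows $d_i<1\iff(1+B^{-1})u\lambda_i<2$.

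\textbf{Step 2: the first condition is necessary.} Since $\mathcal A(u)\ge D\ge0$ entrywise, monotonicity of the Perron root gives $\rho(\mathcal A)\ge\rho(D)=\max_i d_i$. Hence stability forces $d_i<1$ for every $i$, which is the first condition.

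\textbf{Step 3: the rank-one term (main obstacle).} This is the step I expect to be hardest. Assume $\rho(D)<1$ and consider the family $\phi(\sigma)=\rho(D+\sigma c\lambda\lambda^\top)$ for $\sigma\in[0,1]$. This function is continuous and nondecreasing, with $\phi(0)<1$.

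\begin{itemize}
\item Because $\lambda\lambda^\top$ has strictly positive entries, $D+\sigma c\lambda\lambda^\top$ is positive for $\sigma>0$, and $\phi$ is strictly increasing there.
\item By Perron--Frobenius, $\phi(\sigma)$ is itself an eigenvalue. So $\phi$ reaches $1$ exactly when $\det(I-D-\sigma c\lambda\lambda^\top)=0$.
\item The matrix determinant lemma gives
\[
\det(I-D-\sigma c\lambda\lambda^\top)=\det(I-D)\bigl(1-\sigma c\,\lambda^\top(I-D)^{-1}\lambda\bigr).
\]
\item Since $\det(I-D)>0$ and the bracket is affine in $\sigma$, $\phi$ first hits $1$ at $\sigma=1/\bigl(c\lambda^\top(I-D)^{-1}\lambda\bigr)$.
\end{itemize}

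It follows that $\rho(\mathcal A)<1$ if and only if $c\lambda^\top(I-D)^{-1}\lambda<1$. Substituting $1-d_i=u\lambda_i(2-(1+B^{-1})u\lambda_i)$ gives
\[
c\,\lambda^\top(I-D)^{-1}\lambda=\frac uB\sum_i\frac{\lambda_i}{2-(1+B^{-1})u\lambda_i},
\]
which is exactly the second condition.

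\textbf{Step 4: monotonicity in $B$ and the single-mode formula.} As $B$ grows:
\begin{itemize}
\item the factor $1+B^{-1}$ decreases, so the first condition loosens;
\item the prefactor $u/B$ decreases and each denominator increases, so the left side of the second condition decreases.
\end{itemize}
Both conditions are also monotone in $u$ on the admissible range, so the stable set is an interval $(0,u_B)$ with $u_B$ nondecreasing in $B$.

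For one mode, the second condition reads $u h/B<2-(1+B^{-1})uh$, that is, $uh(1+2/B)<2$. This gives $u_B=2B/((B+2)h)$ and already implies the first condition.

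\textbf{Step 5: the large-$B$ limit.} For the upper bound, the first condition gives $u_B\le 2/((1+B^{-1})\max_i\lambda_i)<2/\max_i\lambda_i$. For the matching lower bound, fix any $u<2/\max_i\lambda_i$. The denominators in the second condition are then bounded away from zero uniformly in large $B$, so its left side is $O(1/B)$. Both conditions therefore hold for all large $B$, and $u_B\to2/\max_i\lambda_i$.
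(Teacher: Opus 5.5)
Your proof is correct and follows the paper's route. The paper writes $\mathcal A(u)=G+(u^2/B)\lambda\lambda^\top$ with $G\ge0$ diagonal and simply asserts that such a matrix has spectral radius below one exactly when $\max_i G_{ii}<1$ and $(u^2/B)\lambda^\top(I-G)^{-1}\lambda<1$; your Perron-root monotonicity and matrix-determinant-lemma argument supplies the justification the paper omits, as does your explicit treatment of the $B$-monotonicity, the single-mode formula and the large-$B$ limit, and your restriction to $\lambda_i>0$ is a hypothesis the statement genuinely needs, since a zero-curvature mode satisfies both conditions while its $m_{i,t}$ stays constant.
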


\begin{proof}
Write $\mathcal A(u)=G+(u^2/B)\,\lambda\lambda^\top$, where $G\ge0$ is diagonal.
Such a matrix has spectral radius below one exactly when
\[
\max_i G_{ii}<1
\qquad\text{and}\qquad
\frac{u^2}{B}\,\lambda^\top(I-G)^{-1}\lambda<1 .
\]
These are the two stated conditions.
\end{proof}

\paragraph{A two-mode example.}
Consider two modes:
\begin{itemize}
\item a \emph{sharp} mode with curvature $h$, which holds all the initial error;
\item a \emph{flat} mode with curvature $\epsilon h$, $\epsilon\ll1$, which
starts with no error.
\end{itemize}
Write $a=uh$ for the step size relative to the sharp curvature, and
$c=1+2/B$. Dropping terms of order $\epsilon^2$, one step of
\cref{eq:main-moments} reads
\begin{equation}
\begin{aligned}
m_{\mathrm{sharp}} &\;\leftarrow\; \bigl(1-2a+c\,a^2\bigr)\,m_{\mathrm{sharp}},\\
m_{\mathrm{flat}} &\;\leftarrow\; \bigl(1-2\epsilon a\bigr)\,m_{\mathrm{flat}}
\;+\;\frac{\epsilon a^2}{B}\,m_{\mathrm{sharp}} .
\end{aligned}
\label{eq:two-mode-updates}
\end{equation}
The sharp mode decays quickly. The flat mode decays slowly, and sampling noise
leaks error into it from the sharp mode. Per step,
\[
\frac{\text{error leaked into the flat mode}}{\text{error removed from the sharp mode}}
=\frac{\epsilon a^2/B}{2a-c\,a^2}
=\frac{\epsilon}{B}\cdot\frac{a}{2-c\,a}.
\]
This ratio grows with $a$: large steps are leaky, and small steps are clean.

\paragraph{Why warmup loses early and wins later.}
Warmup replaces the first large steps with small ones. This has two effects:
\begin{itemize}
\item \textbf{Cost.} The sharp mode is emptied more slowly
(\cref{fig:stochastic-reversal}a). This extra error decays quickly after
warmup.
\item \textbf{Benefit.} Less error leaks into the flat mode
(\cref{fig:stochastic-reversal}b). This saving decays slowly, because the flat
mode shrinks only by a factor $1-2\epsilon a$ per step.
\end{itemize}
After warmup, both schedules take the same steps. The difference between their
losses $n$ steps after warmup is therefore a sum of two decaying terms:
\[
R(W)-R(0)
=\underbrace{C\,r_{\mathrm{fast}}^{\,n}}_{\text{cost}}
-\underbrace{D\,r_{\mathrm{slow}}^{\,n}}_{\text{benefit}},
\qquad
C,D>0,
\qquad
r_{\mathrm{fast}}<r_{\mathrm{slow}}.
\]
The cost dominates first and the benefit later, so the ranking flips once:
warmup is worse early and better late (\cref{fig:stochastic-reversal}c).
Without sampling ($B\to\infty$), nothing leaks, so $D=0$ and warmup is worse
at every horizon. The example shows that sampling alone can make the warmup
ranking depend on the horizon.

\section{Robustness of warmup behavior}
\label{app:additional-evidence}

\subsection{Warmup ordering after decay}
\label{app:decay-branching}

We branch the intermediate model's warmup--stable trajectories into linear
decay after warmup has ended. For each peak rate and horizon, we compare validation-loss rankings
immediately before decay and at its endpoint over the same completed branches.
Spearman correlation measures how well the ordering of warmup durations is preserved, regardless of the overall loss reduction from decay. We also report agreement between pairwise orderings and whether the best duration is unchanged. Post-decay regret is the endpoint loss of the warmup that was best before decay, minus the best endpoint loss among those branches.

\begin{table}[!htbp]
\centering
\setlength{\tabcolsep}{15pt}
\caption{\textbf{Decay-branching design.} Model: 100M; AdamW; seed 0. Fork horizons
$C\in\{8,16,32,64,128\}$k updates; linear decay to zero lasts 5k updates, with
endpoint $C+5$k. Only $W\le C-2$k is eligible. The comparison uses the 128 branches that completed.}
\label{tab:decay-grid}
\small
\begin{tabular}{rl}

Peak LR ($10^{-3}$) & Source warmups (updates)\\
\midrule
0.1 & 0, 256, 1,000, 4,000, 16,000\\
0.3 & 0, 128, 512, 2,000, 8,000\\
1   & 0, 512, 2,000, 8,000, 30,000\\
3   & 512, 1,500, 8,000, 16,000, 30,000\\
5   & 768, 2,000, 6,000, 16,000, 30,000\\
7   & 1,000, 3,000, 8,000, 20,000, 50,000\\
\bottomrule
\end{tabular}
\end{table}

\begin{figure}[!htbp]
\centering
\includegraphics[width=0.6\textwidth]{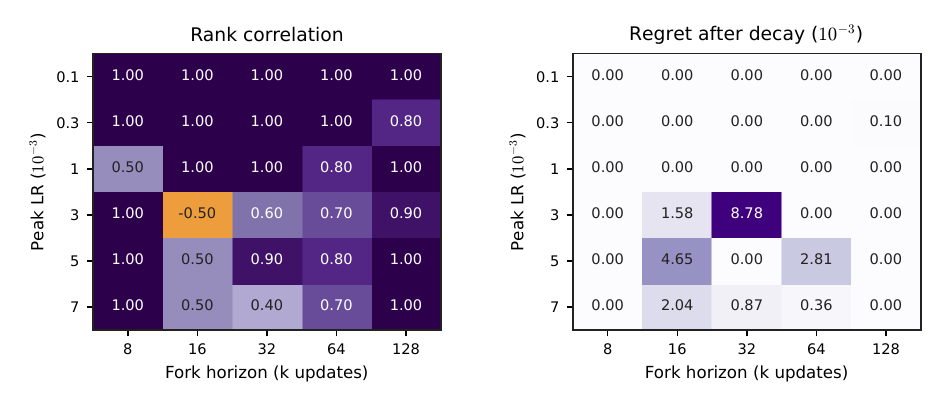}
\vspace{-1em}
\caption{\textbf{Warmup ordering before and after decay.}
Left: Spearman correlation between warmup rankings at the fork and after 5k decay
updates, within each peak-rate family ($1$: identical ordering; $-1$: reversed).
Right: post-decay regret of the warmup selected before decay, in $10^{-3}$ loss
units. Each cell compares the same warmups before and after decay.}
\label{fig:decay-ordering}
\end{figure}

\Cref{fig:decay-ordering,tab:decay-ordering} show that decay mostly
preserves the ordering of warmup durations. The best duration is unchanged in 22 of
30 fork--family cases, and in 5 of 6 families at the latest fork. Where
it changes, at intermediate forks and higher peak rates, decay favors a
longer warmup. Warmup--stable comparisons are therefore a reliable guide to warmup choice under decay.

\begin{table}[!htbp]
\centering
\setlength{\tabcolsep}{10pt}
\caption{\textbf{Ordering agreement and selection after decay.}
Preserved pairs have the same strict loss ordering before and after decay. ``Same best'' counts peak-rate families whose best warmup is unchanged by decay. Mean regret weights families equally; maximum regret is the largest family regret. Both are in $10^{-3}$ loss units.}
\label{tab:decay-ordering}
\small
\begin{tabular}{rrrrrr}

Fork (k) & Branches & Preserved pairs & Same best & Mean regret & Max regret\\
\midrule
8   & 18 & 19/20 & 6/6 & 0.00 & 0.00\\
16  & 22 & 27/31 & 3/6 & 1.38 & 4.65\\
32  & 29 & 50/56 & 4/6 & 1.61 & 8.78\\
64  & 29 & 49/56 & 4/6 & 0.53 & 2.81\\
128 & 30 & 57/60 & 5/6 & 0.02 & 0.10\\
\bottomrule
\end{tabular}
\end{table}

\subsection{Horizon and peak-rate dependence with Muon}
\label{app:muon}

We repeat the warmup sweep with Muon on the 100M parameter model
\citep{jordan2024muon,liu2025muon}. Muon updates the hidden attention and MLP
matrices; auxiliary AdamW updates the embeddings, output head, and normalization
weights. We vary only Muon's peak learning rate and warmup duration, keeping the auxiliary AdamW learning rate and warmup fixed across the sweep. Both groups keep
their own learning rate constant after warmup. \Cref{tab:muon-protocol} gives
the optimizer settings and run coverage.

\Cref{fig:muon-warmup,tab:muon-minima} show the same dependence on horizon and peak rate as with AdamW. At the lowest peak rate the best warmup stays short and the loss is nearly flat across short durations. At higher rates it grows with the horizon. The preferred durations and learning-rate scales differ from AdamW, but the warmup tradeoff carries over to Muon.

\begin{table}[!htbp]
\centering
\caption{\textbf{Muon protocol and controls.}
Model: 100M; seed 0; 128k updates. All remaining data, batch, precision,
clipping, and evaluation settings match
\cref{tab:training-recipe,tab:architectures}.}
\label{tab:muon-protocol}
\small
\begin{tabular}{lp{0.55\textwidth}}

Setting & Value\\
\midrule
Muon parameters & Hidden matrices: 59,013,120\\
Auxiliary AdamW parameters & Embeddings, output head, norms: 41,104,000\\
Muon momentum / Nesterov & 0.95 / enabled\\
Newton--Schulz iterations & 5\\
Muon update scaling &
$0.2\sqrt{\max(d_{\rm out},d_{\rm in})}$ after approximate orthogonalization\\
Muon peak LR ($10^{-3}$) & 0.1, 0.2, 0.5, 1, 2\\
Muon warmups (updates) & 0, 512, 2,000, 8,000, 16,000, 50,000\\
Fixed auxiliary peak LR / warmup & $10^{-3}$ / 2,000 updates\\
Auxiliary $(\beta_1,\beta_2)$ / $\epsilon$ & $(0.9,0.99)$ / $10^{-6}$\\
Weight decay / warmup shape & 0 for both groups / linear for each group\\
Runs & 30: 6 warmups at each of 5 Muon peak rates\\
\bottomrule
\end{tabular}
\end{table}

\begin{table}[!htbp]
\centering
\setlength{\tabcolsep}{15pt}
\caption{\textbf{Best measured Muon warmup (updates).}
Columns give the training horizon; only completed runs with $W<T$ are eligible.
Ties favor shorter warmup. The loss curves in \cref{fig:muon-warmup} show how
sharply each minimum is determined.}
\label{tab:muon-minima}
\small
\begin{tabular}{rrrrrr}

Muon peak LR ($10^{-3}$) & 8k & 16k & 32k & 64k & 128k\\
\midrule
0.1 & 0     & 512   & 512   & 512    & 2,000\\
0.2 & 512   & 2,000 & 2,000 & 8,000  & 8,000\\
0.5 & 512   & 2,000 & 8,000 & 8,000  & 8,000\\
1   & 512   & 2,000 & 8,000 & 16,000 & 16,000\\
2   & 2,000 & 2,000 & 8,000 & 16,000 & 50,000\\
\bottomrule
\end{tabular}
\end{table}

\begin{figure}[!htbp]
\centering
\includegraphics[width=\textwidth]{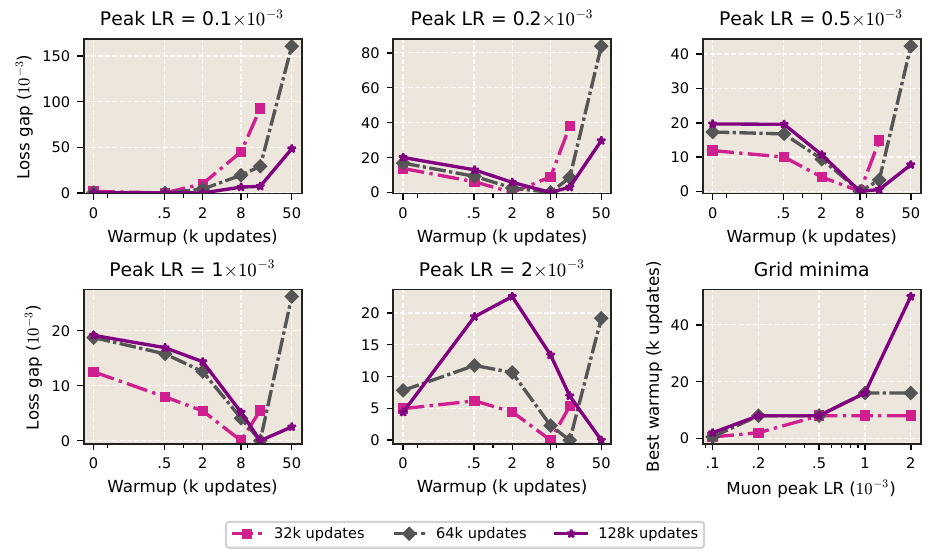}
\caption{\textbf{Warmup dependence with Muon.}
Loss gaps are relative to the best tested warmup at each horizon and Muon peak
rate, using only $W<T$ from the 30 completed runs.}
\label{fig:muon-warmup}
\end{figure}

\subsection{Horizon dependence under different warmup shapes}
\label{app:warmup-shapes}

We test whether warmup duration retains its horizon and peak-rate dependence
within each warmup shape. On the 60M parameter model, we vary duration under linear,
half-cosine, and concave-quadratic warmup, followed by a constant peak rate.
The data, initialization seed, optimizer, and batch settings match the base grid.
\Cref{tab:shape-schedules} defines the schedules and sweep;
\cref{fig:shape-schedules} shows their learning-rate profiles.

For a warmup multiplier $m$, the accumulated learning rate can be written as
\[
 \tau_m=T-c_mW,
 \qquad
 c_m=1-\int_0^1m(u)\,\mathrm du.
\]
Substituting this quantity into \cref{eq:absolute-law}, the same interior balance
as in \cref{sec:law} gives, for $w_0\ll W_*\ll T$,
\[
 W_*\approx
 \left(\frac{sK}{c_mAp}\right)^{1/(s+1)}T^\beta,
 \qquad
 \beta=\frac{p+1-q}{s+1}.
\]
Holding the other law coefficients fixed, a smaller $c_m$ therefore predicts
longer warmup with the same exponent of $T$. Linear and half-cosine warmup have
the same accumulated learning rate. Concave-quadratic warmup has a smaller
progress penalty per warmup step, which shifts the balance toward longer
durations. The persistent-error coefficients can also depend on warmup shape.

\Cref{fig:warmup-shapes,tab:shape-minima} compare durations across horizons
within each shape. The shift toward longer warmup is clearest at the highest
peak rate, while lower rates favor shorter durations. Linear and half-cosine
warmup show similar qualitative trends. At the highest peak rate, the
concave-quadratic optimum continues to move toward longer durations at the
latest horizon, consistent with its smaller progress penalty.

\begin{table}[!htbp]
\centering
\setlength{\tabcolsep}{15pt}
\caption{\textbf{Warmup shapes and design.}
Here $x=t/W$ during warmup, $\eta_t=\eta m(x)$, and $m=1$ after warmup.
The last columns use continuous integrals, with $T\ge W$.
Model: 60M; AdamW; peak LRs $\{0.5,1,5\}\times10^{-3}$;
nonlinear warmups $\{500,1000,5000,15000,50000\}$; horizon 128k.}
\label{tab:shape-schedules}
\small
\begin{tabular}{lccc}

Shape & $m(x)$ & $\int_0^1m(x)\,\mathrm dx$
& $\tau=\int_0^T\eta_t/\eta\,\mathrm dt$\\
\midrule
Linear & $x$ & $1/2$ & $T-W/2$\\
Half-cosine & $(1-\cos(\pi x))/2$ & $1/2$ & $T-W/2$\\
Concave quadratic & $2x-x^2$ & $2/3$ & $T-W/3$\\
\bottomrule
\end{tabular}
\end{table}

\begin{figure}[!htbp]
\centering
\includegraphics[width=0.50\textwidth]{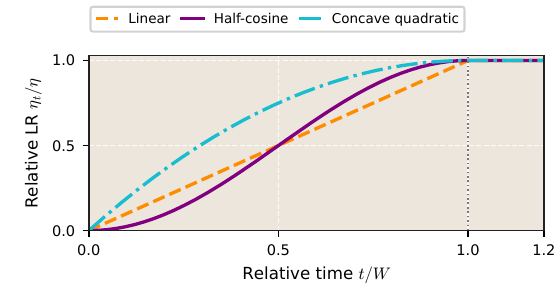}
\caption{\textbf{Warmup learning-rate profiles.}
Relative learning rate $\eta_t/\eta$ against relative time $t/W$.
All shapes reach the same peak at the end of warmup and remain constant
afterward.}
\label{fig:shape-schedules}
\end{figure}

\begin{figure}[!htbp]
\centering
\includegraphics[width=0.9\textwidth]{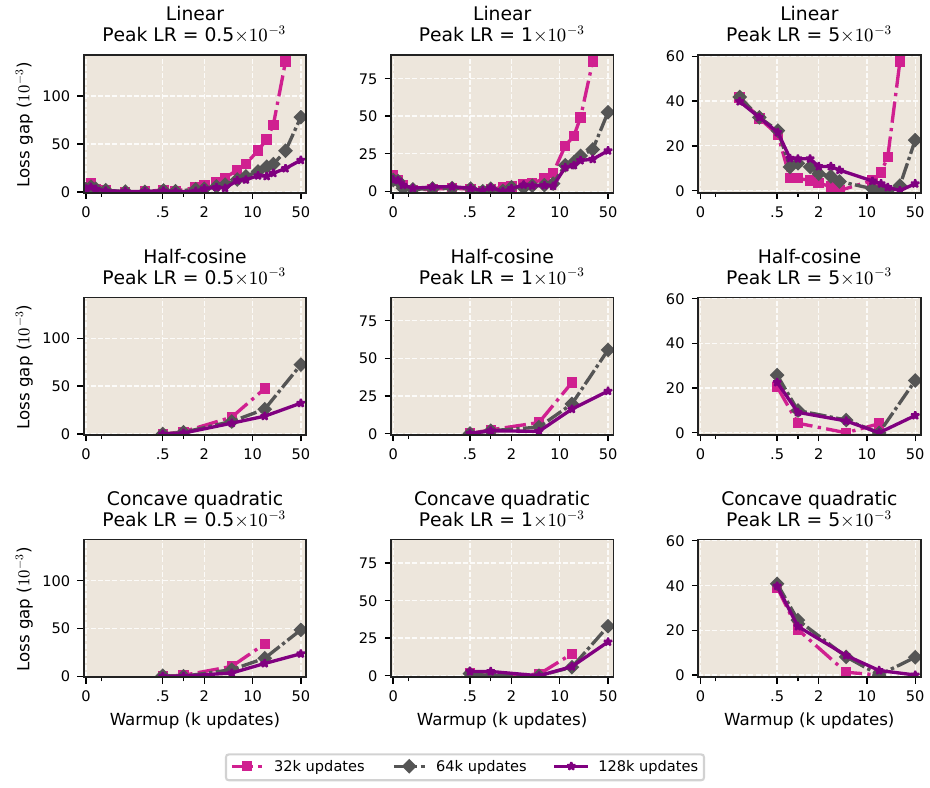}
\caption{\textbf{Horizon-dependent warmup under different shapes.}
Each panel compares durations within a fixed shape and peak rate. Rows show
linear, half-cosine, and concave-quadratic warmup; columns show peak learning
rates. Loss gaps are relative to the best tested duration within each shape,
peak rate, and horizon, restricted to $W<T$. Each shape uses its own tested
duration grid; linear curves follow the base-grid eligibility rule in
\cref{app:observations}. Lines connect measured candidates.}
\label{fig:warmup-shapes}
\end{figure}

\begin{table}[!htbp]
\centering
\setlength{\tabcolsep}{10pt}
\caption{\textbf{Best measured warmup across horizons within each shape (updates).}
Each row tracks the loss-minimizing duration for a fixed shape and peak rate.
Columns give the training horizon; only $W<T$ is eligible, and exact ties favor
shorter warmup.}
\label{tab:shape-minima}
\small
\begin{tabular}{lrrrrrr}

Shape & Peak LR ($10^{-3}$) & 8k & 16k & 32k & 64k & 128k\\
\midrule
Linear & 0.5 & 256 & 256 & 256 & 384 & 768\\
Linear & 1 & 128 & 128 & 768 & 768 & 1,500\\
Linear & 5 & 768 & 2,000 & 4,000 & 20,000 & 30,000\\
Half-cosine & 0.5 & 500 & 500 & 500 & 500 & 500\\
Half-cosine & 1 & 500 & 500 & 500 & 500 & 500\\
Half-cosine & 5 & 1,000 & 1,000 & 5,000 & 15,000 & 15,000\\
Concave quadratic & 0.5 & 500 & 500 & 500 & 1,000 & 500\\
Concave quadratic & 1 & 500 & 500 & 1,000 & 1,000 & 5,000\\
Concave quadratic & 5 & 1,000 & 5,000 & 15,000 & 15,000 & 50,000\\
\bottomrule
\end{tabular}
\end{table}

\section{Additional selection results}
\label{app:family-metrics}
\subsection{Direct warmup-scaling baseline}
\label{app:direct-duration-scaling}
A simpler alternative is to extrapolate the measured optimal durations
directly. We fit
\begin{equation}
 \widehat W(T)=\min\{T,\;c(T/32\mathrm{k})^\alpha\}
\label{eq:direct-policy}
\end{equation}
and map the prediction to the nearest available warmup. This baseline tests
whether modeling the loss surface provides information beyond the observed
duration trend.

\begin{table}[!htbp]
\centering
\setlength{\tabcolsep}{15pt}
\caption{\textbf{Selection results including the direct warmup-scaling baseline.}
All methods are evaluated on the same retained candidate sets. For the tuned rules, the alternating column would measure one checkpoint ahead rather than extrapolation, so it is omitted.}
\label{tab:learned-controls}
\small
\begin{tabular}{@{}lcccc@{}}
\toprule
Selector & Parameters & Alternating & Four horizons & 128k\\
\midrule
Absolute fit          & 6 & $2.05 \pm 1.61$          & $2.26 \pm 0.66$          & 3.05\\
Difference fit        & 5 & $1.45 \pm 2.22$          & $\mathbf{1.58} \pm 0.77$ & 2.61\\
Direct warmup scaling & 2 & $\mathbf{1.34} \pm 4.94$ & $1.99 \pm 0.69$          & 2.86\\
Fixed best duration & 0 & -- & $3.00\pm1.75$ & 3.87\\
Fixed best fraction & 0 & -- & $1.73\pm0.28$ & \textbf{2.06}\\
1k target             & 0 & $15.47 \pm 10.80$        & $13.79 \pm 0.22$         & 13.56\\
10\% target           & 0 & $13.39 \pm 4.10$         & $12.58 \pm 0.90$         & 11.72\\
\bottomrule
\end{tabular}
\end{table}

With the full warmup grid, extrapolating the measured optimum directly is competitive with the loss-law fits, as are the tuned rules of \Cref{app:calibration}, because the grid supplies accurate pairs of horizon and optimal warmup. That information is unavailable in the three-run setting, where all three rules incur several times the regret of the absolute fit (\Cref{tab:practical-extrapolation-complete}). Their anchor is restricted to the three fitted durations, whereas the loss law uses every post-warmup checkpoint and can recommend durations between and beyond them.

\subsection{Near-optimal measured warmups}
\label{app:near-optimal-warmups}

The best tested warmup can make the optimum appear more sharply determined
than the measured loss differences imply. For each family and target horizon,
we therefore also consider the tested durations whose loss is within
$\epsilon$ of the best observed loss,
\[
 \mathcal S_\epsilon(T)
 =
 \left\{
 W\in\mathcal G_T:
 L(W,T)-\min_{u\in\mathcal G_T}L(u,T)\le\epsilon
 \right\}.
\]
The set contains only measured warmups; gaps between tested durations are not
filled by interpolation. \Cref{tab:near-optimal-warmups} shows that several measured durations are
often effectively competitive. The sets expand as the loss tolerance increases. This helps explain why two selectors can choose visibly different
warmups while incurring similar regret.

\begin{table}[!htbp]
\centering
\setlength{\tabcolsep}{15pt}
\caption{\textbf{Near-optimal measured warmups.}
The table summarizes all retained family--horizon comparisons used by the
later-horizon benchmark. ``Multiple'' counts comparisons with more than one
tested duration within the stated loss tolerance.}
\label{tab:near-optimal-warmups}
\small
\begin{tabular}{rrrr}

Tolerance ($10^{-3}$) &
Multiple &
Median count &
Median fraction of grid \\
\midrule
1 & 62 / 132 & 1 & 16.7\%\\
2 & 92 / 132 & 2 & 20.0\%\\
5 & 124 / 132 & 4 & 40.8\%\\
\bottomrule
\end{tabular}
\end{table}

\section{Extrapolation from short runs}
\label{app:additional}
\subsection{Three-run extrapolation}
\label{app:practical-extrapolation}\label{app:three-run}
For each model-size and peak-rate family, we fit the loss law using three
separated warmup trajectories and checkpoints shared after the longest
warmup has ended. The fitted coefficients are frozen before evaluating later horizons.

\begin{table}[!htbp]
\centering
\setlength{\tabcolsep}{15pt}
\caption{\textbf{Three-run extrapolation protocol.}}
\label{tab:three-run-protocol}
\small
\begin{tabular}{ll}
Item & Choice \\\midrule
Target fitting warmups & 2k, 8k, 16k updates \\
Fit horizons & 20k and 32k updates \\
Target horizons & 50k, 75k, 100k, 128k updates \\
Loss-law fits & absolute and difference forms \\
Primary practical predictor & absolute fit \\
Target candidates & same retained warmup grid as the full-grid benchmark \\
\bottomrule
\end{tabular}
\vspace{-1em}
\end{table}

\begin{table}[!htbp]
\centering
\setlength{\tabcolsep}{15pt}
\caption{\textbf{Complete three-run comparison.}
Mean regret on the same later-horizon candidate grids as the main-text
experiment, in $10^{-3}$ loss units.}
\label{tab:practical-extrapolation-complete}
\small
\begin{tabular}{lrrr}
Selector & Through 20k (29 families) & Through 32k (33 families)\\\midrule
Absolute fit & $\boldsymbol{2.83} \pm 0.16$ & $\boldsymbol{2.51} \pm 0.27$  \\
Difference fit & $4.22 \pm 0.29$ & $2.62 \pm 0.36$  \\
Direct warmup scaling & $9.09 \pm 1.25$ & $7.08 \pm 0.82$  \\
Fixed best duration & $9.31\pm1.21$ & $7.61\pm1.12$\\
Fixed best fraction & $11.72\pm0.82$ & $8.47\pm0.62$\\
1k target & $14.28 \pm 0.30$ & $13.79 \pm 0.22$  \\
10\% target & $13.42 \pm 0.96$ & $12.58 \pm 0.90$ \\
\bottomrule
\end{tabular}
\end{table}

\subsection{Recommendations beyond the observed warmups}
\label{app:pilot-extrapolation-range}

\begin{figure}[!htbp]
\centering
\includegraphics[width=0.8\textwidth]{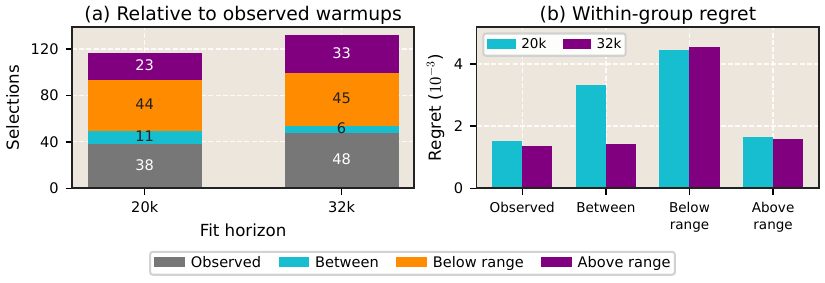}
\caption{\textbf{Where three-run warmup recommendations fall relative to the
observed warmups.}
Left: number of family--horizon selections that coincide with an observed fitting
warmup, fall between observed warmups, or lie below or above their range.
Right: mean measured regret within each group. All selections are frozen
before target losses are scored.}
\label{fig:pilot-support}
\end{figure}

The three-run predictor is not restricted to selecting one of the warmup
durations used for fitting. We group each frozen recommendation according to
whether it coincides with an observed fitting warmup, lies between two observed
warmups, or falls below or above the observed range. Every recommendation is
then scored using the measured target-horizon trajectory at the selected duration.

\Cref{fig:pilot-support} shows that many selections use durations that were not used for fitting. In particular, recommendations above the observed warmup range remain low-regret, so the fitted loss surface can support useful extrapolation in warmup duration as well as in training horizon.

\subsection{Sensitivity to the choice of fitting runs}
\label{app:pilot-placement}

We next test how the warmup range covered by the three fitting runs affects
later-horizon selection. The alternatives either span the available warmup
range broadly or concentrate all three runs near its short, middle, or long
end. We also evaluate randomly chosen triplets. All alternatives are defined
from schedule metadata alone, without using their later-horizon losses.

For the non-default designs, target positions are specified in normalized
$\log(W+w_0)$ coordinates over the available warmup range. The wide design
targets the two ends and midpoint; the clustered designs place all three
targets near the short, middle, or long end. The random baseline samples
admissible triplets uniformly without replacement. All designs use the same fit horizon and fitting procedure and require shared post-warmup
observations. Changing the longest observed warmup also changes the shared checkpoint
window, so this comparison varies both the warmup durations and the available checkpoint window.

\Cref{fig:pilot-placement,tab:pilot-placement} show that the practical
predictor is not tied to the exact default triplet. Broad log-space coverage
performs similarly or better on this retrospective benchmark, and random
triplets remain competitive on average. Concentrating all three fitting warmups near
the long end is substantially less reliable. These results support choosing fitting runs whose warmups span the relevant range.

\begin{figure}[!htbp]
\centering
\includegraphics[width=0.8\textwidth]{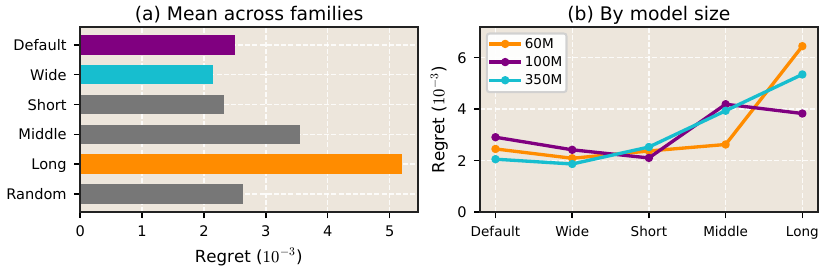}
\caption{\textbf{Sensitivity to the warmup coverage of three fitting runs.}
Left: mean four-horizon regret across families. Right: the same comparison
split by model size. Random-triplet results are first averaged within family,
so families with more admissible triplets receive no additional weight.}
\label{fig:pilot-placement}
\end{figure}

\begin{table}[!htbp]
\centering
\setlength{\tabcolsep}{15pt}
\caption{\textbf{Choice of three fitting runs and selection results.}
All fits use shared post-warmup checkpoints through 32k, with at least four
shared checkpoints. Target positions are normalized coordinates in $\log(W+w_0)$ over each
family's available warmup range. The default row uses the exact warmup
durations from the main experiment. Family values first average regret over
the four target horizons; random triplets are averaged within family before
aggregation. Regret is in $10^{-3}$ loss units.}
\label{tab:pilot-placement}
\footnotesize
\begin{tabular}{llrrr}

Fitting-run design & Log-range positions / rule &
\shortstack{Mean\\regret} & \shortstack{Median\\family} & \shortstack{Worst\\family} \\
\midrule
Default triplet & Main-experiment runs & 2.51 & 1.67 & 8.88\\
Wide coverage & $0,\ 0.5,\ 1$ & 2.15 & 1.47 & 6.80\\
Short-end cluster & $0,\ 0.1,\ 0.2$ & 2.32 & 1.47 & 12.34\\
Middle cluster & $0.4,\ 0.5,\ 0.6$ & 3.55 & 2.13 & 13.44\\
Long-end cluster & $0.8,\ 0.9,\ 1$ & 5.19 & 1.67 & 49.95\\
Random triplets & Up to 8 admissible triplets & 2.63 & 2.21 & 6.72\\
\bottomrule
\end{tabular}
\end{table}

\subsection{Sensitivity to the number of fitting runs}
\label{app:pilot-count}

The three-run experiment asks how little fitting data can support useful extrapolation. We vary the number of observed warmup trajectories at a fixed 32k fit horizon. Unlike the main three-run protocol (\cref{app:three-run}), which uses only checkpoints shared by all three fitting runs, this sweep uses all post-warmup observations from each selected trajectory, so the number of fitting runs can vary across the available grid. \Cref{fig:pilot-count} shows a clear benefit from moving beyond very sparse coverage, followed by smaller, nonmonotonic changes as the observed grid becomes denser. The effect depends on which trajectories are added, but the same pattern appears across model sizes. Three fitting runs are thus a useful low-budget operating point. \Cref{tab:pilot-count-reference} records the corresponding reference values.

\begin{figure}[!htbp]
\centering
\includegraphics[width=0.8\textwidth]{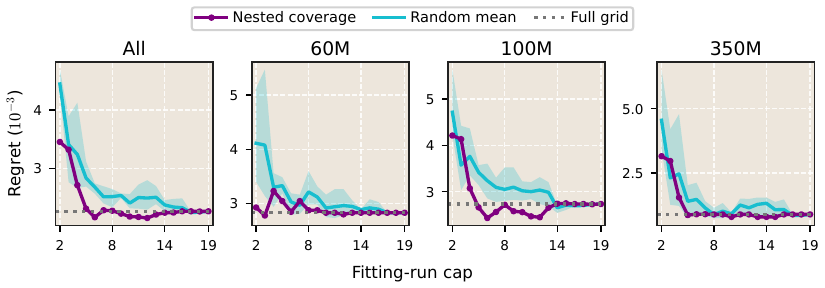}
\caption{\textbf{Selection as the number of fitting runs increases.}
At a cap $n$, each family uses up to $n$ available trajectories.
Fitting uses checkpoints through 32k. Purple shows the nested coverage
order; cyan shows the mean of three fixed random orders, with shading spanning
their aggregate curves. Dotted lines show the full-grid reference.
The same 33 families are used throughout; their available counts range from
3 to 19. Regret averages the four later target horizons.}
\label{fig:pilot-count}
\end{figure}

\begin{table}[!htbp]
\centering
\setlength{\tabcolsep}{15pt}
\caption{\textbf{Reference fits for varying the number of runs.}
The main three-run protocol uses shared post-warmup checkpoints, while the
count sweep uses all post-warmup observations from each selected trajectory.
The full-grid row reproduces the corresponding full-prefix benchmark.
Regret is in $10^{-3}$ loss units.}
\label{tab:pilot-count-reference}
\small
\begin{tabular}{lclr}

Reference & Fitting runs & Checkpoints used & Mean regret \\
\midrule
Main three-run fit & 3 & Shared post-warmup & 2.51\\
Count-sweep fit & 3 & All selected post-warmup & 3.32\\
Full-grid fit & All available & All post-warmup & 2.26\\
\bottomrule
\end{tabular}
\end{table}

\subsection{Fit horizon and short-run budget}
\label{app:pilot-budget}

\begin{figure}[!htbp]
\centering
\includegraphics[width=0.7\textwidth]{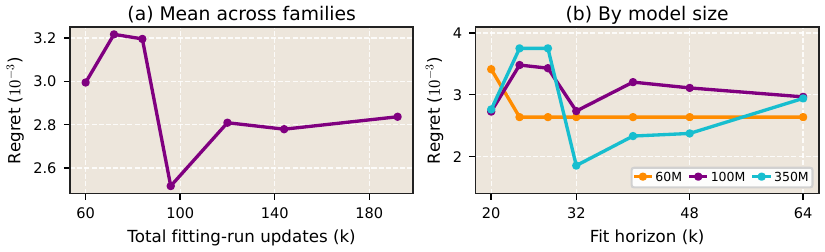}
\caption{\textbf{Sensitivity to fit horizon and short-run budget.}
The same three fitting trajectories are used at every fit horizon, from 20k to 64k.
All comparisons use the same 29 families and the 128k target horizon.
Total fitting-run updates equal three times the fit horizon, excluding the target run. Left: mean final-horizon
regret against the total updates observed across the fitting runs. Right: the same
comparison by model size as the fit horizon increases.}
\label{fig:pilot-budget}
\end{figure}

We next vary how long the same three fitting trajectories are observed before
the fitted coefficients are frozen. All fit horizons are evaluated on the same
families and at the same final target horizon, so the comparison changes
only the amount of short-run data available to the fit.

\Cref{fig:pilot-budget} shows that a longer fit horizon does not produce a
monotonic improvement. The error changes most when the available prefix is
still short and then remains in a similar range across later fit horizons. Simply extending every fitting trajectory is therefore not guaranteed to improve
the final selection.

\subsection{Long-horizon extrapolation protocol}
\label{app:long-horizon-extrapolation}\label{app:long-horizon}
The long-horizon experiment repeats the same procedure with the 100M model
at batch size 128 and longer target horizons. For each peak rate, we fit the
loss law independently using three fitting runs; the frozen absolute fit is minimized
continuously over warmup duration at each target horizon.

\begin{table}[!htbp]
\centering
\setlength{\tabcolsep}{15pt}
\caption{\textbf{Long-horizon protocol.}}
\label{tab:long-horizon-setup}
\small
\begin{tabular}{ll}
Item & Setting\\\midrule
Model / batch / sequence length & 100M / 128 sequences / 256 tokens\\
Fit horizons & 50k and 75k updates\\
Target horizons & 250k, 500k, and 1M updates\\
Fitting warmups, LR $0.5\times10^{-3}$ & 0, 512, 8k warmup updates\\
Fitting warmups, LR $2\times10^{-3}$ & 8k, 16k, 32k warmup updates\\
Fitting warmups, LR $5\times10^{-3}$ & 12k, 16k, 32k warmup updates\\
Short-warmup baseline & 1k warmup updates at every peak rate\\
Evaluation & validation loss vs.\ 1k and 10\% warmup baselines\\
\bottomrule
\end{tabular}
\end{table}

The predicted schedule is trained from initialization. Because these targets use selected schedules instead of a dense warmup sweep, we report validation loss directly, not regret to an oracle warmup.

\subsection{Model-scale extrapolation}
\label{app:model-scale-extrapolation}

Our focus is the horizon dependence of warmup and extrapolation to longer training runs at a fixed model size. We also observe similar behavior across model scales: a larger model at a lower peak learning rate can exhibit a similar dependence of preferred warmup on the training horizon to a smaller model at a higher rate. 

\begin{figure}[!htbp]
\centering
\includegraphics[width=0.8\textwidth]{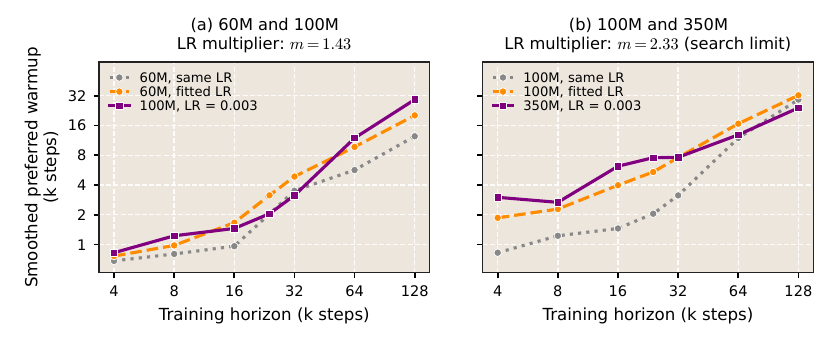}
\caption{\textbf{Warmup behavior across model scales.} The larger model uses peak LR $0.003$; the smaller model uses the same LR (gray) or a fitted higher LR (orange). We average $\log(W+32)$, with $W$ in steps, over seed-0 warmup sweeps with weights proportional to $\exp(-L/0.002)$, then fit monotonically in log LR. The multiplier minimizes weighted squared log-warmup differences across horizons. The 100M-to-350M fit reaches the LR limit, $0.007$.}
\label{fig:scale-warmup-matching}
\end{figure}

Figure~\ref{fig:scale-warmup-matching} illustrates this correspondence. We compare smoothed estimates of the preferred warmup and fit one learning-rate multiplier $m$ per model pair, shared across horizons, with $\eta_{\mathrm{small}}=m\eta_{\mathrm{large}}$. This shift improves the overall agreement, although differences remain at individual horizons. This correspondence suggests that a model-size correction, such as a rescaling of the effective peak learning rate, could extend horizon-based warmup predictions across scales.

\section{Sensitivity of the loss law}
\label{app:robustness}
\subsection{Warmup exponent}
\label{app:fixed-exponent}
We compare the free warmup exponent with a fixed-$s$ ablation at $s=1/2$.

\begin{table}[!htbp]
\centering
\setlength{\tabcolsep}{15pt}
\caption{\textbf{Sensitivity to the warmup exponent.}
RMSE and regret are in $10^{-3}$ loss units. Lower values are better.}
\label{tab:fixed-exponent}
\small
\begin{tabular}{lrr}
Metric & Free $s$ & Fixed $s=1/2$ \\\midrule
Full-data absolute-fit median RMSE & $\boldsymbol{4.92}$ & $5.94$ \\
Held-out absolute-fit median RMSE & $\boldsymbol{4.98}$ & $5.96$ \\
Four-horizon absolute-fit regret & $2.26 \pm 0.66$ & $\boldsymbol{2.00} \pm 0.35$ \\
Three-run absolute-fit regret, 20k (29 families) & $\boldsymbol{2.83} \pm 0.16$ & $2.87 \pm 0.32$ \\
Three-run absolute-fit regret, 32k & $\boldsymbol{2.51} \pm 0.27$ & $2.64 \pm 0.23$ \\
Four-horizon difference-fit regret & $\boldsymbol{1.58} \pm 0.77$ & $\boldsymbol{1.58} \pm 0.67$ \\
Three-run difference-fit regret, 20k (29 families) & $\boldsymbol{4.22} \pm 0.29$ & $4.85 \pm 0.32$ \\
Three-run difference-fit regret, 32k & $\boldsymbol{2.62} \pm 0.36$ & $2.86 \pm 0.21$ \\
\bottomrule
\end{tabular}
\end{table}

The free exponent improves fit quality, while selection performance remains
similar across the reported comparisons.

\subsection{Zero-warmup offset}
\label{app:offset-sensitivity}
We refit the model over a range of fixed offsets in $W+w_0$.

\begin{table}[H]
\centering
\caption{\textbf{Sensitivity to the fixed warmup offset.}
Each row refits all families with the indicated global offset. Here, later-horizon regret averages the dense 33k--128k grid, rather than the four target horizons used in the main comparison. Regret is in $10^{-3}$ loss units.}
\label{tab:offset-difference}\label{tab:offset-absolute}
\small
\begin{tabular}{rrrrr}
& \multicolumn{2}{c}{Difference fit} & \multicolumn{2}{c}{Absolute fit} \\
$w_0$ (updates) & Later-horizon regret & Final-horizon regret & Later-horizon regret & Final-horizon regret \\\midrule
8 & $1.50$ & $2.58$ & $2.32$ & $3.54$ \\
16 & $1.84$ & $2.54$ & $2.09$ & $2.83$ \\
32 & $1.85$ & $2.61$ & $2.49$ & $3.05$ \\
64 & $1.61$ & $2.67$ & $2.41$ & $3.16$ \\
128 & $1.68$ & $2.92$ & $2.28$ & $3.06$ \\
\bottomrule
\end{tabular}
\end{table}

The offset $w_0$ only keeps the law finite at $W=0$, so we fix it rather
than fit it. \Cref{tab:offset-difference} refits every family with offsets from 8 to 128
updates. Across this 16-fold range, later-horizon regret changes by at
most $0.4\times10^{-3}$ for either fitting form, so the fits are
insensitive to the choice; we use $w_0=32$ updates throughout.

\subsection{Sharing the persistent exponent sum across peak rates}
\label{app:shared-exponent-sum}

The local product construction in Appendix~\ref{app:core} motivates
$q_\eta+s_\eta=\gamma$ at a fixed reference point, while the empirical law
fits $q$ and $s$ independently in every peak-rate family. We test an
intermediate constraint by jointly fitting all peak-rate families at each
model size with
\[
 q_\eta+s_\eta=\gamma_N,
\]
where the shared exponent sum $\gamma_N$ is learned from the same fitting
data. The remaining parameters stay family-specific. The joint objective averages
the mean Huber loss equally across peak-rate families and retains the original
parameter bounds. For $J$ peak-rate
families, this reduces the parameter count from $6J$ to $5J+1$.

We evaluate the constraint in two settings. Full-data fits measure descriptive
fit quality. Separately, fits restricted to the early training
window learn both the family-specific parameters and the shared exponent sum
from those prefixes, freeze them, and select warmup at the later target
horizons.

\begin{figure}[!htbp]
\centering
\includegraphics[width=\textwidth]{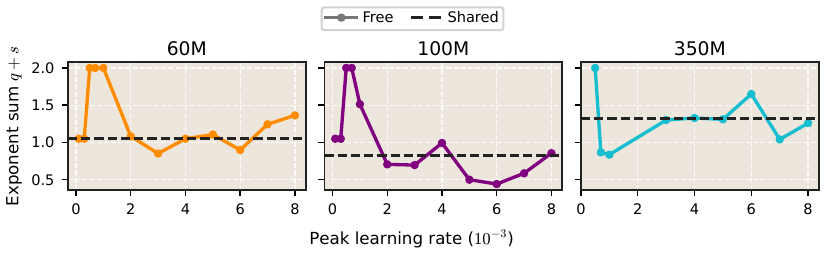}
\caption{\textbf{Freely fitted and shared persistent exponent sums.}
Points show $q_\eta+s_\eta$ from independent full-data fits at each peak
learning rate. Dashed lines show the single sum fitted jointly across all peak-rate families of the same
model size. Points at 2.0 lie on the parameter bound ($q=s=1$).}
\label{fig:shared-exponent-values}
\end{figure}

\Cref{fig:shared-exponent-values} compares the freely fitted sums $q_\eta+s_\eta$
with the shared value $\gamma_N$. For most moderate and large peak rates
the free sums already lie near the shared value. The largest departures
occur at the smallest peak rates, where several free fits reach the upper
bound $q_\eta+s_\eta=2$, that is, $q_\eta=s_\eta=1$. These exponents are set by the bounds rather
than determined by the data, so we do not read them as a change in the
underlying relaxation exponent.

\Cref{fig:shared-exponent-performance,tab:shared-exponent-sum} show that
sharing the sum preserves most of the descriptive fit quality while reducing the exponent parameters. Its effect on later
warmup selection is mixed across model sizes: it improves some groups and
degrades others. We therefore retain independently fitted exponents in the
primary predictor, while the ablation shows that a much more constrained
cross-rate parameterization captures nearly the same observed loss structure.

\begin{figure}[!htbp]
\centering
\includegraphics[width=0.7\textwidth]{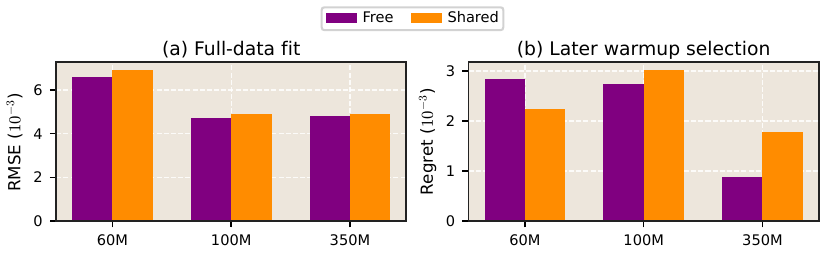}
\caption{\textbf{Effect of sharing $q+s$ across peak learning rates.}
Left: median descriptive RMSE for free and shared-sum fits at each model size.
Right: later-horizon selection regret after fitting only through the
32k fit window. The shared sum is learned from the same fitting data
and is not transferred from the full-data fit.}
\label{fig:shared-exponent-performance}
\end{figure}

\begin{table}[!htbp]
\centering
\setlength{\tabcolsep}{15pt}
\caption{\textbf{Shared-exponent-sum ablation.}
RMSE and regret are in $10^{-3}$ loss units. RMSE and $R^2$ are medians over
peak-rate families. Full-data fit quality is descriptive; later regret averages
families and the four later target horizons after fitting through 32k.}
\label{tab:shared-exponent-sum}
\small
\begin{tabular}{llrrr}

Model & Fit & Full RMSE & Full $R^2$ & Later regret \\
\midrule
60M & Free & \textbf{6.59} & \textbf{0.993} & 2.83\\
60M & Shared & 6.92 & \textbf{0.993} & \textbf{2.24}\\
100M & Free & \textbf{4.71} & \textbf{0.997} & \textbf{2.73}\\
100M & Shared & 4.89 & \textbf{0.997} & 3.02\\
350M & Free & \textbf{4.81} & \textbf{0.997} & \textbf{0.88}\\
350M & Shared & 4.88 & \textbf{0.997} & 1.77\\
\bottomrule
\end{tabular}
\end{table}

\section{Complete fit gallery}
\label{app:atlases}
The plots below show the full-data absolute fit for every analyzed family.
Points are measured validation losses and curves are model predictions at the
displayed horizons.

\label{app:additional}

\begin{figure}[H]
\centering
\includegraphics[width=\textwidth]{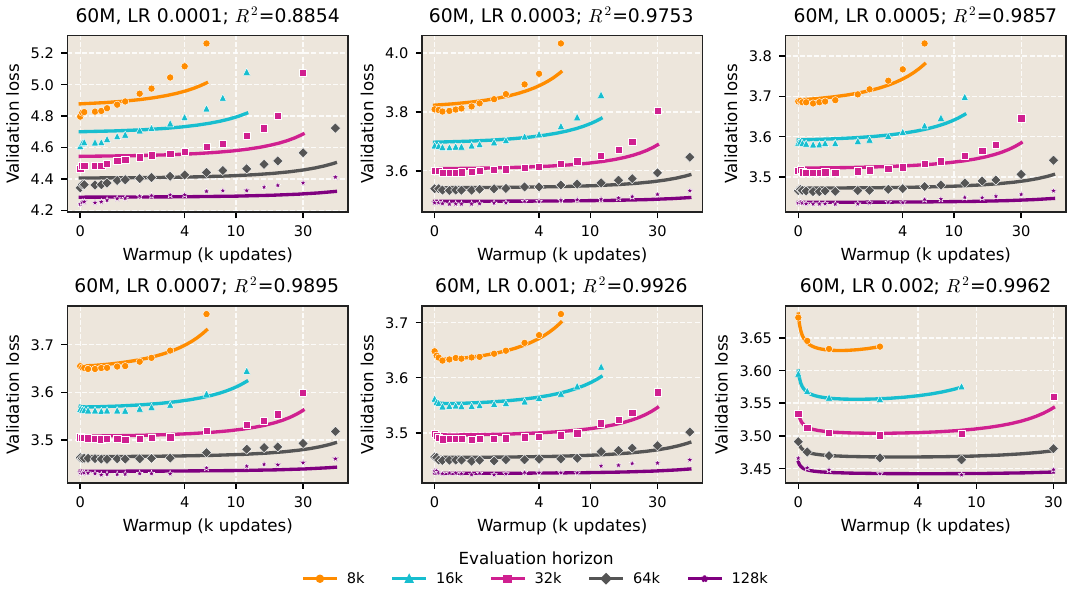}
\caption{Full-data absolute fits: 60M, lower peak learning rates.}
\end{figure}

\begin{figure}[H]
\centering
\includegraphics[width=\textwidth]{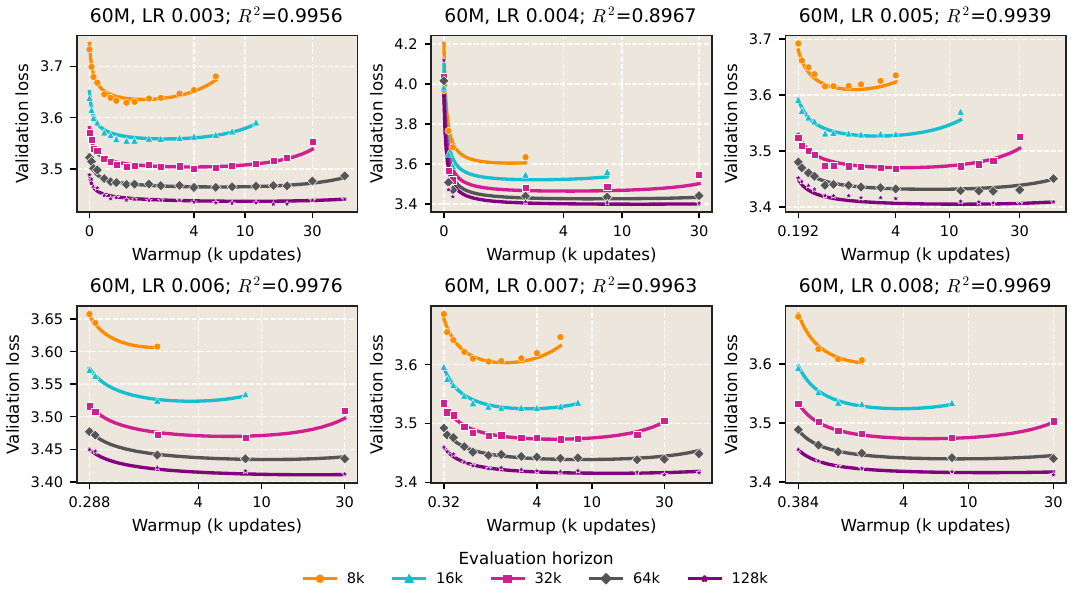}
\caption{Full-data absolute fits: 60M, higher peak learning rates.}
\end{figure}

\begin{figure}[H]
\centering
\includegraphics[width=\textwidth]{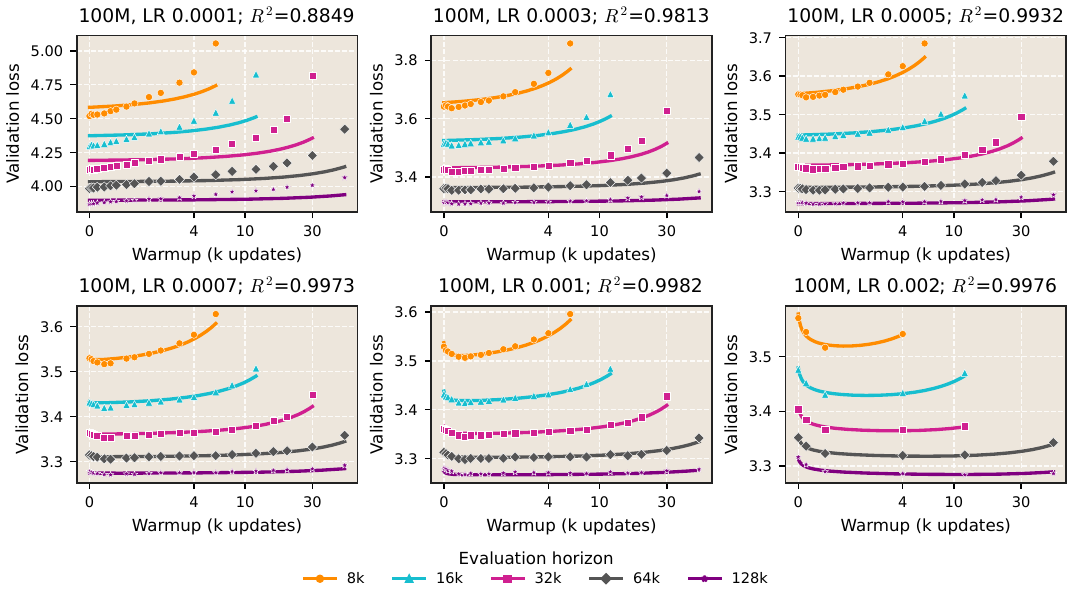}
\caption{Full-data absolute fits: 100M, lower peak learning rates.}
\end{figure}

\begin{figure}[H]
\centering
\includegraphics[width=\textwidth]{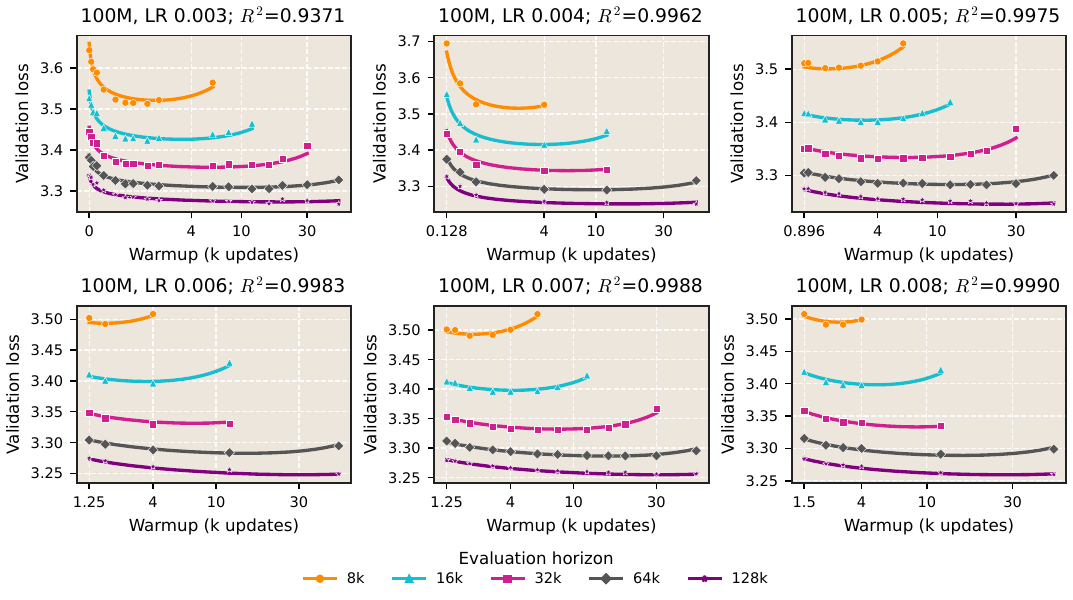}
\caption{Full-data absolute fits: 100M, higher peak learning rates.}
\end{figure}

\begin{figure}[H]
\centering
\includegraphics[width=\textwidth]{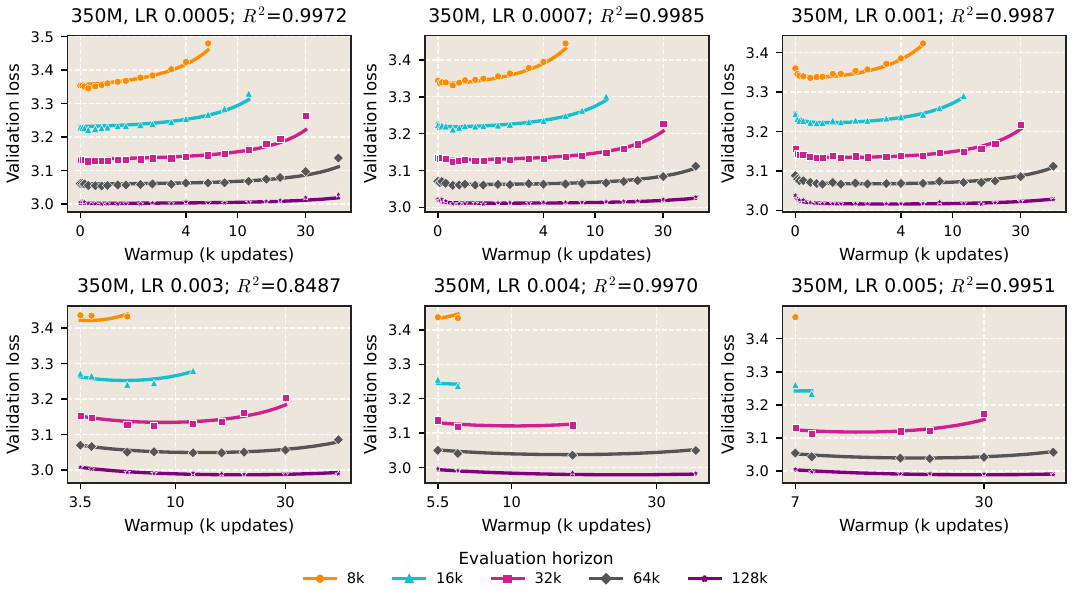}
\caption{Full-data absolute fits: 350M, lower peak learning rates.}
\end{figure}

\begin{figure}[H]
\centering
\includegraphics[width=\textwidth]{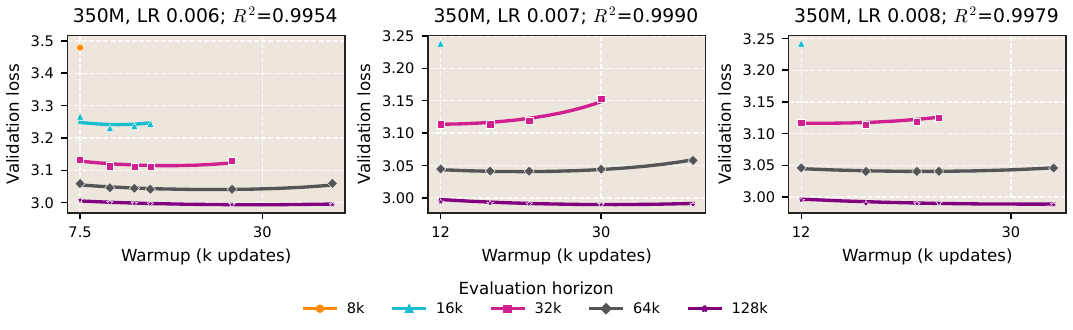}
\caption{Full-data absolute fits: 350M, higher peak learning rates.}
\end{figure}

\section{Related work}
\label{sec:related}
\paragraph{Warmup.}
Learning-rate warmup is widely used in large-batch training and language
models \citep{goyal2017accurate,black2022neox,groeneveld2024olmo}.
Recent large-model recipes use warmups ranging from 500 to 8{,}000 optimizer
steps \citep{grattafiori2024llama3,deepseek2024v3,kimi2025k2}. Prior work has
connected warmup to adaptive-optimizer statistics, update size, curvature,
and early optimization dynamics
\citep{liu2020radam,ma2021adequacy,gilmer2022loss,kalra2024warmup,
kosson2024warmup,alimisis2025warmup,liu2025warmupconvergence}.
Our focus is complementary: we study how the loss-optimal duration changes
with the training horizon and peak learning rate.

\paragraph{Hyperparameter scaling.}
Prior work predicts or transfers hyperparameters such as the learning rate,
batch size, and weight decay across model width, training horizon, and compute
\citep{yang2022mup,bjorck2025horizonlr,filatov2024time,steplaw2025,
zhang2025critical,bergsma2025powerlines}. Warmup can also shift the
learning-rate optimum \citep{filatov2024time} and affect compute-optimal
comparisons \citep{porian2024discrepancies}. Our experiments keep model
size and peak learning rate fixed within each family and ask how warmup
should change as the horizon grows.

\paragraph{Loss models and quadratic theory.}
Foundational scaling laws relate loss to model size, data, and compute
\citep{kaplan2020scaling,hoffmann2022chinchilla}. Schedule-aware
learning-curve models additionally describe accumulated learning rate and
annealing effects \citep{tissue2024annealing,luo2025mpl}.
Quadratic and least-squares models provide explicit analyses of
learning-rate, batch-size, and spectral effects
\citep{jain2018parallel,jain2018accelerating,zou2023benign,zhang2019nqm,
meterez2025seesaw,meterez2026defense}. Power-law spectral assumptions also
lead naturally to algebraic learning curves
\citep{caponnetto2007optimal,bordelon2020spectrum,canatar2021spectral,
bahri2021explaining,bordelon2024dynamical,velikanov2024tight}. These ideas
motivate the quadratic analysis and fitted horizon-dependent loss law used
in this paper.

%% file: references.bib
@article{goyal2017accurate,
  author  = {Goyal, Priya and Doll{\'a}r, Piotr and Girshick, Ross and Noordhuis, Pieter and Wesolowski, Lukasz and Kyrola, Aapo and Tulloch, Andrew and Jia, Yangqing and He, Kaiming},
  title   = {Accurate, Large Minibatch {SGD}: Training {ImageNet} in 1 Hour},
  journal = {arXiv preprint arXiv:1706.02677},
  year    = {2017},
  url = {https://arxiv.org/abs/1706.02677}
}

@article{grattafiori2024llama3,
  author = {Grattafiori, Aaron and Dubey, Abhimanyu and Jauhri, Abhinav and others},
  title = {The {Llama} 3 Herd of Models},
  journal = {arXiv preprint arXiv:2407.21783},
  year = {2024},
  url = {https://arxiv.org/abs/2407.21783}
}

@article{deepseek2024v3,
  author = {{DeepSeek-AI}},
  title = {{DeepSeek-V3} Technical Report},
  journal = {arXiv preprint arXiv:2412.19437},
  year = {2024},
  url = {https://arxiv.org/abs/2412.19437}
}

@article{kimi2025k2,
  author = {{Kimi Team}},
  title = {Kimi {K2}: Open Agentic Intelligence},
  journal = {arXiv preprint arXiv:2507.20534},
  year = {2025},
  url = {https://arxiv.org/abs/2507.20534}
}

@article{meterez2026defense,
  author = {Meterez, Alexandru and Nair, Pranav Ajit and Morwani, Depen and Pehlevan, Cengiz and Kakade, Sham and Damian, Alex},
  title = {A Defense of the Quadratic Model},
  journal = {arXiv preprint arXiv:2607.21716},
  year = {2026},
  url = {https://arxiv.org/abs/2607.21716}
}

@inproceedings{meterez2025seesaw,
  author = {Meterez, Alexandru and Morwani, Depen and Wu, Jingfeng and Oncescu, Costin-Andrei and Pehlevan, Cengiz and Kakade, Sham},
  title = {Seesaw: Accelerating Training by Balancing Learning Rate and Batch Size Scheduling},
  booktitle = {International Conference on Learning Representations (ICLR)},
  year = {2026},
  note = {arXiv:2510.14717},
  url = {https://proceedings.iclr.cc/paper_files/paper/2026/file/187f9ba4cd5a59477305ac712282a1cd-Paper-Conference.pdf}
}

@inproceedings{zhang2025critical,
  author = {Zhang, Hanlin and Morwani, Depen and Vyas, Nikhil and Wu, Jingfeng and Zou, Difan and Ghai, Udaya and Foster, Dean and Kakade, Sham},
  title = {How Does Critical Batch Size Scale in Pre-training?},
  booktitle = {International Conference on Learning Representations (ICLR)},
  year = {2025},
  url = {https://proceedings.iclr.cc/paper_files/paper/2025/hash/a6f14f95d9c9443927638bcd5d917a7a-Abstract-Conference.html}
}

@inproceedings{bergsma2025powerlines,
  author = {Bergsma, Shane and Dey, Nolan and Gosal, Gurpreet and Gray, Gavia and Soboleva, Daria and Hestness, Joel},
  title = {Power Lines: Scaling Laws for Weight Decay and Batch Size in {LLM} Pre-training},
  booktitle = {Advances in Neural Information Processing Systems (NeurIPS)},
  volume = {38},
  year = {2025},
  doi = {10.52202/085713-4171},
  url = {https://proceedings.neurips.cc/paper_files/paper/2025/hash/b5f78a17a94da3e34c935515d1b6adae-Abstract-Conference.html}
}

@inproceedings{black2022neox,
  author  = {Black, Sidney and Biderman, Stella and Hallahan, Eric and Anthony, Quentin and Gao, Leo and Golding, Laurence and He, Horace and Leahy, Connor and McDonell, Kyle and Phang, Jason and Pieler, Michael and Prashanth, Usvsn Sai and Purohit, Shivanshu and Reynolds, Laria and Tow, Jonathan and Wang, Ben and Weinbach, Samuel},
  title   = {{GPT-NeoX-20B}: An Open-Source Autoregressive Language Model},
  year    = {2022},
  booktitle = {Proceedings of BigScience Episode {\#}5 -- Workshop on Challenges {\&} Perspectives in Creating Large Language Models},
  pages = {95--136},
  doi = {10.18653/v1/2022.bigscience-1.9},
  url = {https://aclanthology.org/2022.bigscience-1.9/}
}

@inproceedings{groeneveld2024olmo,
  author  = {Groeneveld, Dirk and Beltagy, Iz and Walsh, Evan and Bhagia, Akshita and Kinney, Rodney and Tafjord, Oyvind and Jha, Ananya Harsh and Ivison, Hamish and Magnusson, Ian and Wang, Yizhong and Arora, Shane and Atkinson, David and Authur, Russell and Chandu, Khyathi Raghavi and Cohan, Arman and Dumas, Jennifer and Elazar, Yanai and Gu, Yuling and Hessel, Jack and Khot, Tushar and Merrill, William and Morrison, Jacob and Muennighoff, Niklas and Naik, Aakanksha and Nam, Crystal and Peters, Matthew E. and Pyatkin, Valentina and Ravichander, Abhilasha and Schwenk, Dustin and Shah, Saurabh and Smith, Will and Strubell, Emma and Subramani, Nishant and Wortsman, Mitchell and Dasigi, Pradeep and Lambert, Nathan and Richardson, Kyle and Zettlemoyer, Luke and Dodge, Jesse and Lo, Kyle and Soldaini, Luca and Smith, Noah A. and Hajishirzi, Hannaneh},
  title   = {{OLMo}: Accelerating the Science of Language Models},
  year    = {2024},
  booktitle = {Proceedings of the 62nd Annual Meeting of the Association for Computational Linguistics (Volume 1: Long Papers)},
  pages = {15789--15809},
  doi = {10.18653/v1/2024.acl-long.841},
  url = {https://aclanthology.org/2024.acl-long.841/}
}

@inproceedings{liu2020radam,
  author    = {Liu, Liyuan and Jiang, Haoming and He, Pengcheng and Chen, Weizhu and Liu, Xiaodong and Gao, Jianfeng and Han, Jiawei},
  title     = {On the Variance of the Adaptive Learning Rate and Beyond},
  booktitle = {International Conference on Learning Representations (ICLR)},
  year      = {2020},
  note      = {arXiv:1908.03265},
  url = {https://arxiv.org/abs/1908.03265}
}

@inproceedings{ma2021adequacy,
  author    = {Ma, Jerry and Yarats, Denis},
  title     = {On the Adequacy of Untuned Warmup for Adaptive Optimization},
  booktitle = {AAAI Conference on Artificial Intelligence},
  year      = {2021},
  note      = {arXiv:1910.04209},
  volume = {35},
  pages = {8828--8836},
  doi = {10.1609/aaai.v35i10.17069},
  url = {https://ojs.aaai.org/index.php/AAAI/article/view/17069}
}

@article{liu2025warmupconvergence,
  author  = {Liu, Yuxing and Ge, Yuze and Pan, Rui and An, Kang and Zhang, Tong},
  title   = {Theoretical Analysis on How Learning Rate Warmup Accelerates Convergence},
  journal = {arXiv preprint arXiv:2509.07972},
  year    = {2025},
  url = {https://arxiv.org/abs/2509.07972}
}

@article{riabinin2026warmup,
  author  = {Riabinin, Artem and Veprikov, Andrey and Bolatov, Arman and Tak{\'a}{\v{c}}, Martin and Beznosikov, Aleksandr},
  title   = {Where Does Warm-Up Come From? Adaptive Scheduling for Norm-Constrained Optimizers},
  journal = {arXiv preprint arXiv:2602.05813},
  year    = {2026},
  url = {https://arxiv.org/abs/2602.05813}
}

@inproceedings{bjorck2025horizonlr,
  author    = {Bjorck, Johan and Benhaim, Alon and Chaudhary, Vishrav and Wei, Furu and Song, Xia},
  title     = {Scaling Optimal {LR} Across Token Horizons},
  booktitle = {International Conference on Learning Representations (ICLR)},
  year      = {2025},
  note      = {arXiv:2409.19913},
  url = {https://proceedings.iclr.cc/paper_files/paper/2025/hash/cffa22c56c0df3b3edb1df8a9ad67804-Abstract-Conference.html}
}

@inproceedings{gilmer2022loss,
  author    = {Gilmer, Justin and Ghorbani, Behrooz and Garg, Ankush and Kudugunta, Sneha and Neyshabur, Behnam and Cardoze, David and Dahl, George and Nado, Zachary and Firat, Orhan},
  title     = {A Loss Curvature Perspective on Training Instability in Deep Learning},
  booktitle = {International Conference on Learning Representations (ICLR)},
  year      = {2022},
  note      = {arXiv:2110.04369},
  url = {https://openreview.net/forum?id=OcKMT-36vUs}
}

@inproceedings{kalra2024warmup,
  author    = {Kalra, Dayal Singh and Barkeshli, Maissam},
  title     = {Why Warmup the Learning Rate? {U}nderlying Mechanisms and Improvements},
  booktitle = {Advances in Neural Information Processing Systems (NeurIPS)},
  year      = {2024},
  note      = {arXiv:2406.09405},
  url = {https://arxiv.org/abs/2406.09405}
}

@inproceedings{kosson2024warmup,
  author    = {Kosson, Atli and Messmer, Bettina and Jaggi, Martin},
  title     = {Analyzing \& Reducing the Need for Learning Rate Warmup in {GPT} Training},
  booktitle = {Advances in Neural Information Processing Systems (NeurIPS)},
  year      = {2024},
  note      = {arXiv:2410.23922},
  url = {https://arxiv.org/abs/2410.23922}
}

@inproceedings{alimisis2025warmup,
  author  = {Alimisis, Foivos and Islamov, Rustem and Lucchi, Aurelien},
  title   = {Why Do We Need Warm-up? {A} Theoretical Perspective},
  booktitle = {International Conference on Machine Learning (ICML)},
  year    = {2026},
  note    = {arXiv:2510.03164},
  url = {https://openreview.net/forum?id=a6fo32UnpU}
}

@inproceedings{wen2025river,
  author    = {Wen, Kaiyue and Li, Zhiyuan and Wang, Jason and Hall, David and Liang, Percy and Ma, Tengyu},
  title     = {Understanding Warmup-Stable-Decay Learning Rates: A River Valley Loss Landscape View},
  booktitle = {International Conference on Learning Representations (ICLR)},
  year      = {2025},
  note      = {arXiv:2410.05192},
  url = {https://proceedings.iclr.cc/paper_files/paper/2025/hash/6a1fe80a9e2dcda0b3e5fd0fd87eb097-Abstract-Conference.html}
}

@inproceedings{hagele2024beyond,
  author    = {H{\"a}gele, Alexander and Bakouch, Elie and Kosson, Atli and Ben Allal, Loubna and Von Werra, Leandro and Jaggi, Martin},
  title     = {Scaling Laws and Compute-Optimal Training Beyond Fixed Training Durations},
  booktitle = {Advances in Neural Information Processing Systems (NeurIPS)},
  year      = {2024},
  note      = {arXiv:2405.18392},
  url = {https://arxiv.org/abs/2405.18392}
}

@inproceedings{porian2024discrepancies,
  author    = {Porian, Tomer and Wortsman, Mitchell and Jitsev, Jenia and Schmidt, Ludwig and Carmon, Yair},
  title     = {Resolving Discrepancies in Compute-Optimal Scaling of Language Models},
  booktitle = {Advances in Neural Information Processing Systems (NeurIPS)},
  year      = {2024},
  note      = {arXiv:2406.19146},
  url = {https://arxiv.org/abs/2406.19146}
}

@article{filatov2024time,
  author  = {Filatov, Oleg and Ebert, Jan and Wang, Jiangtao and Kesselheim, Stefan},
  title   = {Time Transfer: On Optimal Learning Rate and Batch Size in the Infinite Data Limit},
  journal = {arXiv preprint arXiv:2410.05838},
  year    = {2024},
  url = {https://arxiv.org/abs/2410.05838}
}

@inproceedings{tissue2024annealing,
  author  = {Tissue, Howe and Wang, Venus and Wang, Lu},
  title   = {Scaling Law with Learning Rate Annealing},
  booktitle = {Advances in Neural Information Processing Systems (NeurIPS)},
  volume  = {38},
  year    = {2025},
  doi = {10.52202/085713-3044},
  url = {https://proceedings.neurips.cc/paper_files/paper/2025/hash/830b1abc6d2da85f23d41169fa44d185-Abstract-Conference.html}
}

@inproceedings{luo2025mpl,
  author    = {Luo, Kairong and Wen, Haodong and Hu, Shengding and Sun, Zhenbo and Liu, Zhiyuan and Sun, Maosong and Lyu, Kaifeng and Chen, Wenguang},
  title     = {A Multi-Power Law for Loss Curve Prediction Across Learning Rate Schedules},
  booktitle = {International Conference on Learning Representations (ICLR)},
  year      = {2025},
  note      = {arXiv:2503.12811},
  url = {https://arxiv.org/abs/2503.12811}
}

@inproceedings{schaipp2025surprising,
  author    = {Schaipp, Fabian and H{\"a}gele, Alexander and Taylor, Adrien and Simsekli, Umut and Bach, Francis},
  title     = {The Surprising Agreement Between Convex Optimization Theory and Learning-Rate Scheduling for Large Model Training},
  booktitle = {International Conference on Machine Learning (ICML)},
  year      = {2025},
  note      = {arXiv:2501.18965},
  series = {Proceedings of Machine Learning Research},
  volume = {267},
  pages = {53267--53294},
  url = {https://proceedings.mlr.press/v267/schaipp25a.html}
}

@article{steplaw2025,
  author  = {Li, Houyi and Zheng, Wenzhen and Wang, Qiufeng and Zhang, Hanshan and Wang, Zili and Xuyang, Shijie and Fan, Yuantao and Ding, Zhenyu and Wang, Haoying and Ding, Ning and Zhou, Shuigeng and Zhang, Xiangyu and Jiang, Daxin},
  title   = {Predictable Scale: {P}art {I}, {S}tep {L}aw -- Optimal Hyperparameter Scaling Law in Large Language Model Pretraining},
  journal = {arXiv preprint arXiv:2503.04715},
  year    = {2025},
  url = {https://arxiv.org/abs/2503.04715}
}

@inproceedings{yang2022mup,
  author  = {Yang, Greg and Hu, Edward J. and Babuschkin, Igor and Sidor, Szymon and Liu, Xiaodong and Farhi, David and Ryder, Nick and Pachocki, Jakub and Chen, Weizhu and Gao, Jianfeng},
  title   = {Tensor Programs {V}: Tuning Large Neural Networks via Zero-Shot Hyperparameter Transfer},
  year    = {2021},
  booktitle = {Advances in Neural Information Processing Systems (NeurIPS)},
  url = {https://proceedings.neurips.cc/paper/2021/hash/8df7c2e3c3c3be098ef7b382bd2c37ba-Abstract.html}
}

@article{kaplan2020scaling,
  author  = {Kaplan, Jared and McCandlish, Sam and Henighan, Tom and Brown, Tom B. and Chess, Benjamin and Child, Rewon and Gray, Scott and Radford, Alec and Wu, Jeffrey and Amodei, Dario},
  title   = {Scaling Laws for Neural Language Models},
  journal = {arXiv preprint arXiv:2001.08361},
  year    = {2020},
  url = {https://arxiv.org/abs/2001.08361}
}

@inproceedings{hoffmann2022chinchilla,
  author  = {Hoffmann, Jordan and Borgeaud, Sebastian and Mensch, Arthur and Buchatskaya, Elena and Cai, Trevor and Rutherford, Eliza and de Las Casas, Diego and Hendricks, Lisa Anne and Welbl, Johannes and Clark, Aidan and Hennigan, Thomas and Noland, Eric and Millican, Katherine and van den Driessche, George and Damoc, Bogdan and Guy, Aurelia and Osindero, Simon and Simonyan, Kar{\'e}n and Elsen, Erich and Vinyals, Oriol and Rae, Jack and Sifre, Laurent},
  title   = {An Empirical Analysis of Compute-Optimal Large Language Model Training},
  year    = {2022},
  booktitle = {Advances in Neural Information Processing Systems (NeurIPS)},
  volume = {35},
  url = {https://proceedings.neurips.cc/paper_files/paper/2022/hash/c1e2faff6f588870935f114ebe04a3e5-Abstract.html}
}

@article{caponnetto2007optimal,
  author  = {Caponnetto, Andrea and De Vito, Ernesto},
  title   = {Optimal Rates for the Regularized Least-Squares Algorithm},
  journal = {Foundations of Computational Mathematics},
  volume  = {7},
  number  = {3},
  pages   = {331--368},
  year    = {2007}
}

@inproceedings{bordelon2020spectrum,
  author    = {Bordelon, Blake and Canatar, Abdulkadir and Pehlevan, Cengiz},
  title     = {Spectrum Dependent Learning Curves in Kernel Regression and Wide Neural Networks},
  booktitle = {International Conference on Machine Learning (ICML)},
  year      = {2020},
  note      = {arXiv:2002.02561},
  series = {Proceedings of Machine Learning Research},
  volume = {119},
  pages = {1024--1034},
  url = {https://proceedings.mlr.press/v119/bordelon20a.html}
}

@article{canatar2021spectral,
  author  = {Canatar, Abdulkadir and Bordelon, Blake and Pehlevan, Cengiz},
  title   = {Spectral Bias and Task-Model Alignment Explain Generalization in Kernel Regression and Infinitely Wide Neural Networks},
  journal = {Nature Communications},
  volume  = {12},
  pages   = {2914},
  year    = {2021},
  doi = {10.1038/s41467-021-23103-1},
  url = {https://www.nature.com/articles/s41467-021-23103-1}
}

@article{bahri2021explaining,
  author  = {Bahri, Yasaman and Dyer, Ethan and Kaplan, Jared and Lee, Jaehoon and Sharma, Utkarsh},
  title   = {Explaining Neural Scaling Laws},
  journal = {Proceedings of the National Academy of Sciences},
  volume  = {121},
  number  = {27},
  pages   = {e2311878121},
  year    = {2024},
  doi     = {10.1073/pnas.2311878121},
  url     = {https://doi.org/10.1073/pnas.2311878121}
}

@inproceedings{bordelon2024dynamical,
  author    = {Bordelon, Blake and Atanasov, Alexander and Pehlevan, Cengiz},
  title     = {A Dynamical Model of Neural Scaling Laws},
  booktitle = {International Conference on Machine Learning (ICML)},
  year      = {2024},
  note      = {arXiv:2402.01092},
  series = {Proceedings of Machine Learning Research},
  volume = {235},
  pages = {4345--4382},
  url = {https://proceedings.mlr.press/v235/bordelon24a.html}
}

@article{velikanov2024tight,
  author  = {Velikanov, Maksim and Yarotsky, Dmitry},
  title   = {Tight Convergence Rate Bounds for Optimization Under Power Law Spectral Conditions},
  journal = {Journal of Machine Learning Research},
  volume  = {25},
  number  = {81},
  pages   = {1--78},
  year    = {2024},
  url = {https://jmlr.org/papers/v25/23-0698.html}
}

@inproceedings{zhang2019nqm,
  author    = {Zhang, Guodong and Li, Lala and Nado, Zachary and Martens, James and Sachdeva, Sushant and Dahl, George and Shallue, Christopher and Grosse, Roger B.},
  title     = {Which Algorithmic Choices Matter at Which Batch Sizes? {I}nsights From a Noisy Quadratic Model},
  booktitle = {Advances in Neural Information Processing Systems (NeurIPS)},
  year      = {2019},
  note      = {arXiv:1907.04164},
  url = {https://arxiv.org/abs/1907.04164}
}

@article{jain2018parallel,
  author  = {Jain, Prateek and Kakade, Sham M. and Kidambi, Rahul and Netrapalli, Praneeth and Sidford, Aaron},
  title   = {Parallelizing Stochastic Gradient Descent for Least Squares Regression: Mini-batching, Averaging, and Model Misspecification},
  journal = {Journal of Machine Learning Research},
  volume  = {18},
  number  = {223},
  pages   = {1--42},
  year    = {2018},
  url = {https://jmlr.org/papers/v18/16-595.html}
}

@inproceedings{jain2018accelerating,
  author    = {Jain, Prateek and Kakade, Sham M. and Kidambi, Rahul and Netrapalli, Praneeth and Sidford, Aaron},
  title     = {Accelerating Stochastic Gradient Descent for Least Squares Regression},
  booktitle = {Conference on Learning Theory (COLT)},
  series    = {Proceedings of Machine Learning Research},
  volume    = {75},
  pages     = {545--604},
  year      = {2018}
}

@article{zou2023benign,
  author  = {Zou, Difan and Wu, Jingfeng and Braverman, Vladimir and Gu, Quanquan and Kakade, Sham M.},
  title   = {Benign Overfitting of Constant-Stepsize {SGD} for Linear Regression},
  journal = {Journal of Machine Learning Research},
  volume  = {24},
  number  = {326},
  pages   = {1--58},
  year    = {2023},
  url = {https://jmlr.org/papers/v24/21-1297.html}
}

@inproceedings{cohen2021eos,
  author    = {Cohen, Jeremy M. and Kaur, Simran and Li, Yuanzhi and Kolter, J. Zico and Talwalkar, Ameet},
  title     = {Gradient Descent on Neural Networks Typically Occurs at the Edge of Stability},
  booktitle = {International Conference on Learning Representations (ICLR)},
  year      = {2021},
  note      = {arXiv:2103.00065},
  url       = {https://arxiv.org/abs/2103.00065}
}

@inproceedings{damian2023self,
  author    = {Damian, Alex and Nichani, Eshaan and Lee, Jason D.},
  title     = {Self-Stabilization: The Implicit Bias of Gradient Descent at the Edge of Stability},
  booktitle = {International Conference on Learning Representations (ICLR)},
  year      = {2023},
  note      = {arXiv:2209.15594},
  url       = {https://arxiv.org/abs/2209.15594}
}

@inproceedings{sagun2017empirical,
  author  = {Sagun, Levent and Evci, Utku and G{\"u}ney, V. Ugur and Dauphin, Yann and Bottou, L{\'e}on},
  title   = {Empirical Analysis of the {H}essian of Over-Parametrized Neural Networks},
  booktitle = {International Conference on Learning Representations (ICLR), Workshop Track},
  year    = {2018},
  note    = {arXiv:1706.04454},
  url     = {https://iclr.cc/virtual/2018/workshop/563}
}

@inproceedings{ghorbani2019investigation,
  author    = {Ghorbani, Behrooz and Krishnan, Shankar and Xiao, Ying},
  title     = {An Investigation into Neural Net Optimization via {H}essian Eigenvalue Density},
  booktitle = {Proceedings of the 36th International Conference on Machine Learning},
  series    = {Proceedings of Machine Learning Research},
  volume    = {97},
  pages     = {2232--2241},
  year      = {2019},
  url       = {https://proceedings.mlr.press/v97/ghorbani19b.html}
}

@inproceedings{vaswani2017attention,
  author = {Vaswani, Ashish and Shazeer, Noam and Parmar, Niki and Uszkoreit, Jakob and Jones, Llion and Gomez, Aidan N. and Kaiser, {\L}ukasz and Polosukhin, Illia},
  title = {Attention Is All You Need},
  booktitle = {Advances in Neural Information Processing Systems (NeurIPS)},
  year = {2017},
  url = {https://arxiv.org/abs/1706.03762}
}

@inproceedings{gotmare2019closer,
  author = {Gotmare, Akhilesh and Keskar, Nitish Shirish and Xiong, Caiming and Socher, Richard},
  title = {A Closer Look at Deep Learning Heuristics: Learning Rate Restarts, Warmup and Distillation},
  booktitle = {International Conference on Learning Representations (ICLR)},
  year = {2019},
  url = {https://openreview.net/forum?id=r14EOsCqKX}
}

@inproceedings{bordelon2022sgd,
  author = {Bordelon, Blake and Pehlevan, Cengiz},
  title = {Learning Curves for Stochastic Gradient Descent on Structured Features},
  booktitle = {International Conference on Learning Representations (ICLR)},
  year = {2022},
  url = {https://arxiv.org/abs/2106.02713}
}

@article{raffel2020t5,
  author = {Colin Raffel and Noam Shazeer and Adam Roberts and Katherine Lee and Sharan Narang and Michael Matena and Yanqi Zhou and Wei Li and Peter J. Liu},
  title = {Exploring the Limits of Transfer Learning with a Unified Text-to-Text Transformer},
  journal = {Journal of Machine Learning Research},
  year = {2020},
  volume = {21},
  number = {140},
  pages = {1--67},
  url = {https://www.jmlr.org/papers/v21/20-074.html}
}

@misc{jordan2024muon,
  author = {Keller Jordan},
  title = {{Muon}: An Optimizer for Hidden Layers in Neural Networks},
  year = {2024},
  howpublished = {\url{https://kellerjordan.github.io/posts/muon/}}
}

@article{liu2025muon,
  author = {Jingyuan Liu and Jianlin Su and Xingcheng Yao and Zhejun Jiang and Guokun Lai and Yulun Du and Yidao Qin and Weixin Xu and Enzhe Lu and Junjie Yan and Yanru Chen and Huabin Zheng and Yibo Liu and Shaowei Liu and Bohong Yin and Weiran He and Han Zhu and Yuzhi Wang and Jianzhou Wang and Mengnan Dong and Zheng Zhang and Yongsheng Kang and Hao Zhang and Xinran Xu and Yutao Zhang and Yuxin Wu and Xinyu Zhou and Zhilin Yang},
  title = {{Muon} is Scalable for {LLM} Training},
  journal = {arXiv preprint arXiv:2502.16982},
  year = {2025},
  url = {https://arxiv.org/abs/2502.16982}
}

@inproceedings{zhao2024galore,
  author = {Jiawei Zhao and Zhenyu Zhang and Beidi Chen and Zhangyang Wang and Anima Anandkumar and Yuandong Tian},
  title = {{GaLore}: Memory-Efficient {LLM} Training by Gradient Low-Rank Projection},
  booktitle = {Proceedings of the 41st International Conference on Machine Learning},
  series = {Proceedings of Machine Learning Research},
  volume = {235},
  pages = {61121--61143},
  year = {2024},
  publisher = {PMLR},
  url = {https://proceedings.mlr.press/v235/zhao24s.html}
}

@book{olver1974asymptotics,
  author    = {Olver, Frank W. J.},
  title     = {Asymptotics and Special Functions},
  publisher = {Academic Press},
  address   = {New York},
  year      = {1974}
}
